\colorlet{bgcolor}{white!95!black}
\colorlet{bordercolor}{white!90!black}
\colorlet{darkbordercolor}{white!72!black}
\colorlet{darkerbordercolor}{white!36!black}
\newcolumntype{C}[1]{>{\centering\arraybackslash}p{#1}}
\newcommand{\inliststyle}[1]{\textbf{\smaller#1}}
\newlist{inlist}{enumerate*}{1}
\setlist[inlist]{label={\inliststyle{(\arabic*)}}}
\setlist{topsep=0.5pt,partopsep=0pt,parsep=0pt,itemsep=0pt}
\newcounter{algorithmicH}
\let\oldalgorithmic\algorithmic
\renewcommand{\algorithmic}{%
  \stepcounter{algorithmicH}
  \oldalgorithmic}
\renewcommand{\theHALG@line}{ALG@line.\thealgorithmicH.\arabic{ALG@line}}
\algnewcommand{\LeftComment}[1]{\Statex \vspace*{1pt}\hspace*{-12pt}{\color{white!32!black} $\blacktriangleright$ \:\textsf{#1}}}
\algnewcommand{\IfThenElse}[3]{
  \State \algorithmicif\ #1\ \algorithmicthen\ #2\ \algorithmicelse\ #3}
\algrenewcommand\alglinenumber[1]{\color{gray} \small $#1\,\cdot$}
\newenvironment{algfunction}[3]
    {\begingroup\textbf{func} \textsc{#1}($#2$)
     \par\leavevmode\hangindent=3.5em\hangafter=1\indent\leavevmode\hangindent=3.5em\hangafter=1\indent
     {\color{white!32!black} \textbf{output:} #3}
     \vspace*{0.5pt}
     \begin{algorithmic}[1]
     \algrenewcommand\algorithmicindent{1.25em}}
    {\end{algorithmic}\endgroup}
\newcommand{\resized}[3]{
  \resizebox{#1\linewidth}{!}{
    \begin{subfigure}[t]{#2\linewidth}
      #3
    \end{subfigure}
  }}
\newcommand{\algobox}[2]{
  \vspace*{-2.5pt}
  \begin{tcolorbox}[boxrule=0.5pt,arc=2pt,
                    left=0.75pt,right=-10pt,top=3pt,bottom=3pt,boxsep=0pt,
                    colback=white,colframe=darkbordercolor]
  \resized{0.965}{#1}{#2}\end{tcolorbox}
  \vspace*{-3pt}}
\let\oldReturn\Return
\renewcommand{\Return}{\State\oldReturn}
\newcommand{\redact}[1]{#1}
\crefname{section}{\S\!}{\S\S\!}
\Crefname{section}{\S\!}{\S\S\!}
\lstdefinelanguage{ProseDSL}{
  texcl=true, 
  morecomment=[l]{//},
  morestring=[b]",
  keywords=[1]{bool},
  keywords=[2]{let,in},
  keywords=[3]{IsNull,EndOf,SuffixAfter,Fail,Disjunction},
  keywords=[4]{IToken,SuffixRegion}
}
\newcommand{\rulesep}{\unskip\ \vrule\ }
\newcommand{\bordersep}{\colorlet{saved}{.}\color{darkbordercolor}\hspace*{1pt}\rulesep\hspace*{1pt}\color{saved}}
\newcommand{\visiblespace}{{\color{darkerbordercolor}\,\textvisiblespace\,}}
\newcommand{\nsp}{\kern-0.1em}
\newcommand{\xcirc}[1]{\vcenter{\hbox{$#1\diamond$}}}
\newcommand{\concat}{\:\,{\color{darkerbordercolor}%
    \mathbin{\mathchoice
            {\xcirc\scriptstyle}
            {\xcirc\scriptstyle}
            {\xcirc\scriptscriptstyle}
            {\xcirc\scriptscriptstyle}}}\,\:}
\newcommand{\patternmode}[2][\small]{{#1\ensuremath{#2}}}
\newcommand{\circledchar}[2][]{%
  \tikz[baseline=(char.base)]{%
    \node[shape = circle, draw, inner sep = 1pt]
    (char) {\phantom{\ifblank{#1}{#2}{#1}}};%
    \node at (char.center) {\makebox[0pt][c]{#2}};}}
\newcommand{\numFFRecallCond}{131\xspace}
\newcommand{\numFFPrecisionCond}{140\xspace}
\newcommand{\numFFTotalCond}{163\xspace}
\newcommand{\numFFRecallCondPercent}{80\%\xspace}
\newcommand{\numFFPrecisionCondPercent}{86\%\xspace}
\newcommand{\numDatasets}{75\xspace}
\newcommand{\numTasks}{153\xspace}
\newcommand{\github}[1]{\url{https://github.com/SaswatPadhi/FlashProfileDemo/tree/master/#1}\xspace}
\newcommand{\dsetgroup}[1]{\textsc{\small#1}}
\newcommand{\tool}[1]{\textsf{#1}}
\newcommand{\CSharp}{\tool{C\#}\xspace}
\newcommand{\LStar}{\tool{L-Star}\xspace}
\newcommand{\RPNI}{\tool{RPNI}\xspace}
\newcommand{\Excel}{\textsf{Microsoft Excel}\xspace}
\newcommand{\AML}{\textsf{Azure ML Workbench}\xspace}
\newcommand{\FlashFill}{\tool{Flash Fill}\xspace}
\newcommand{\FlashMeta}{\tool{FlashMeta}\xspace}
\newcommand{\FlashProfile}{\tool{FlashProfile}\xspace}
\newcommand{\PottersWheel}{\tool{Potter's Wheel}\xspace}
\newcommand{\PROSE}{\tool{PROSE}\xspace}
\newcommand{\SSDT}{\tool{SSDT}\xspace}
\newcommand{\Ataccama}{\tool{Ataccama One}\xspace}
\newcommand{\One}{\tool{A1}\xspace}
\newcommand{\OpenRefine}{\tool{OpenRefine}\xspace}
\newcommand{\kmeans}{\tool{k-means}\xspace}
\newcommand{\FP}{\tool{FP}}
\newcommand{\FPk}[1]{\tool{FP}\ensuremath{_{#1}}}
\newcommand{\SampleDissimilarities}{SampleDissimilarities}
\newcommand{\ApproxDissimilarityMatrix}{ApproxDMatrix}
\newcommand{\BestPattern}{LearnBestPattern}
\newcommand{\GetPrefixTokens}{GetMaxCompatibleAtoms}
\newcommand{\Clustering}{Partition}
\newcommand{\AHC}{AHC}
\newcommand{\Profiling}{Profile}
\newcommand{\BuildHierarchy}{BuildHierarchy}
\newcommand{\OrderPartitions}{OrderPartitions}
\newcommand{\MergeMostSimilarPatterns}{CompressProfile}
\newcommand{\LargeProfile}{BigProfile}
\newcommand{\algo}[1]{\textsc{#1}}
\newcommand{\SampleDissimilaritiesAlgo}{\algo{\SampleDissimilarities}\xspace}
\newcommand{\BestPatternAlgo}{\algo{\BestPattern}\xspace}
\newcommand{\GetPrefixTokensAlgo}{\algo{\GetPrefixTokens}\xspace}
\newcommand{\AHCAlgo}{\algo{\AHC}\xspace}
\newcommand{\ClusteringAlgo}{\algo{\Clustering}\xspace}
\newcommand{\ProfilingAlgo}{\algo{\Profiling}\xspace}
\newcommand{\BuildHierarchyAlgo}{\algo{\BuildHierarchy}\xspace}
\newcommand{\OrderPartitionsAlgo}{\algo{\OrderPartitions}\xspace}
\newcommand{\ApproxDissimilarityMatrixAlgo}{\algo{\ApproxDissimilarityMatrix}\xspace}
\newcommand{\MergeMostSimilarPatternsAlgo}{\algo{\MergeMostSimilarPatterns}\xspace}
\newcommand{\LargeProfileAlgo}{\algo{\LargeProfile}\xspace}
\newcommand{\WithLCParam}[1]{#1$_\mathsmaller{\langle\learner,\cost\rangle}$}
\newcommand{\polar}{\ensuremath{\rho}\xspace}
\DeclareMathOperator*{\argmin}{\arg\min}
\DeclareMathOperator*{\argmax}{\arg\max}
\DeclareMathSymbol{\mlq}{\mathord}{operators}{``}
\DeclareMathSymbol{\mrq}{\mathord}{operators}{`'}
\DeclarePairedDelimiter\ceil{\lceil}{\rceil}
\newcommand{\ScaledEqn}[2]{\begin{equation}
  \resizebox{!}{#1\baselineskip}{\ensuremath{#2}\smallbreak}
\end{equation}}
\newcommand\ScaledEqn*[2]{\begin{equation*}
  \resizebox{!}{#1\baselineskip}{\ensuremath{#2}\smallbreak}
\end{equation*}}
\newcommand{\Tint}{\ensuremath{\textsf{Int}}\xspace}
\newcommand{\Tatom}{\ensuremath{\textsf{Atom}}\xspace}
\newcommand{\Tdouble}{\ensuremath{\textsf{Real}}\xspace}
\newcommand{\Tbool}{\ensuremath{\textsf{Bool}}\xspace}
\newcommand{\Tstring}{\ensuremath{\textsf{String}}\xspace}
\newcommand{\Tpattern}{\ensuremath{\textsf{Pattern}}\xspace}
\newcommand{\Tprofile}{\ensuremath{\textsf{Profile}}\xspace}
\newcommand{\dsl}{\ensuremath{\mathscr{L}}\xspace}
\newcommand{\spec}{\ensuremath{\varphi}}
\newcommand{\vsa}[1][N]{\widetilde{#1}}
\newcommand{\stringliteral}[1]{\ensuremath{\textnormal{\textcolor{darkerbordercolor}{``\texttt{#1}''}}}}
\newcommand{\assuming}{\;\ifnum\currentgrouptype=16 \middle\fi\vert\;}
\newcommand{\True}{\ensuremath{\texttt{True}}\xspace}
\newcommand{\tospec}{\rightsquigarrow}
\newcommand{\bydef}{\mathrel{\overset{\mathsf{\scriptscriptstyle def}}{=}}}
\newcommand{\dataset}{\ensuremath{\mathcal{S}}\xspace}
\newcommand{\NonNegReals}{\ensuremath{\mathbb{R}_{\resizebox{!}{0.32\baselineskip}{$\,\nsp\geq\!0$}}}}
\newcommand{\Strings}{\ensuremath{\mathbb{S}}}
\newcommand{\Naturals}{\ensuremath{\mathbb{N}}}
\newcommand{\Integers}{\ensuremath{\mathbb{Z}}}
\newcommand{\cost}{\ensuremath{\mathcal{C}}\xspace}
\newcommand{\staticCost}{\ensuremath{Q}\xspace}
\newcommand{\costP}{\ensuremath{\cost_\mathsmaller{\textsf{FP}}}\xspace}
\newcommand{\costmax}{\ensuremath{\infty}\xspace}
\newcommand{\learner}{\ensuremath{\mathcal{L}}\xspace}
\newcommand{\learnerP}{\ensuremath{\learner_\mathsmaller{\textsf{FP}}}\xspace}
\newcommand{\objective}{\ensuremath{\mathcal{O}}\xspace}
\newcommandtwoopt{\clustering}[2][\vsa][\state]{#1 /_{#2}}
\newcommand{\LP}{\ensuremath{\dsl_\mathsmaller{\textsf{FP}}}\xspace}
\newcommand{\bookmarkLP}{\texorpdfstring{{\large \LP}\:}{L\textpinferior}}
\newcommand{\patfail}{\ensuremath{\pmb{\bot}}\xspace}
\newcommand{\token}{\ensuremath{\alpha}\xspace}
\newcommand{\Tokens}{\ensuremath{\mho}\xspace}
\newcommand{\CCTokens}{\ensuremath{C}\xspace}
\newcommand{\enrichedTokens}{\ensuremath{\widehat{\Tokens}}\xspace}
\newcommand{\pattern}{\ensuremath{P}\xspace}
\newcommand{\patexpr}[1]{\ensuremath{\pattern\,[#1]}\xspace}
\newcommand{\profile}{\ensuremath{\widetilde{\pattern}}\xspace}
\newcommand{\datascore}{\ensuremath{W}\xspace}
\newcommand{\patternapply}[1]{\ensuremath{P(#1)}\xspace}
\newcommand{\substr}[3]{\ensuremath{#1[#2\,\pmb{\colon}#3]}}
\newcommand{\ApproxDissMatrix}{\ensuremath{A}\xspace}
\newcommand{\tokStyle}[1]{\ensuremath{\textsf{#1}}}
\newcommand{\tokClass}{\tokStyle{Class}}
\newcommand{\tokFunct}{\tokStyle{Funct}}
\newcommand{\tokRegEx}{\tokStyle{RegEx}}
\newcommand{\tokConst}{\tokStyle{Const}}
\newcommand{\rep}[2]{\ensuremath{{#1}^{\times#2}}}
\newcommand{\CClass}[1]{\ensuremath{\textsf{#1}}\xspace}
\newcommand{\Regex}[1]{\ensuremath{\langle\textsf{#1}\rangle}\xspace}
\newcommand{\Any}{\CClass{Any}}
\newcommand{\Alpha}{\CClass{Alpha}}
\newcommand{\Lower}{\CClass{Lower}}
\newcommand{\Upper}{\CClass{Upper}}
\newcommand{\Digit}{\CClass{Digit}}
\newcommand{\Symbol}{\CClass{Symb}}
\newcommand{\DotDash}{\CClass{DotDash}}
\newcommand{\Space}{\CClass{\!\visiblespace\!}}
\newcommand{\BinDigit}{\CClass{Bin}}
\newcommand{\HexDigit}{\CClass{Hex}}
\newcommand{\BaseSixtyFour}{\CClass{Base64}}
\newcommand{\AlphaDash}{\CClass{AlphaDash}}
\newcommand{\AlphaSpace}{\CClass{AlphaSpace}}
\newcommand{\AlphaDigitSpace}{\CClass{AlphaDigitSpace}}
\newcommand{\Punct}{\CClass{Punct}}
\newcommand{\AlphaDigit}{\CClass{AlphaDigit}}
\newcommand{\DOI}{\Regex{DOI}}
\newcommand{\ISBN}{\Regex{ISBN10}}
\newcommand{\TitleCaseWord}{\Regex{TitleCaseWord}}
\newcommand{\sCClass}[1]{\texttt{#1}}
\newcommand{\sDigit}{\sCClass{D}}
\newcommand{\sUpper}{\sCClass{U}}
\newcommand{\sAlphaSpace}{\ensuremath{\Pi}}
\newcommand{\sAlphaDigitSpace}{\ensuremath{\Sigma}}
\newcommand{\sSpace}{\,\Space\,}
\newcommand{\sconcat}{\;}
\newcommand{\sstringliteral}[1]{\stringliteral{#1}}
\newcommand{\srep}[2]{\ensuremath{#1^{\,#2}}}
\newcommand{\matchsymb}{\ensuremath{\triangleright}}
\newcommand{\matches}[2]{\ensuremath{#1\;\matchsymb\;#2}\xspace}
\newcommand{\compatsymb}{\ensuremath{\propto}}
\newcommand{\compat}[2]{\ensuremath{#1\compatsymb#2}\xspace}
\newcommand{\compatmax}[2]{\ensuremath{\max^{#1}_\compatsymb[\scalebox{0.9}{#2}]}\xspace}
\newcommand{\EmptySymb}{\ensuremath{\texttt{Empty}}}
\newcommand{\SuffixAfterSymb}{\ensuremath{\texttt{SuffixAfter}}}
\newcommand{\DSLOp}[2]{\ensuremath{\texttt{#1(}#2\texttt{)}}\xspace}
\newcommand{\Empty}[1]{\DSLOp{\EmptySymb}{#1}}
\newcommand{\SuffixAfter}[2]{\DSLOp{\SuffixAfterSymb}{#1,#2}}
\newcommand{\SyDissSymb}{\ensuremath{\eta}}
\newcommand{\SyDiss}[2]{\ensuremath{\SyDissSymb\,(#1, #2)}\xspace}
\newcommand{\cluSyDissSymb}{\ensuremath{\widehat{\eta}}}
\newcommand{\cluSyDiss}[3]{\ensuremath{\cluSyDissSymb\,(#1, #2 \mid #3)}\xspace}
\newcommand{\pointSampleFactor}{\ensuremath{\mu}\xspace}
\newcommand{\edgeSampleFactor}{\ensuremath{\theta}\xspace}
\newcommand{\maxClusters}{\ensuremath{M}\xspace}
\newcommand{\maxClustersT}{\ensuremath{\widehat{M}}\xspace}
\newcommand{\minClusters}{\ensuremath{m}\xspace}
\newcommand{\hierarchy}{\ensuremath{H}\xspace}
\newcommand{\stringPair}[2]{{\small \{\,\stringliteral{#1}, \stringliteral{#2}\,\}\xspace}}
\newcommand{\strlen}{\texttt{len}\xspace}
\newcommand{\strcount}{\texttt{cnt}\xspace}
\newcommand{\strbegin}{\texttt{begin}\xspace}
\newcommand{\SimBaseline}{\ensuremath{RF}\xspace}
\newcommand{\Strue}{\ensuremath{\texttt{true}}\xspace}
\newcommand{\vars}{\texttt}
\newcommand{\prop}{\textsf}
\newcommand{\BigStep}[2]{#1\ \ \Downarrow\ \ #2}
\newcommand{\bs}{\textbackslash}
\newenvironment{defn}[1]
  {\begin{definition}\textsf{#1}\ ---\ }
  {\end{definition}}
\newenvironment{exmp}[1][]
  {\begin{example}}
  {\end{example}}
\begin{document}

\setlength{\abovedisplayshortskip}{0.5pt}
\setlength{\belowdisplayshortskip}{0.5pt}
\setlength{\abovedisplayskip}{1.5pt}
\setlength{\belowdisplayskip}{1.5pt}
\setlength{\intextsep}{2pt}

\Crefname{thrm}{Theorem}{Theorems}
\Crefname{exmp}{Example}{Examples}
\Crefname{definition}{Definition}{Definitions}
\Crefname{figure}{Fig.}{Figures}
\Crefname{table}{Table}{Tables}
\Crefname{equation}{Equation}{Equations}

\crefname{thrm}{Theorem}{Theorems}
\crefname{exmp}{Example}{Examples}
\crefname{definition}{Definition}{Definitions}
\crefname{figure}{Fig.}{Figures}
\crefname{table}{Table}{Tables}
\crefname{equation}{Equation}{Equations}

\title{\FlashProfile: A Framework for Synthesizing Data Profiles}

\author{Saswat Padhi}
\authornote{Work done during an internship with \PROSE team at Microsoft.}
\orcid{0000-0001-6865-4359}
\affiliation{
  \department{Dept. of Computer Science}
  \institution{University of California -- Los Angeles}
  \state{CA}
  \postcode{90095}
  \country{USA}
}
\email{padhi@cs.ucla.edu}

\author{Prateek Jain}
\affiliation{
  \institution{Microsoft Research}
  \city{Bangalore}
  \country{India}
}
\email{prajain@microsoft.com}

\author{Daniel Perelman}
\affiliation{
  \institution{Microsoft Corporation}
  \city{Redmond}
  \state{WA}
  \postcode{98052}
  \country{USA}
}
\email{danpere@microsoft.com}

\author{Oleksandr Polozov}
\orcid{0000-0003-3669-4262}
\affiliation{
  \institution{Microsoft Research}
  \city{Redmond}
  \state{WA}
  \postcode{98052}
  \country{USA}
}
\email{polozov@microsoft.com}

\author{Sumit Gulwani}
\affiliation{
  \institution{Microsoft Corporation}
  \city{Redmond}
  \state{WA}
  \postcode{98052}
  \country{USA}
}
\email{sumitg@microsoft.com}

\author{Todd Millstein}
\affiliation{
  \department{Dept. of Computer Science}
  \institution{University of California -- Los Angeles}
  \state{CA}
  \postcode{90095}
  \country{USA}
}
\email{todd@cs.ucla.edu}

\renewcommand{\shortauthors}{S. Padhi, P. Jain, D. Perelman, O. Polozov, S. Gulwani, and T. Millstein}

\begin{abstract}
We address the problem of learning a syntactic \emph{profile} for a collection of strings,
i.e. a set of regex-like patterns that succinctly describe the syntactic variations in the strings.
Real-world datasets, typically curated from multiple sources, often contain data in various syntactic formats.
Thus, any data processing task is preceded by the critical step of data format identification.
However, manual inspection of data to identify the different formats is infeasible in standard big-data scenarios.

Prior techniques are restricted to a small set of pre-defined patterns
(e.g. digits, letters, words, etc.),
and provide no control over granularity of profiles.
We define syntactic profiling as a problem of clustering strings based on \emph{syntactic similarity},
followed by identifying patterns that succinctly describe each cluster.
We present a technique for synthesizing such profiles over a given language of patterns,
that also allows for interactive refinement by requesting a desired number of clusters.

Using a state-of-the-art inductive synthesis framework, \PROSE, we have implemented our technique as \FlashProfile.
Across $153$ tasks over $75$ large real datasets, we observe a median profiling time of only $\sim$0.7\,s.
Furthermore, we show that access to syntactic profiles may allow for more accurate synthesis of programs,
i.e. using fewer examples, in programming-by-example (PBE) workflows such as \FlashFill.

\end{abstract}

\begin{CCSXML}
  <ccs2012>
    <concept>
      <concept_id>10002951.10003317.10003347.10003356</concept_id>
      <concept_desc>Information systems~Clustering and classification</concept_desc>
      <concept_significance>500</concept_significance>
    </concept>
    <concept>
      <concept_id>10002951.10003317.10003347.10003357</concept_id>
      <concept_desc>Information systems~Summarization</concept_desc>
      <concept_significance>300</concept_significance>
    </concept>
    <concept>
      <concept_id>10011007.10011006.10011050.10011056</concept_id>
      <concept_desc>Software and its engineering~Programming by example</concept_desc>
      <concept_significance>500</concept_significance>
    </concept>
    <concept>
      <concept_id>10011007.10011006.10011050.10011017</concept_id>
      <concept_desc>Software and its engineering~Domain specific languages</concept_desc>
      <concept_significance>300</concept_significance>
    </concept>
    <concept>
      <concept_id>10010147.10010257.10010258.10010260.10010229</concept_id>
      <concept_desc>Computing methodologies~Anomaly detection</concept_desc>
      <concept_significance>100</concept_significance>
    </concept>
  </ccs2012>
\end{CCSXML}

\ccsdesc[500]{Information systems~Clustering and classification}
\ccsdesc[300]{Information systems~Summarization}
\ccsdesc[500]{Software and its engineering~Programming by example}
\ccsdesc[300]{Software and its engineering~Domain specific languages}
\ccsdesc[100]{Computing methodologies~Anomaly detection}

\keywords{data profiling, pattern profiles, outlier detection, hierarchical clustering, pattern learning, program synthesis}

\maketitle

\section{Introduction} \label{sec:intro}

\noindent
In modern data science, most real-life datasets lack high-quality metadata ---
they are often incomplete, erroneous, and unstructured~\citep{dong2013big}.
This severely impedes data analysis, even for domain experts.
For instance, a merely preliminary task of \emph{data wrangling} (importing, cleaning, and reshaping data) consumes
50\,--\,80\% of the total analysis time~\citep{lohr2014big}.
Prior studies show that high-quality metadata not only help users clean, understand, transform, and reason over data, but
also enable advanced applications, such as compression, indexing, query optimization,
and schema matching~\citep{abedjan2015profiling}.
Traditionally, data scientists engage in \emph{data gazing}~\citep{maydanchik2007data} ---
they manually inspect small samples of data, or experiment with aggregation queries to get a bird's-eye view of the data.
Naturally, this approach does not scale to modern large-scale datasets~\citep{abedjan2015profiling}.

\emph{Data profiling} is the process of generating small but useful metadata
(typically as a succinct summary) for the data~\citep{abedjan2015profiling}.
In this work, we focus on syntactic profiling, i.e. learning structural patterns that summarize the data.
A syntactic profile is a disjunction of regex-like patterns that describe all of the syntactic variations in the data.
Each pattern succinctly describes a specific variation,
and is defined by a sequence of \emph{atomic patterns} or \emph{atoms}, such as digits or letters.

While existing tools, such as Microsoft SQL Server Data Tools (\SSDT)~\citep{web.ssdt},
and \Ataccama~\citep{web.ataccama} allow pattern-based profiling,
they generate a single profile that cannot be customized.
In particular,
\begin{inlist}
    \item they use a small predetermined set of atoms,
          and do not allow users to supply custom atoms specific to their domains, and
    \item they provide little support for controlling granularity, i.e. the number of patterns in the profile.
\end{inlist}

We present a novel application of program synthesis techniques to addresses these two key issues.
Our implementation, \FlashProfile, supports custom user-defined atoms
that may encapsulate arbitrary pattern-matching logic,
and also allows users to interactively control the granularity of generated profiles,
by providing desired bounds on the number of patterns.

\paragraph{A Motivating Example}
\Cref{tab:intro-data} shows a fragment of a dataset containing a set of references in various formats,
and its profiles generated by \Ataccama (in \cref{fig:intro-ataccama-profile}),
\tool{Microsoft} \SSDT (in \cref{fig:intro-ssdt-profile}), and our tool \FlashProfile (in \cref{fig:intro-base-token-profile}).
Syntactic profiles expose rare variations that are hard to notice by manual inspection of the data,
or from simple statistical properties such as distribution of string lengths.
For example, \Ataccama reveals a suspicious pattern \stringliteral{W\_W}, which matches less than 0.5\% of the dataset.
\SSDT, however, groups this together with other less frequent patterns into a \stringliteral{.*} pattern.
Since \SSDT does not provide a way of controlling the granularity of the profile,
a user would be unable to further refine the \stringliteral{.*} pattern.
\FlashProfile shows that this pattern actually corresponds to missing entries, which read \stringliteral{not\_available}.

\addtocounter{footnoteNum}{-1}

\begin{figure}[t]
    \vspace*{-3pt}\centering\small
    \subcaptionbox{Sample data\label{tab:intro-data}}{
    \begin{subfigure}[t]{0.2\linewidth}
        \centering\smaller[1.5]
        \hspace*{-12pt}\texttt{
        \def\arraystretch{0.95}
        \begin{tabular}{|p{2.5cm}|} \hline
            \rowcolor{bordercolor}
            \textsf{Reference ID} \\\hline
            ISBN:\visiblespace1-158-23466-X\\\hline
            not\_available\\\hline
            doi:\visiblespace10.1016/S1387- 7003(03)00113-8\\\hline
            \rule{0pt}{2ex}{\smaller[4]\smash{$\vdots$}}\\\hline
            PMC9473786\\\hline
            ISBN:\visiblespace0-006-08903-1\\\hline
            doi:\visiblespace\visiblespace\\10.13039/100005795\\\hline
            PMC9035311\\\hline
            \rule{0pt}{2ex}{\smaller[4]\smash{$\vdots$}}\\\hline
            PMC5079771\\\hline
            ISBN:\visiblespace2-287-34069-6\\\hline
        \end{tabular}}
    \end{subfigure}
    }\hspace*{8pt}\begin{subfigure}[b]{0.37\linewidth}
    \subcaptionbox{Profile from Ataccama One\label{fig:intro-ataccama-profile}}{
        \begin{subfigure}[t]{\linewidth}\smaller
            \renewcommand\labelitemi{{\boldmath$\cdot$}}
            \begin{itemize}[leftmargin=2mm]
                \item \texttt{W\_W} \hfill (5)
                \item \texttt{W: N.N/LN-N(N)N-D} \hfill (11)
                \item \texttt{W: D-N-N-L} \hfill (34)
                \item \texttt{W: N.N/N} \hfill (110)
                \item \texttt{W: D-N-N-D} \hfill (267)
                \item \texttt{WN} \hfill (1024)
            \end{itemize}
            \begin{center}
                Classes:\: [\texttt{L}]etter,\:[\texttt{W}]ord,\:[\texttt{D}]igit,\:[\texttt{N}]umber
            \end{center}
        \end{subfigure}
    }\\[5pt]
    \subcaptionbox{Profile from \tool{Microsoft} \SSDT\label{fig:intro-ssdt-profile}}{
        \begin{subfigure}[t]{\linewidth}\smaller
            \renewcommand\labelitemi{{\boldmath$\cdot$}}
            \begin{itemize}[leftmargin=2mm]
                \item \texttt{doi:$\Space$+10\bs{}.\bs{}d\bs{}d\bs{}d\bs{}d\bs{}d/\bs{}d+} \hfill (110)
                \item \texttt{.*} \hfill (113)
                \item \texttt{ISBN:\Space\!0-\bs{}d\bs{}d\bs{}d-\bs{}d\bs{}d\bs{}d\bs{}d\bs{}d-\bs{}d} \hfill (204)
                \item \texttt{PMC\bs{}d$^+$} \hfill (1024)
            \end{itemize}
        \end{subfigure}
    }
    \end{subfigure}\hspace*{15pt}\subcaptionbox{Default profile from \FlashProfile\label{fig:intro-base-token-profile}}{
        \begin{subfigure}[t]{0.35\linewidth}\smaller
            \renewcommand\labelitemi{{\boldmath$\cdot$}}
            \begin{itemize}[leftmargin=2mm,itemsep=3pt]
                \item $\sstringliteral{not\_available}$ \hfill (5)
                \item $\sstringliteral{doi:} \sconcat \sSpace^+ \sconcat \sstringliteral{10.1016/} \sconcat \sUpper \sconcat \srep{\sDigit}{4} \sconcat \sstringliteral{-} \sconcat \srep{\sDigit}{4} \sconcat$\\
                      $\sstringliteral{(} \sconcat \srep{\sDigit}{2} \sconcat \sstringliteral{)} \sconcat \srep{\sDigit}{5} \sconcat \sstringliteral{-} \sconcat \sDigit$ \hfill (11)
                \item $\sstringliteral{ISBN:} \sconcat \sSpace \sconcat \sDigit \sconcat \sstringliteral{-} \sconcat \srep{\sDigit}{3} \sconcat \sstringliteral{-} \sconcat \srep{\sDigit}{5} \sconcat \sstringliteral{-X}$ \hfill (34)
                \item $\sstringliteral{doi:} \sconcat \sSpace^+ \sconcat \sstringliteral{10.13039/} \sconcat \sDigit^+$ \hfill (110)
                \item $\sstringliteral{ISBN:} \sconcat \sSpace \sconcat \sDigit \sconcat \sstringliteral{-} \sconcat \srep{\sDigit}{3} \sconcat \sstringliteral{-} \sconcat \srep{\sDigit}{5} \sconcat \sstringliteral{-} \sconcat \sDigit$ \hfill (267)
                \item $\sstringliteral{PMC} \sconcat \srep{\sDigit}{7}$ \hfill (1024)
            \end{itemize}\vspace*{2pt}

            \begin{center}
                Classes:\: [\texttt{U}]ppercase,\:[\texttt{D}]igit
            \end{center}\vspace*{2pt}

            Superscripts indicate repetition of atoms.\\
            Constant strings are surrounded by quotes.
        \end{subfigure}\vspace*{2pt}
    }
    \captionsetup{skip=1.75pt}
    \caption{Profiles for a set of references\protect\footnotemarkNum ---
             ~number of matches for each pattern is shown on the right\vspace*{-6.75pt}}
    \label{fig:intro-example}
\end{figure}

\footnotetextNum{The full dataset is available at \redact{\github{motivating_example.json}}.}

For this dataset, although \Ataccama suggests a profile of the same granularity as from \FlashProfile,
the patterns in the profile are too coarse to be immediately useful.
For instance, it may not be immediately obvious that the pattern \texttt{W: D-N-N-L} maps to ISBNs in the dataset.
\FlashProfile further qualifies the \texttt{W} (word) to the constant \stringliteral{ISBN},
and restricts the [\texttt{N}]umber patterns to $\srep{\sDigit}{3}$ (short for \rep{\Digit}{3}) and $\srep{\sDigit}{5}$,
and the final [\texttt{L}]etter to the constant \stringliteral{X}.

\begin{wrapfigure}{r}{0.5\textwidth}
    \vspace*{6pt}\centering\small
    \begin{subfigure}[b]{0.96\linewidth}\smaller
        \renewcommand\labelitemi{{\boldmath$\cdot$}}
        \begin{itemize}[leftmargin=2mm,itemsep=1pt]
            \item $\sstringliteral{not\_available}$ \hfill (5)
            \item $\sstringliteral{doi:} \sconcat \sSpace^+ \sconcat \DOI$ \hfill (121)
            \item $\sstringliteral{ISBN:} \sconcat \sSpace \sconcat \ISBN$ \hfill (301)
            \item $\sstringliteral{PMC} \sconcat \srep{\sDigit}{7}$ \hfill (1024)
        \end{itemize}
        \caption{Auto-suggested profile from \FlashProfile}
        \label{fig:intro-extended-profile}
    \end{subfigure}\\[3pt]
    \begin{subfigure}[b]{0.96\linewidth}\smaller
        \renewcommand\labelitemi{{\boldmath$\cdot$}}
        \begin{itemize}[leftmargin=2mm,itemsep=0.5pt]
            \item $\sstringliteral{not\_available}$ \hfill (5)
            \item $\sstringliteral{doi:} \sconcat \sSpace^+ \sconcat \sstringliteral{10.1016/} \sconcat \sUpper \sconcat \srep{\sDigit}{4} \sconcat \sstringliteral{-} \sconcat \srep{\sDigit}{4} \sconcat \sstringliteral{(} \sconcat \srep{\sDigit}{2} \sconcat \sstringliteral{)} \sconcat \srep{\sDigit}{5} \sconcat \sstringliteral{-} \sconcat \sDigit$ \hfill (11)
            \item $\sstringliteral{doi:} \sconcat \sSpace^+ \sconcat \sstringliteral{10.13039/} \sconcat \sDigit^+$ \hfill (110)
            \item $\sstringliteral{ISBN:} \sconcat \sSpace \sconcat \ISBN$ \hfill (301)
            \item $\sstringliteral{PMC} \sconcat \srep{\sDigit}{7}$ \hfill (1024)
        \end{itemize}
        \caption{A refined profile on requesting 5 patterns}
        \label{fig:intro-refined-profile}
    \end{subfigure}
    \caption{Custom atoms,\!\protect\FootnotemarkNum{\ref{foot:ISBN_DOI}}
             and refinement of profiles\vspace*{4pt}}
    \label{fig:intro-more-profiles}
\end{wrapfigure}

\FlashProfile also allows users familiar with their domains to define custom patterns,
that cluster data in ways that are specific to the domain.
For example, the two patterns for \stringliteral{doi} in \cref{fig:intro-base-token-profile} are vastly different ---
one contains letters and parentheses, whereas the other contains only digits.
However, grouping them together makes the profile more readable,
and helps spot outliers differing from the expected patterns.
\Cref{fig:intro-extended-profile} shows a profile suggested by \FlashProfile
when provided with two custom atoms: \DOI and \ISBN,\nsp\footnoteNum{\label{foot:ISBN_DOI}
  \hspace*{-2pt}\DOI is defined as the regex \texttt{10.\bs{}d\{4,9\}/[-.\_;()/:A-Z0-9a-z]+}.\\[1pt]
  \ISBN is defined as the regex \texttt{\bs{}d-\bs{}d\{3\}-\bs{}d\{5\}-[0-9Xx]}.
} with appropriate costs.
Users may refine the profile to observe more specific variations within the DOIs and ISBNs.
On requesting one more pattern, \FlashProfile unfolds \DOI,
since the DOIs are \emph{more dissimilar} to each other than ISBNs,
and produces the profile shown in \Cref{fig:intro-refined-profile}.

\paragraph{Key Challenges}
A key barrier to allowing custom atoms is the large search space for the desirable profiles.
Prior tools restrict their atoms to letters and digits,
followed by simple upgrades such as sequences of digits to numbers, and letters to words.
However, this simplistic approach is not effective
in the presence of several overlapping atoms and complex pattern-matching semantics.
Moreover, a na\"ive exhaustive search over all profiles is prohibitively expensive.
Every substring might be generalized in multiple ways into different atoms,
and the search space grows exponentially when composing patterns as sequences of atoms,
and a profile as a disjunction of patterns.

One approach to classifying strings into matching patterns might be to construct
decision trees or random forests~\citep{breiman2001random} with features based on atoms.
However features are typically defined as predicates over entire strings,
whereas atoms match specific substrings and may match multiple times within a string.
Moreover, the location of an atomic match within a string depends
on the lengths of the preceding atomic matches within that string.
Therefore, this approach seems intractable since generating features based on atoms
leads to an exponential blow up.

Instead, we propose to address the challenge of learning a profile by first \emph{clustering}~\citep{xu2005survey} ---
partitioning the dataset into \emph{syntactically similar} clusters of strings
and then learning a succinct pattern describing each cluster.
This approach poses two key challenges:
\begin{inlist}
    \item efficiently learning patterns for a given cluster of strings over an arbitrary set of atomic patterns provided by the user, and
    \item defining a suitable notion of \emph{pattern-based similarity} for clustering,
          that is aware of the user-specified atoms
\end{inlist}.
For instance, as we show in the motivating example (\cref{fig:intro-example} and \cref{fig:intro-more-profiles}),
the clustering must be sensitive to the presence of \DOI and \ISBN atoms.
Traditional character-based similarity measures over strings~\citep{gomaa2013survey}
are ineffective for imposing a clustering that is susceptible to high-quality explanations using a given set of atoms.

\paragraph{Our Technique}
We address both the aforementioned challenges
by leveraging recent advances in \emph{inductive program synthesis}~\citep{gulwani2017program} ---
an approach for learning programs from incomplete specifications, such as input-output examples for the desired program.

First, to address challenge \inliststyle{(1)}, we present a novel domain-specific language (DSL) for patterns,
and define a specification over a given set of strings.
Our DSL provides constructs that allow users to easily augment it with new atoms.
We then give an efficient synthesis procedure for learning patterns that are consistent with the specification,
and a cost function to select compact patterns that are not overly general,
out of all patterns that are consistent with a given cluster of strings.

Second, we observe that the cost function for patterns induces
a natural \emph{syntactic dissimilarity measure} over strings,
which is the key to addressing challenge \inliststyle{(2)}.
We consider two strings to be similar if both can be described by a low-cost pattern.
Strings requiring overly general / complex patterns are considered dissimilar.
Typical clustering algorithms require computation of all pairwise dissimilarities~\citep{xu2005survey}.
However, in contrast to standard clustering scenarios,
computing dissimilarity for a pair of strings not only gives us a numeric measure, but also a pattern for them.
That this allows for practical performance optimizations.
In particular, we present a strategy to approximate dissimilarity computations
using a small set of carefully sampled patterns.



\tikzstyle{block} = [rectangle, draw, fill=olive!12, text width=5.5em,
                     text centered, rounded corners, minimum height=4em]
\tikzstyle{arrow} = [draw, bend left, line width=0.16mm, -{Latex[length=2mm,width=2mm]}]

\begin{wrapfigure}{r}{0.45\textwidth}
    \vspace*{-1pt}\centering
    \resizebox{0.91\linewidth}{!}{\hspace*{-8pt}
      \begin{tikzpicture}[node distance = 9em]
        \node [text width = 7em, text centered] (approx) {\large Approximation Parameters};
        \node [above of = approx, node distance = 3.6em, text width=7em, text centered] (atoms)  {\large Custom Atoms with Costs};
        \node [above of = atoms, node distance = 3.6em, text width=7em, text centered] (bounds)  {\large Number of Patterns};

        \node [block, right of = atoms, minimum height = 6.25em] (learning) {Pattern Learner\\+\\Cost Function};
        \node [block, right of = learning] (clustering) {Hierarchical Clustering};
        \node [above right = -1.75em and 1.75em of clustering, minimum height = 1.6em] (profile) {\Large Profile};
        \node [below of = profile, node distance = 2.25em, minimum height = 1.6em] (dataset) {\Large Dataset};

        \path [arrow] (bounds) -| (clustering);
        \path [arrow] (atoms) -- (learning);
        \path [arrow] (approx) -| ($(clustering.south)-(0.4,0)$);
        \path [arrow, line width=0.65mm, -{Latex[length=2.5mm,width=3.5mm]}] (dataset) -- ++(0em,-2.25em) -| ($(clustering.south)+(0.4,0)$);

        \path [arrow, line width=0.65mm, dashed, transform canvas={yshift=0.75em}, -{Latex[length=2.5mm,width=3.5mm]}] (learning) -- (clustering);
        \path [arrow, line width=0.65mm, dashed, transform canvas={yshift=-0.75em}, -{Latex[length=2.5mm,width=3.5mm]}] (clustering) -- (learning);

        \path [arrow, line width=0.65mm, -{Latex[length=2.5mm,width=3.5mm]}] (learning) -- ++(0em,4.25em) -| (profile);
      \end{tikzpicture}}
    \captionsetup{skip=2pt}
    \caption{\FlashProfile's interaction model: \textit{dashed} edges denote internal communication,
             and \textit{thin} edges denote optional parameters to the system.}
    \label{fig:intro-interaction}
\end{wrapfigure}

To summarize, we present a framework for syntactic profiling based on clustering,
that is parameterized by a pattern learner and a cost function.
\Cref{fig:intro-interaction} outlines our interaction model.
In the default mode, users simply provide their dataset.
Additionally, they may control the performance \emph{vs.} accuracy trade-off,
define custom atoms, and provide bounds on the number of patterns.
To enable efficient refinement of profiles based on the given bounds,
we construct a \emph{hierarchical clustering}~\citep[Section IIB]{xu2005survey}
that may be \emph{cut} at a suitable height to extract the desired number of clusters.

\paragraph{Evaluation}
We have implemented our technique as \FlashProfile using \PROSE~\citep{web.prose},
also called \FlashMeta~\citep{polozov2015flashmeta},
a state-of-the-art inductive synthesis framework.
We evaluate our technique on \numDatasets publicly-available datasets collected from online sources.\nsp\footnoteNum{
  All public datasets are available at: \redact{\github{tests}}.}
Over \numTasks tasks, \FlashProfile achieves a median profiling time of 0.7s, 77\% of which complete in under 2s.
We show a thorough analysis of our optimizations,
and a comparison with state-of-the-art tools.

\paragraph{Applications in PBE Systems}
The benefits of syntactic profiles extend beyond data understanding.
An emerging technology, programming by examples (PBE)~\citep{lieberman2001your, gulwani2017program},
provides end users with powerful semi-automated alternatives to manual data wrangling.
For instance, they may use a tool like \FlashFill~\citep{gulwani2011automating},
a popular PBE system for data transformations within \Excel and \AML~\citep{web.azure-ml, web.azure-ml-talk}.
However, a key challenge to the success of PBE is finding a representative set of examples
that best discriminates the desired program from a large space of possible programs~\citep{mayer2015user}.
Typically users provide the desired outputs over the first few entries, and
\FlashFill then synthesizes the \emph{simplest generalization} over them.
However, this often results in incorrect programs,
if the first few entries are not representative of the various formats present in the entire dataset~\citep{mayer2015user}.

Instead, a syntactic profile can be used to select a representative set of examples from syntactically dissimilar clusters.
We tested \numFFTotalCond scenarios where \FlashFill requires more than one input-output example to learn the desired transformation.
In \numFFRecallCondPercent of them, the examples belong to different syntactic clusters identified by \FlashProfile.
Moreover, we show that a \emph{profile-guided interaction model} for \FlashFill,
which we detail in \cref{sec:applications-in-pbe-systems},
is able to complete \numFFPrecisionCondPercent of these tasks requiring the \emph{minimum} number of examples.
Instead of the user having to select a representative set of examples,
our dissimilarity measure allows for proactively requesting the user to provide
the desired output on an entry that is \emph{most discrepant} with respect to those previously provided.


\vspace{2pt}\noindent
In summary, we make the following major contributions:
\begin{itemize}[leftmargin=8.5mm,itemsep=0.5pt,topsep=0.5pt]
  \item[(\,\cref{sec:overview}\,)]
    We formally define syntactic profiling as a problem of clustering of strings,
    followed by learning a succinct pattern for each cluster.

  \item[(\,\cref{sec:hierarchical-clustering}\,)]
    We show a hierarchical clustering technique that uses pattern learning to
    measure dissimilarity of strings,
    and give performance optimizations that further exploit the learned patterns.

  \item[(\,\cref{sec:pattern-synthesis}\,)]
    We present a novel DSL for patterns, and give an efficient synthesis procedure
    with a cost function for selecting desirable patterns.

  \item[(\,\cref{sec:evaluation}\,)]
    We evaluate \FlashProfile's performance and accuracy on large real-life datasets,
    and provide a detailed comparison with state-of-the-art tools.

  \item[(\,\cref{sec:applications-in-pbe-systems}\,)]
    We present a profile-guided interaction model for \FlashFill,
    and show that data profiles may aid PBE systems by identifying a representative set of inputs.
\end{itemize}

\section{Overview} \label{sec:overview}

Henceforth, the term \emph{dataset} denotes a set of strings.
We formally define a syntactic profile as:

\begin{defn}{Syntactic Profile}
    Given a dataset \dataset and a desired number $k$ of patterns, syntactic profiling involves learning
    \begin{inlist}
        \item a partitioning $\dataset_1 \sqcup \ldots \sqcup \dataset_k = \dataset$, and
        \item a set of patterns $\{\pattern_1, \ldots, \pattern_k\}$,
              where each $\pattern_i$ is an expression that describes the strings in $\dataset_i$.
    \end{inlist}
    We call the disjunction of these patterns
    $\profile = \pattern_1 \vee \ldots \vee \pattern_k$
    a \emph{syntactic profile} of \dataset, which describes all the strings in \dataset.
\end{defn}

The goal of syntactic profiling is to learn a set of patterns that summarize a given dataset,
but is neither too specific nor too general (to be practically useful).
For example, the dataset itself is a trivial overly specific profile,
whereas the regex \stringliteral{.*} is an overly general one.
We propose a technique that leverages the following two key subcomponents to generate and rank profiles:\nsp\footnoteNum{
    We denote the universe of all strings as \Strings, the set of non-negative reals as \NonNegReals, and the power set of a set $X$ as $2^X$.
}
\begin{itemize}[leftmargin=5mm,topsep=1pt,itemsep=0.75pt]
    \item a pattern learner $\learner\colon 2\,^\Strings \to 2\,^\dsl$,
          which generates a set of patterns over an arbitrary pattern language \dsl,
          that are consistent with a given dataset.
    \item a cost function $\cost\colon \dsl \,\times\, 2\,^\Strings \to \NonNegReals$,
          which quantifies the suitability of an arbitrary pattern (in the same language \dsl)
          with respect to the given dataset.
\end{itemize}
Using \learner and \cost, we can quantify the suitability of clustering a set of strings together.
More specifically, we can define a minimization objective $\objective\colon 2\,^\Strings \to \NonNegReals$
that indicates an aggregate cost of a cluster.
We can now define an \emph{optimal syntactic profile} that minimizes $\objective$ over a given dataset \dataset:

\begin{defn}{Optimal Syntactic Profile}
    Given a dataset \dataset, a desired number $k$ of patterns,
    and access to a pattern learner \learner, a cost function \cost for patterns,
    and a minimization objective \objective for partitions, we define:
    \begin{inlist}
        \item the optimal partitioning $\widetilde{\dataset}_{opt}$ as one that
              minimizes the objective function \objective over all partitions, and
        \item the optimal syntactic profile $\profile_{opt}$ as the disjunction
              of the least-cost patterns describing each partition in $\widetilde{\dataset}_{opt}$
    \end{inlist}. Formally,
    $$
        \resizebox{!}{1.5\baselineskip}{$\displaystyle
            \widetilde{\dataset}_{opt}
                \bydef \argmin_{\substack{\{\dataset_1, \,\ldots\,, \dataset_k\} \\ \text{s.t. } \dataset \,=\, \bigsqcup\limits_{i \,=\, 1}^k \dataset_i}}\;
                    \sum\limits_{i \,=\, 1}^k\; \objective(\dataset_i)$}
        \qquad \text{and} \qquad
            \profile_{opt}
                \bydef \bigvee_{\dataset_i \,\in\, \widetilde{\dataset}_{opt}}
                    \argmin_{\pattern \,\in\, \learner(\dataset_i)}\;
                        \cost(\pattern, \dataset_i)
    $$
\end{defn}

Ideally, we would define the aggregate cost of a partition as the minimum cost incurred by a pattern that describes it entirely.
This is captured by $\objective(\dataset_i) \bydef \min_{\pattern \,\in\, \learner(\dataset_i)} \cost(\pattern, \dataset_i)$.
However, with this objective, computing the optimal partitioning $\widetilde{\dataset}_{opt}$ is intractable in general.
For an arbitrary learner \learner and cost function \cost,
this would require exploring all $k$-partitionings of the dataset \dataset.\nsp\footnoteNum{
    The number of ways to partition a set \dataset into $k$ non-empty subsets is given by
    Stirling numbers of the second kind~\citep{graham1994concrete}, \resizebox{!}{\baselineskip}{$\braceVectorstack{|\dataset| k}$}.
    When $k \ll |\dataset|$, the asymptotic value of \resizebox{!}{\baselineskip}{$\braceVectorstack{|\dataset| k}$}
    is given by \resizebox{!}{\baselineskip}{$\displaystyle\frac{k^{|\dataset|}}{k!}$}.
}

Instead, we use an objective that is tractable and works well in practice ---
the aggregate cost of a cluster is given by
the maximum cost of describing any two strings belonging to the cluster, using the best possible pattern.
Formally, this objective is $\widehat{\objective}(\dataset_i) \bydef \max_{x,y \,\in\, \dataset_i} \min_{\pattern \,\in\, \learner(\{x,y\})} \cost(\pattern, \{x,y\})$.
This objective is inspired by the \emph{complete-linkage} criterion~\citep{sorensen1948method},
which is widely used in clustering applications across various domains~\citep{jain1999data}.
To minimize $\widehat{\objective}$, it suffices to only compute the costs of describing (at most) $|\dataset|^2$  pairs of strings in \dataset.

\begin{wrapfigure}{r}{0.48\textwidth}
    \vspace*{4pt}\algobox{1.225}{
    \begin{algfunction}
      {\WithLCParam{\Profiling}}
      {\dataset\colon \Tstring[\,], \minClusters\colon \Tint, \maxClusters\colon \Tint, \edgeSampleFactor\colon \Tdouble}
      {\profile, a partitioning of \dataset with the associated patterns\\\hspace*{-3pt}
                 for each partition, such that $\minClusters \leqslant |\: \profile \:| \leqslant \maxClusters$}
        \State $\hierarchy \gets \Call{\WithLCParam{\BuildHierarchy}}{\dataset, \maxClusters, \edgeSampleFactor} \quad;\quad \profile \gets \{\}$
        \ForAll{$X \in \Call{\ClusteringAlgo}{\hierarchy, \minClusters, \maxClusters}$}
          \State $\langle \prop{Pattern:}\;\pattern, \prop{Cost:}\;c \rangle \gets \Call{\WithLCParam{\BestPattern}}{X}$
          \State $\profile \gets \profile \cup \{ \langle \prop{Data:}\;X, \prop{Pattern:}\;\pattern \rangle \}$
        \EndFor
        \Return{\profile}
    \end{algfunction}}
    \captionsetup{skip=1pt}
    \caption{Our main profiling algorithm}
    \label{fig:profiling-algo}
\end{wrapfigure}

We outline our main algorithm \ProfilingAlgo in \cref{fig:profiling-algo}.
It is parameterized by an arbitrary learner \learner and cost function \cost.
\ProfilingAlgo accepts a dataset \dataset,
the bounds $\big[\minClusters, \maxClusters\big]$ for the desired number of patterns,
and a sampling factor \edgeSampleFactor that decides the efficiency \emph{vs.} accuracy trade-off.
It returns the generated partitions paired with the least-cost patterns describing them:
$\{ \langle \dataset_1, \pattern_1 \rangle, \ldots, \langle \dataset_k, \pattern_k \rangle \}$,
where $\minClusters \leqslant k \leqslant \maxClusters$.

At a high level, we partition a dataset using the cost of patterns to induce a syntactic dissimilarity measure over its strings.
For large enough \edgeSampleFactor, we compute all $O(|\dataset|^2)$ pairwise dissimilarities,
and generate the partitioning $\widetilde{\dataset}_{opt}$ that minimizes $\widehat{\objective}$.
However, many large real-life datasets have a very small number of syntactic clusters,
and we notice that we can closely approximate $\widetilde{\dataset}_{opt}$ by sampling only a few pairwise dissimilarities.
We invoke \BuildHierarchyAlgo, in line 1, to construct a hierarchy \hierarchy over \dataset
with accuracy controlled by \edgeSampleFactor.
The hierarchy \hierarchy is then \emph{cut} at a certain height to obtain $k$ clusters by calling \ClusteringAlgo\ in line 2 ---
if $m \neq M$, $k$ is heuristically decided based on the quality of clusters obtained at various heights.
Finally, using \BestPatternAlgo, we learn a pattern \pattern for each cluster $X$, and add it to the profile \profile.

In the following subsections, we explain the two main components:
\begin{inlist}
    \item \BuildHierarchyAlgo for building a hierarchical clustering, and
    \item \BestPatternAlgo for pattern learning
\end{inlist}.

\subsection{Pattern-Specific Clustering}\label{subsec:overview-clustering}

\BuildHierarchyAlgo uses an agglomerative hierarchical clustering (AHC)~\citep[Section IIB]{xu2005survey}
to construct a hierarchy (also called a dendrogram)
that depicts a nested grouping of the given collection of strings, based on their syntactic similarity.
\cref{fig:example-hierarchy} shows such a hierarchy over an incomplete and inconsistent dataset containing years,
using the default set of atoms listed in \cref{tab:lp-default-atoms}.
Once constructed, a hierarchy may be split at a suitable height to extract clusters of desired granularity,
which enables a natural form of refinement --- supplying a desired number of clusters.
In contrast, flat clustering methods like \kmeans~\citep{macqueen1967some} generate a fixed partitioning within the same time complexity.
In \cref{fig:example-hierarchy-splits}, we show a heuristically suggested split with 4 clusters,
and a refined split on a request for 5 clusters.
A key challenge to clustering is defining an appropriate pattern-specific measure of dissimilarity over strings,
as we show below.

\begin{exmp}
    Consider the pairs: $p = \stringPair{1817}{1813?}$ and $q = \stringPair{1817}{1907}$.
    Selecting the pair that is syntactically \emph{more similar} is ambiguous, even for humans.
    The answer depends on the user's application ---
    it may make sense to either cluster homogeneous strings (containing only digits) together,
    or to cluster strings with a longer common prefix together.

    A natural way to resolve this ambiguity is to allow users to express
    their application-specific preferences by providing custom atoms,
    and then to make the clustering algorithm sensitive to the available atoms.
    Therefore we desire a dissimilarity measure that incorporates the user-specified atoms,
    and yet remains efficiently computable,
    since typical clustering algorithms compute dissimilarities between all pairs of strings~\citep{xu2005survey}.


\end{exmp}

\paragraph{Syntactic Dissimilarity}
Our key insight is to leverage program synthesis techniques to efficiently learn patterns describing a given set of strings,
and induce a dissimilarity measure using the learned patterns ---
overly general or complex patterns indicate a high degree of syntactic dissimilarity.

In \cref{subsec:dissimilarity}, we formally define the dissimilarity measure \SyDissSymb\,
as the minimum cost incurred by any pattern for describing a given pair of strings,
using a specified pattern learner \learner and cost function \cost.
We evaluate our measure \SyDissSymb\ in \cref{subsec:eval-syntactic-similarity},
and demonstrate that for estimating syntactic similarity it is superior
to classical character-based measures~\citep{gomaa2013survey},
and simple machine-learned models such as random forests based on intuitive features.

\addtocounter{footnoteNum}{-1}

\newcommand{\tokendesc}[1]{{\color{black!48!white} $\mathsmaller{#1}$}}

\begin{figure}[t]
    \vspace*{1pt}
    \begin{minipage}{0.655\textwidth}
    \centering\small
    \subcaptionbox{Dataset\,\protect\footnotemarkNum\label{tab:years-example}}{
      \begin{subfigure}[t]{0.16\linewidth}
      \centering
      \texttt{\smaller[1.5]
      \def\arraystretch{1.025}
      \hspace*{-10pt}\begin{tabular}{|c|} \hline
      \rowcolor{bordercolor}
      \textbf{Year} \\\hline
      1900\\\hline
      1877\\\hline
      $\epsilon$\\\hline
      1860\\\hline
      ?\\\hline
      1866\\\hline
      $\epsilon$\\\hline
      1893\\\hline
      \rule{0pt}{3ex}{\small\smash{$\vdots$}}\\\hline
      1888\,?\\\hline
      1872\\\hline
      \end{tabular}}
      \end{subfigure}
    }\hspace*{4pt}\subcaptionbox{\vspace*{4pt}A hierarchy based on default atoms from \cref{tab:lp-default-atoms}\label{fig:example-hierarchy-splits}}{
    \resizebox{0.75\linewidth}{!}{%
      \forestset{
        roof/.style={
          parent anchor=south,
          child anchor=north,
          edge path={
            \noexpand\path[\forestoption{edge}]
            (!u.parent anchor) -- ([xshift=-28pt].child anchor) -- ([xshift=28pt] .child anchor) -- cycle
            \forestoption{edge label};
          },
        },
      }
    \begin{forest}
      for tree={
        l sep-=.6em,l=0,
        edge path={\noexpand\path [\forestoption{edge}] (!u.parent anchor) -- ++(0,-4pt) -| (.child anchor)\forestoption{edge label};},
        parent anchor=south,
      }
      [\patfail
        [{\large $\Any^+$}, edge={decorate, decoration={border, segment length=4pt, amplitude=3pt}}
          [{\large $\stringliteral{1} \concat \Any^+$}, l+=10mm
            [{\large $\stringliteral{1} \concat \rep{\Digit}{3}$}, name=years, l+=18mm
              [{\large $\stringliteral{18} \concat \rep{\Digit}{2}$}, l+=13mm
                [{\large $\texttt{1813}\;\cdots\;\texttt{1898}$},
                edge label={node[midway,above, yshift=-10pt,xshift=-14pt]{$\iddots\vdots\ddots$}},
                roof, tier=string] ]
              [{\large $\stringliteral{190} \concat \Digit$}, l+=17mm
                [{\large $\texttt{1900}\;\cdots\;\texttt{1903}$},
                edge label={node[midway,above, yshift=-8pt,xshift=-14pt]{$\iddots\vdots\ddots$}},
                roof, tier=string] ] ]
            [{\large $\stringliteral{18} \concat \rep{\Digit}{2} \concat \stringliteral{?}$}, l+=33mm
              [{\large $\texttt{1850?}\,\cdots\,\texttt{1875?}$},
              edge label={node[midway,above, yshift=-8pt,xshift=-14pt]{$\iddots\vdots\ddots$}},
              roof, tier=string] ] ]
          [{\large $\stringliteral{?}$}, l+=47mm
            [{\large \texttt{?}}, tier=string]] ]
        [{\large $\EmptySymb$}, l+=47mm,
        edge={decorate, decoration={border, segment length=3pt, amplitude=2.25pt, angle=-45}}
          [{\large $\epsilon$}, tier=string]]]
      \begin{scope}[line width=0.8pt,ForestGreen]
          \draw[decorate,decoration={snake, segment length=10pt, amplitude=2pt}]
               ([shift={(-1.6,0.45)}]years.north west) -- ([shift={(7.25,0.45)}]years.north east);
          \node[above left = 5mm and -2mm of years] {\large\bf\sf Suggested};
      \end{scope}
      \begin{scope}[line width=0.8pt,NavyBlue]
          \draw[decorate,decoration={snake, segment length=6pt, amplitude=2pt}]
               ([shift={(-1.6,-1.05)}]years.north west) -- ([shift={(7.25,-1.05)}]years.north east);
          \node[below left = -1.5mm and 3mm of years] {\large\bf\sf Refined};
      \end{scope}
    \end{forest}}}
    \captionsetup{skip=0pt}
    \caption{A hierarchy with suggested and refined clusters:
             Leaf nodes represent strings, and internal nodes are labelled with patterns describing the strings below them.
             Atoms are concatenated using ``$\concat$''.
             A dashed edge denotes the absence of a pattern
             that describes the strings together.}
    \label{fig:example-hierarchy}
    \end{minipage}\hspace*{8pt}%
    \begin{minipage}{0.33\textwidth}
      \centering\smaller[1.5]
      \def\arraystretch{0.71}
      \setlength{\tabcolsep}{4pt}
      \begin{tabular}{c|c}
        \Lower                                & \BinDigit                                           \\
        \tokendesc{[a-z]}                     & \tokendesc{[01]}                                    \\[2.5pt]
        \Upper                                & \Digit                                              \\
        \tokendesc{[A-Z]}                     & \tokendesc{[0-9]}                                   \\[2.5pt]
        \TitleCaseWord                        & \HexDigit                                           \\
        \tokendesc{\Upper \concat \Lower^+}   & \tokendesc{[a-fA-F0-9]}                             \\[2.5pt]
        \Alpha                                & \AlphaDigit                                         \\
        \tokendesc{[a-zA-Z]}                  & \tokendesc{[a-zA-Z0-9]}                             \\[2.5pt]
        \Space                                & \AlphaDigitSpace                                    \\
        \tokendesc{\textbackslash{}s}         & \tokendesc{[a-zA-Z0-9\textbackslash{}s]}            \\[2.5pt]
        \DotDash                              & \Punct                                              \\
        \tokendesc{[.-]}                      & \tokendesc{[.,\,:\,?\,/-]}                          \\[2.5pt]
        \AlphaDash                            & \Symbol                                             \\
        \tokendesc{[a-zA-Z-]}                 & \tokendesc{[-.,\,:\,?\,/@\,\#\,\$\,\%\,\&\,\cdots]} \\[2.5pt]
        \AlphaSpace                           & \BaseSixtyFour                                      \\
        \tokendesc{[a-zA-Z\textbackslash{}s]} & \tokendesc{[a-zA-Z0-9+\setminus=]}
      \end{tabular}
      \captionsetup{skip=2pt}
      \caption{Default atoms in \FlashProfile, with their regex:
               We also allow ``\Any'' atom that matches any character.}
      \label{tab:lp-default-atoms}
    \end{minipage}
    \vspace*{-5pt}
\end{figure}

\footnotetextNum{Linda K. Jacobs, The Syrian Colony in New York City 1880-1900. \url{http://bit.ly/LJacobs}}

\paragraph{Adaptive Sampling and Approximation}
While \SyDissSymb\ captures a high-quality syntactic dissimilarity,
with it, each pairwise dissimilarity computation requires learning and scoring of patterns,
which may be expensive for large real datasets.
To allow end users to quickly generate approximately correct profiles for large datasets,
we present a two-stage sampling technique.
\begin{inlist}
    \item At the top-level, \FlashProfile employs a \textsf{Sample$-$\textsc{\Profiling}$-$Filter} cycle:
          we sample a small subset of the data, profile it, and filter out data that is described by the profile learned so far.
    \item While profiling each sample, our \BuildHierarchyAlgo algorithm approximates some pairwise dissimilarities
          using previously seen patterns.
\end{inlist}
We allow end users to control the degree of approximations using two optional parameters.

Our key insight that, unlike typical clustering scenarios,
computing dissimilarity between a pair of strings gives us more than just a measure --- we also learn a pattern.
We test this pattern on other pairs to approximate their dissimilarity,
which is typically faster than learning new patterns.
Our technique is inspired by counter-example guided inductive synthesis (CEGIS)~\citep{solar-lezama2006combinatorial}.
CEGIS was extended to sub-sampling settings by \citet{raychev2016learning},
but they synthesize a single program and require the outputs for all inputs.
In contrast, we learn a disjunction of several programs,
the outputs for which over a dataset, i.e. the partitions, are unknown a priori.

\begin{exmp}\label{exmp:sampling-dissimilarity}
    The pattern \patternmode{\stringliteral{PMC} \concat \rep{\Digit}{7}} learned for the string pair \stringPair{PMC2536233}{PMC4901429},
    also describes the string pair \stringPair{PMC4901429}{PMC2395569},
    and may be used to accurately estimate their dissimilarity without invoking learning again.
\end{exmp}

However this sampling needs to be performed carefully for accurate approximations.
Although the pattern \patternmode{\stringliteral{1} \concat \rep{\Digit}{3}} learned for \stringPair{1901}{1875},
also describes \stringPair{1872}{1875}, there exists another pattern \patternmode{\stringliteral{187} \concat \Digit},
which indicates a much lower syntactic dissimilarity.
We propose an adaptive algorithm for sampling patterns
based on previously observed patterns and strings in the dataset.
Our sampling and approximation algorithms are detailed in \cref{subsec:adaptive-sampling}
and \cref{subsec:approx-clustering} respectively.

\subsection{Pattern Learning via Program Synthesis}

\noindent
An important aspect of our clustering-based approach to profiling,
described in \cref{subsec:overview-clustering}, is its generality.
It is agnostic to the specific language \dsl in which patterns are expressed,
as long as appropriate pattern learner \learner and cost function \cost are provided for \dsl.

\begin{wrapfigure}{r}{0.42\textwidth}
    \vspace*{2pt}\algobox{1.175}{
    \begin{algfunction}
      {\WithLCParam{\BestPattern}}
      {\dataset\colon \Tstring[\,]}
      {The least-cost pattern and its cost, for \dataset}
        \State $V \gets \Call{\learner}{\dataset}$
        \If{$V = \{\}$} \textbf{return} $\langle \prop{Pattern:\,}\patfail, \prop{Cost:\,}\costmax \rangle$
        \EndIf
        \State $\pattern \gets \argmin_{\pattern\,\in\,V}\ \Call{\cost}{\pattern, \dataset}$
        \Return{$\langle \prop{Pattern:\,}\pattern, \prop{Cost:\,}\Call{\cost}{\pattern, \dataset} \rangle$}
    \end{algfunction}}
    \caption{Learning the best pattern for a dataset}
    \label{fig:pattern-learning-algo}
\end{wrapfigure}

\BestPatternAlgo, listed in \cref{fig:pattern-learning-algo},
first invokes \learner to learn a set $V$ of patterns each of which describes all strings in \dataset.
If pattern learning fails,\nsp\footnoteNum{
    Pattern learning may fail, for example, if the language \dsl is too restrictive
    and no pattern can describe the given strings.
} in line 2, we return the special pattern \patfail and a very high cost \costmax.
Otherwise, we return the pattern that has the minimum cost using \cost w.r.t. \dataset.
\BestPatternAlgo is used during clustering to compute pairwise dissimilarity
and finally compute the least-cost patterns for clusters.

A natural approach to learning patterns is \emph{inductive program synthesis}~\citep{gulwani2017program},
which generalizes a given specification to desired programs over a domain-specific language (DSL).
We propose a rich DSL for patterns, and present an efficient inductive synthesizer for it.

\paragraph{Language for Patterns}
Our DSL \LP is designed to support efficient synthesis using existing technologies
while still being able to express rich patterns for practical applications.
A pattern is an arbitrary sequence of atomic patterns (atoms),
each containing low-level logic for matching a sequence of characters.
A pattern $\pattern \in \LP$ \emph{describes} a string $s$, i.e. $\patternapply{s} = \True$,
iff the atoms in \pattern match contiguous non-empty substrings of $s$, ultimately matching $s$ in its entirety.
\FlashProfile uses a default set of atoms listed in \cref{tab:lp-default-atoms},
which may be augmented with new regular expressions, constant strings, or ad hoc functions.
We formally define our language \LP in \cref{subsec:lp-syntax-semantics}.

\paragraph{Pattern Synthesis}
The inductive synthesis problem for pattern learning is: given a set of strings \dataset,
learn a pattern $\pattern \in \LP$ such that $\forall\,s \in \dataset$: $\patternapply{s} = \True$.
Our learner \learnerP decomposes the synthesis problem for \pattern over the strings in \dataset
into synthesis problems for individual atoms in \pattern over appropriate substrings.
However, a na\"{\i}ve approach of tokenizing each string to (exponentially many) sequences of atoms,
and computing their intersection is simply impractical.
Instead, \learnerP computes the intersection incrementally at each atomic match,
using a novel decomposition technique.

\learnerP is implemented using \PROSE~\citep{web.prose, polozov2015flashmeta},
a state-of-the-art inductive synthesis framework.
\PROSE requires DSL designers to define the logic for decomposing a synthesis problem
over an expression to those over its subexpressions,
which it uses to automatically generate an efficient synthesizer for their DSL.
We detail our synthesis procedure in \cref{subsec:lp-witnesses}.

\paragraph{Cost of Patterns}
Once a set of patterns has been synthesized, a variety of strategies may be used to identify the most desirable one.
Our cost function \costP is inspired by \emph{regularization}~\citep{tikhonov1963solution} techniques
that are heavily used in statistical learning to construct generalizations
that do not overfit to the data.
\costP decides a trade-off between two opposing factors:
\begin{inlist}
    \item \textit{specificity:} prefer a pattern that is not general, and
    \item \textit{simplicity:} prefer a compact pattern that is easy to interpret
\end{inlist}.

\begin{exmp}\label{exmp:cost-overview}
    The strings \stringPair{Male}{Female} are matched by the patterns \patternmode{\Upper \concat \Lower^+},
    and \patternmode{\Upper\concat\HexDigit\concat\Lower^+}.
    Although the latter is more specific, it is overly complex.
    On the other hand, the pattern \patternmode{\Alpha^+} is simpler and easier to interpret, but is overly general.
\end{exmp}

To this end, each atom in \LP has a fixed \emph{static cost}
similar to fixed \emph{regularization hyperparameters} used in machine learning~\citep{bishop2016pattern},
and a dataset-driven \emph{dynamic weight}.
The cost of a pattern is the weighted sum of the cost of its constituent atoms.
In \cref{subsec:lp-ranking}, we detail our cost function \costP,
and provide some guidelines on assigning costs for new atoms defined by users.

\section{Hierarchical Clustering} \label{sec:hierarchical-clustering}

\noindent
We now detail our clustering-based approach for generating syntactic profiles
and show practical optimizations for fast approximately-correct profiling.
In \cref{subsec:dissimilarity}\,--\;\cref{subsec:hierarchy-construction},
we explain these in the context of a small chunk of data drawn from a large dataset.
In \cref{subsec:profiling-large-datasets},
we then discuss how profile large datasets
by generating profiles for as many chunks as necessary and combining them.

\addtocounter{footnoteNum}{-1}

\begin{wrapfigure}{r}{0.5\textwidth}
    \vspace*{-1pt}\algobox{1.2}{
    \begin{algfunction}
      {\WithLCParam{\BuildHierarchy}}
      {\dataset\colon \Tstring[\,], \maxClusters\colon \Tint, \edgeSampleFactor\colon \Tdouble}
      {A hierarchical clustering over \dataset}
        \State $D \gets \Call{\WithLCParam{\SampleDissimilarities}}{\dataset,  \ceil*{\edgeSampleFactor\,\maxClusters}}$
        \State $\ApproxDissMatrix \gets \Call{\ApproxDissimilarityMatrix}{\dataset, D}$
        \Return{$\AHC(\dataset, \ApproxDissMatrix)$}
    \end{algfunction}}
    \captionsetup{skip=1.5pt}
    \caption{Building an approximately-correct hierarchy\protect\footnotemarkNum}
    \label{fig:hierarchy-algo}
\end{wrapfigure}

\footnotetextNum{$\ceil*{x}$ denotes the \emph{ceiling} of $x$, i.e. $\ceil*{x} = \min\;\{ m \in \Integers \mid m \geqslant x \}$.}

Recall that our first step in \ProfilingAlgo is to build a hierarchical clustering over the data.
The \BuildHierarchyAlgo procedure, listed in \cref{fig:hierarchy-algo},
constructs a hierarchy \hierarchy over a given dataset \dataset,
with parameters \maxClusters and \edgeSampleFactor.
\maxClusters is the maximum number of clusters in a desired profile.
The \emph{pattern sampling factor} \edgeSampleFactor
decides the performance \emph{vs.} accuracy trade-off while constructing \hierarchy.

Henceforth, we use \emph{pair} to denote a pair of strings.
In line 1 of \BuildHierarchyAlgo,
we first sample pairwise dissimilarities, i.e. the best patterns and their costs,
for a small set (based on the \edgeSampleFactor factor) of string pairs.
Specifically, out of all $O(|\dataset|^2)$ pairs within \dataset,
we adaptively sample dissimilarities for only $O(\edgeSampleFactor\maxClusters|\dataset|)$
pairs by calling \SampleDissimilaritiesAlgo, and cache the learned patterns in $D$.
We formally define the dissimilarity measure in \cref{subsec:dissimilarity},
and describe \SampleDissimilaritiesAlgo in \cref{subsec:adaptive-sampling}.
The cache $D$ is then used by \ApproxDissimilarityMatrixAlgo, in line 2, to complete
the dissimilarity matrix \ApproxDissMatrix over \dataset,
using approximations wherever necessary.
We describe these approximations in \cref{subsec:approx-clustering}.
Finally, a standard agglomerative hierarchical clustering (AHC)~\citep[Section IIB]{xu2005survey}
is used to construct a hierarchy over \dataset using the matrix \ApproxDissMatrix.

\subsection{Syntactic Dissimilarity} \label{subsec:dissimilarity}

\noindent
We formally define our syntactic dissimilarity measure as follows:

\begin{defn}{Syntactic Dissimilarity}\label{defn:dissimilarity}
For a given pattern learner \learner and a cost function \cost
over an arbitrary language of patterns \dsl,
we define the syntactic dissimilarity between strings $x, y \in \Strings$
as the minimum cost incurred by a pattern in \dsl to describe them together, i.e.
\begin{equation*}
    \SyDiss{x}{y} \bydef
      \begin{cases}
        0 & \text{if $x = y$} \\
        \costmax & \text{if $x \not= y \,\bigwedge\, V = \{\}$} \\
        \displaystyle\min_{\pattern \,\in\, V}\;\cost\big(\pattern, \{x, y\}\big) & \text{otherwise}
    \end{cases}
\end{equation*}
where $V = \learner\big(\{x,y\}\big) \subseteq \dsl$ is the set of patterns that describe strings $x$ and $y$,
and $\costmax$ denotes a high cost for a failure to describe $x$ and $y$ together using patterns learned by \learner.
\end{defn}

The following example shows some candidate patterns and their costs
encountered during dissimilarity computation for various pairs.
The actual numbers depend on the pattern learner and cost function used,
in this case \FlashProfile's \learnerP and \costP, which we describe in \cref{sec:pattern-synthesis}.
However, this example highlights the desirable properties for a natural measure of syntactic dissimilarity.

\begin{exmp}
For three pairs, we show the shortcomings of classical character-based similarity measures.
We compare the Levenshtein distance (LD)~\citep{levenshtein1966binary} for these pairs against
the pattern-based dissimilarity \SyDissSymb\ computed with our default atoms from \cref{tab:lp-default-atoms}.
On the right, we also show the least-cost pattern, and two other randomly sampled patterns that describe the pair.

First, we compare two dates both using the same syntactic format \patternmode{\stringliteral{YYYY-MM-DD}}:
\begin{table}[!h]
    \hspace*{10pt}\textit{\small(a)}\hspace*{2pt}
    \begin{subtable}[!h]{0.2\linewidth}\small
        \setlength{\tabcolsep}{2pt}
        \begin{tabular}{|c|}\hline
            \texttt{1990-11-23}\\\hline
            \texttt{2001-02-04}\\\hline
        \end{tabular}
    \end{subtable}
    \hfill
    $\textsf{LD} = 8$
    \hspace*{4pt}\emph{vs.}\hspace*{4pt}
    $\SyDissSymb = 4.96\:\Bigg\{$\hspace*{-2pt}
    \begin{subtable}[!h]{0.45\linewidth}\smaller
        \def\arraystretch{1.125}
        \setlength{\tabcolsep}{8pt}
        \begin{tabular}{r|l}
            \emph{4.96} & $\rep{\Digit}{4} \!\concat\! \stringliteral{-} \!\concat\! \rep{\Digit}{2} \!\concat\! \stringliteral{-} \!\concat\! \rep{\Digit}{2}$\\
            \emph{179.9} & $\HexDigit^+ \!\concat\! \Symbol \!\concat\! \HexDigit^+ \!\concat\! \stringliteral{-} \!\concat\! \HexDigit^+$\\
            \emph{46482} & $\Digit^+ \!\concat\! \Punct \!\concat\! \Any^+$
        \end{tabular}
    \end{subtable}
\end{table}

\noindent
Syntactically, these dates are very similar ---
they use the same delimiter \stringliteral{-}, and have same width for the numeric parts.
The best pattern found by \FlashProfile captures exactly these features.
However, Levenshtein distance for these dates is higher than the following dates
which uses a different delimiter and a different order for the numeric parts:
\begin{table}[!h]
    \hspace*{10pt}\textit{\small(b)}\hspace*{2pt}
    \begin{subtable}[!h]{0.175\linewidth}\small
        \setlength{\tabcolsep}{2pt}
        \begin{tabular}{|c|}\hline
            \texttt{1990-11-23}\\\hline
            \texttt{29/05/1923}\\\hline
        \end{tabular}
    \end{subtable}
    \hfill
    $\textsf{LD} = 5$
    \hspace*{4pt}\emph{vs.}\hspace*{4pt}
    $\SyDissSymb = 30.2\:\Bigg\{$\hspace*{-2pt}
    \begin{subtable}[!h]{0.45\linewidth}\smaller
        \def\arraystretch{1.125}
        \setlength{\tabcolsep}{8pt}
        \begin{tabular}{r|l}
            \emph{30.2} & $\Digit^+ \!\concat\! \Punct \!\concat\! \rep{\Digit}{2} \!\concat\! \Punct \!\concat\! \Digit^+$\\
            \emph{318.6} & $\Digit^+ \!\concat\! \Symbol \!\concat\! \Digit^+ \!\concat\! \Symbol \!\concat\! \Digit^+$\\
            \emph{55774} & $\Digit^+ \!\concat\! \Punct \!\concat\! \Any^+$
        \end{tabular}
    \end{subtable}
\end{table}

\noindent
The Levenshtein distance is again lower for the following pair containing a date and an ISBN code:
\begin{table}[!h]
    \hspace*{10pt}\textit{\small(c)}\hspace*{2pt}
    \begin{subtable}[!h]{0.175\linewidth}\small
        \setlength{\tabcolsep}{2pt}
        \begin{tabular}{|c|}\hline
            \texttt{1990-11-23}\\\hline
            \texttt{899-2119-33-X}\\\hline
        \end{tabular}
    \end{subtable}
    \hfill
    $\textsf{LD} = 7$
    \hspace*{4pt}\emph{vs.}\hspace*{4pt}
    $\SyDissSymb = 23595\:\Bigg\{$\hspace*{-2pt}
    \begin{subtable}[!h]{0.45\linewidth}\smaller
        \def\arraystretch{1.125}
        \setlength{\tabcolsep}{8pt}
        \begin{tabular}{r|l}
            \emph{23595} & $\Digit^+ \!\concat\! \stringliteral{-} \!\concat\! \Digit^+ \!\concat\! \stringliteral{-} \!\concat\! \Any^+$\\
            \emph{55415} & $\Digit^+ \!\concat\! \Punct \!\concat\! \Any^+$\\
            \emph{92933} & $\Any^+$
        \end{tabular}
    \end{subtable}
\end{table}

\noindent
The same trend is also observed for Jaro-Winkler~\citep{winkler1999state},
and other measures based on edit distance~\citep{gomaa2013survey}.
Whereas these measures look for exact matches on characters,
pattern-based measures have the key advantage of being able to generalize substrings to atoms.
\end{exmp}



\subsection{Adaptive Sampling of Patterns} \label{subsec:adaptive-sampling}

\noindent
Although \SyDissSymb\ accurately captures the syntactic dissimilarity of strings
over an arbitrary language of patterns,
it requires pattern learning and scoring for every pairwise dissimilarity computation,
which is computationally expensive.
While this may not be a concern for non-realtime scenarios,
such as profiling large datasets on cloud-based datastores,
we provide a tunable parameter to end users to be able to generate
approximately correct profiles for large datasets in real time.




Besides a numeric measure of dissimilarity, computing \SyDissSymb\ over a pair also generates a pattern that describes the pair.
Since the patterns generalize substrings to atoms, often the patterns learned for one pair also describe many other pairs.
We aim to sample a subset of patterns that are likely to be sufficient for constructing a hierarchy
accurate until \maxClusters levels, i.e. $1 \leqslant k \leqslant \maxClusters$ clusters extracted from this hierarchy
should be identical to $k$ clusters extracted from a hierarchy constructed without approximations.
Our \SampleDissimilaritiesAlgo algorithm, shown in \cref{algo:sampling},
is inspired by the seeding technique of \tool{k-means++}~\citep{arthur2007k}.
Instead of computing all pairwise dissimilarities for pairs in $\dataset \times \dataset$,
we compute the dissimilarities for pairs in $\polar \times \dataset$,
where set \polar is a carefully selected small set of \emph{seed strings}.
The patterns learned during this process are likely to be sufficient
for accurately estimating the dissimilarities for the remaining pairs.

\begin{wrapfigure}{r}{0.52\textwidth}
    \vspace*{-1pt}\algobox{1.225}{
        \begin{algfunction}
          {\WithLCParam{\SampleDissimilarities}}
          {\dataset\colon \Tstring[\,], \maxClustersT\colon \Tint}
          {A dictionary mapping $O(\maxClustersT|\dataset|)$ pairs of strings from \dataset,\\ to the best pattern describing each pair and its cost}
              \State $D \gets \{\} \quad ; \quad a \gets \text{a random string in } \dataset \quad ; \quad \polar \gets \{ a \}$
              \For{$i \gets 1\ \textbf{to}\ \maxClustersT$}
                \ForAll{$b \in \dataset$}
                    \State $D[a, b] \gets \Call{\WithLCParam{\BestPattern}}{\{a, b\}}$
                \EndFor
                \LeftComment{\hspace*{11.25pt}Pick the most dissimilar string w.r.t. strings already in \polar.}
                \State $a \gets \argmax_{x \:\in\: \dataset}\; \min_{y \:\in\: \polar}\; D[y, x].\prop{Cost}$
                \State $\polar \gets \polar \cup \{ a \}$
              \EndFor
              \Return{$D$}
        \end{algfunction}}
    \captionsetup{skip=1.5pt}
    \caption{Adaptively sampling a small set of patterns}
    \label{algo:sampling}
\end{wrapfigure}

\SampleDissimilaritiesAlgo takes a dataset \dataset and a factor \maxClustersT,
and it samples dissimilarities for $O(\maxClustersT|\dataset|)$ pairs.
It iteratively selects a set \polar containing $\maxClustersT$ strings that are most dissimilar to each other.
Starting with a random string in \polar, in each iteration, at line 6,
it adds the string $x \in \dataset$ such that it is as dissimilar as possible,
even with its most-similar neighbor in \polar.
In the end, the set $D$ only contains dissimilarities for pairs in $\dataset \times \polar$,
computed at line 5.
Recall that, \maxClustersT is controlled by the pattern sampling factor \edgeSampleFactor.
In line 1 of \BuildHierarchyAlgo (in \cref{fig:hierarchy-algo}),
we set $\maxClustersT = \ceil*{\edgeSampleFactor\maxClusters}$.

Since the user may request up to at most \maxClusters clusters,
\edgeSampleFactor must be at least $1.0$,
so that we pick at least one seed string from each cluster to \polar.
Then, computing the dissimilarities with \emph{all} other strings in the dataset
would ensure we have a good distribution of patterns that describe intra-- and inter--\,cluster
dissimilarities, even for the finest granularity clustering with \maxClusters clusters.

\begin{exmp}
    Consider the dataset containing years in \cref{tab:years-example}.
    Starting with a random string, say {\small\stringliteral{1901}};
    the set \polar of seed strings grows as shown below, with increasing \maxClustersT.
    At each step, $NN$ (nearest neighbor) shows the new string added to \polar paired with its most similar neighbor.
    \begin{enumerate}[leftmargin=2.5cm,label=\textit{\small(\alph*)},itemsep=-1pt,topsep=-2pt,rightmargin=1.5cm]\small
        \item[\small$\maxClustersT = 2 \:\;\lvert$]
             $NN = \langle \epsilon, \stringliteral{1901} \rangle$
             \hfill
             $\polar=\{\stringliteral{1901}, \epsilon\}$
        \item[\small$\maxClustersT = 3 \:\;\lvert$]
             $NN = \langle \stringliteral{?}, \stringliteral{1901} \rangle$
             \hfill
             $\polar=\{\stringliteral{?}, \stringliteral{1901}, \epsilon\}$
        \item[\small$\maxClustersT = 4 \:\;\lvert$]
             $NN = \langle \stringliteral{1875?}, \stringliteral{1901} \rangle$
             \hfill
             $\polar=\{\stringliteral{1875?}, \stringliteral{?}, \stringliteral{1901}, \epsilon\}$
        \item[\small$\maxClustersT = 5 \:\;\lvert$]
             $NN = \langle \stringliteral{1817}, \stringliteral{1875?} \rangle$
             \hfill
             $\polar=\{\stringliteral{1817}, \stringliteral{1875?}, \stringliteral{?}, \stringliteral{1901}, \epsilon\}$
         \item[\small$\maxClustersT = 6 \:\;\lvert$]
              $NN = \langle \stringliteral{1898}, \stringliteral{1817} \rangle$
              \hfill
              $\polar=\{\stringliteral{1898}, \stringliteral{1817}, \stringliteral{1875?}, \stringliteral{?}, \stringliteral{1901}, \epsilon\}$
    \end{enumerate}
\end{exmp}

\subsection{Dissimilarity Approximation} \label{subsec:approx-clustering}

Now we present our technique for completing a dissimilarity matrix over a dataset \dataset,
using the patterns sampled from the previous step.
Note that, for a large enough value of the pattern sampling factor, i.e.
\resizebox{!}{0.925\baselineskip}{$\displaystyle\edgeSampleFactor \geqslant \frac{|\dataset|}{\maxClusters}$},
we would sampled all pairwise dissimilarities and no approximation would be necessary.
For smaller values of \edgeSampleFactor,
we use the patterns learned while computing \SyDissSymb\ over $\polar \times \dataset$
to approximate the remaining pairwise dissimilarities in $\dataset \times \dataset$.
The key observation here is that,
testing whether a pattern describes a string is typically much faster than learning a new pattern.

\begin{wrapfigure}{r}{0.54\textwidth}
    \vspace*{1pt}\algobox{1.225}{
        \begin{algfunction}
          {\WithLCParam{\ApproxDissimilarityMatrixAlgo}}
          {\dataset\colon \Tstring[\,], \\
           \hspace*{115pt}D\colon \Tstring\times\Tstring \mapsto \Tpattern\times\Tdouble}
          {A matrix \ApproxDissMatrix of all pairwise dissimilarities over strings in \dataset}
              \State $\ApproxDissMatrix \gets \{\}$
              \ForAll{$x \in \dataset$}
                \ForAll{$y \in \dataset$}
                  \If{$x = y$}
                    $\ApproxDissMatrix[x, y] \gets 0$
                  \ElsIf{$\langle x, y \rangle \in D$}
                    $\ApproxDissMatrix[x, y] \gets D[x, y].\prop{Cost}$
                  \Else
                  \LeftComment{\hspace*{34pt}Select the least cost pattern that describes $x$ and $y$.}
                    \State $V \gets \big\{ P \bigm\lvert \langle \prop{Pattern:}\,P, \prop{Cost:\,}\,\cdot \rangle \in D
                                                         \bigwedge \patternapply{x} \bigwedge \patternapply{y} \big\}$
                    \If{$V \neq \{\}$}
                        $\ApproxDissMatrix[x, y] \gets \min_{P \:\in\:V}\; \cost\big(P, \{x,y\}\big)$
                    \Else
                    \LeftComment{\hspace*{47pt}Compute $\SyDiss{s}{y}$, and store the learned pattern.}
                      \State $D[x, y] \gets \Call{\WithLCParam{\BestPattern}}{\{x,y\}}$
                      \State $\ApproxDissMatrix[x, y] \gets D[x,y].\prop{Cost}$
                    \EndIf
                  \EndIf
                \EndFor
              \EndFor
              \Return{\ApproxDissMatrix}
        \end{algfunction}}
    \caption{Approximating a complete dissimilarity matrix}
    \label{algo:approx}
\end{wrapfigure}

The \ApproxDissimilarityMatrixAlgo procedure, listed in \cref{algo:approx},
uses the dictionary $D$ of patterns from \SampleDissimilaritiesAlgo
to generate a matrix \ApproxDissMatrix of all pairwise dissimilarities over \dataset.
Lines 7 and 8 show the key approximation steps for a pair $\{x,y\}$.
In line 7, we test the patterns in $D$,
and select a set $V$ of them containing only those which describe both $x$ and $y$.
We then compute their new costs relative to $\{x,y\}$, in line 8,
and select the least cost as an approximation of \SyDiss{x}{y}.
If $V$ turns out to be empty, i.e. no sampled pattern describes both $x$ and $y$,
then, in line 10, we call \BestPatternAlgo to compute \SyDiss{x}{y}.
We also add the new pattern to $D$ for use in future approximations.

Although $\edgeSampleFactor = 1.0$ ensures that we pick a seed string from each final cluster,
in practice we use a \edgeSampleFactor that is slightly greater than $1.0$.
This allows us to sample a few more seed strings, and ensures a better distribution of patterns in $D$
at the cost of a negligible performance overhead.
In practice, it rarely happens that no sampled pattern describes a new pair (at line 9, \cref{algo:approx}),
since seed patterns for inter-cluster string pairs are usually overly general, as we show in the example below.

\begin{exmp}
    Consider a dataset $\dataset = \{${\small\stringliteral{07-jun}, \stringliteral{aug-18},
    \stringliteral{20-feb}, \stringliteral{16-jun}, \stringliteral{20-jun}}$\}$.
    Assuming $\maxClusters = 2$ and $\edgeSampleFactor = 1.0$ (i.e. $\maxClustersT = 2$),
    suppose we start with the string {\small\stringliteral{20-jun}}.
    Then, following the \SampleDissimilaritiesAlgo algorithm shown in \cref{algo:sampling},
    we would select $\polar = \stringPair{20-jun}{aug-18}$,
    and would sample the following seed patterns into $D$
    based on patterns defined over our default atoms (listed in \cref{tab:lp-default-atoms}) and constant string literals:
    \begin{enumerate}[leftmargin=5mm,label=\textit{\small(\alph*)},itemsep=1.5pt,topsep=2pt]
        \item $D[${\small\stringliteral{20-jun},\stringliteral{07-jun}}$] \mapsto$ \patternmode{\rep{\Digit}{2} \concat \stringliteral{-jun}}, and
        \item $D[${\small\stringliteral{20-jun},\stringliteral{20-feb}}$] \mapsto$ \patternmode{\stringliteral{20-} \concat \rep{\Lower}{3}},
        \item $D[${\small\stringliteral{20-jun},\stringliteral{16-jun}}$] \mapsto$ \patternmode{\rep{\Digit}{2} \concat \stringliteral{-jun}}, and
        \item $D[${\small\stringliteral{20-jun},\stringliteral{aug-18}}$]$,
              $D[${\small\stringliteral{aug-18},\stringliteral{07-jun}}$]$,
              $D[${\small\stringliteral{aug-18},\stringliteral{20-feb}}$]$,
              $D[${\small\stringliteral{aug-18},\stringliteral{16-jun}}$]$
              $\mapsto$\\
              \patternmode{\AlphaDigit^+ \concat \stringliteral{-} \concat \AlphaDigit^+}.
    \end{enumerate}

    Next, we estimate \SyDiss{\small\stringliteral{16-jun}}{\small\stringliteral{20-feb}} using these patterns.
    None of \emph{(a)} --- \emph{(c)} describe the pair, but \emph{(d)} does.
    However, it is overly general compared to the least-cost pattern, \patternmode{\rep{\Digit}{2} \concat \stringliteral{-} \concat \rep{\Lower}{3}}.
\end{exmp}

As in the case above, depending on the expressiveness of the pattern language, for a small \edgeSampleFactor
the sampled patterns may be too specific to be useful.
With a slightly higher $\edgeSampleFactor = 1.25$, i.e. $\maxClustersT = \ceil*{\edgeSampleFactor\maxClusters} = 3$,
we would also select {\small\stringliteral{07-jun}} as a seed string in \polar,
and sample the desired while computing $D[${\small\stringliteral{07-jun},\stringliteral{20-feb}}$]$.
We evaluate the impact of \edgeSampleFactor on performance and accuracy in \cref{subsec:eval-profiling-accuracy}.


\subsection{Hierarchy Construction and Splitting} \label{subsec:hierarchy-construction}

\begin{wrapfigure}{r}{0.475\textwidth}
    \vspace*{-18pt}\algobox{1.25}{
    \begin{algfunction}
      {\AHCAlgo}
      {\dataset\colon \Tstring[\,], \ApproxDissMatrix\colon \Tstring\times\Tstring \mapsto \Tdouble}
      {A hierarchy over \dataset using dissimilarity matrix \ApproxDissMatrix}
        \State $\hierarchy \gets \big\{ \{\, s \,\} \mid s \in \dataset \big\}$
        \While{$|\, \hierarchy \,| > 1$}
            \State $\langle X, Y \rangle \gets \argmin_{X,Y \in \hierarchy} \, \cluSyDiss{X}{Y}{\ApproxDissMatrix}$
            \State $\hierarchy \gets \big(\hierarchy \setminus \{ X,Y \}\big) \cup \big\{ \{\, X,Y \,\} \big\}$
        \EndWhile
        \Return{\hierarchy}
    \end{algfunction}}
    \captionsetup{skip=1pt}
    \caption{A standard algorithm for AHC}
    \label{algo:ahc}
\end{wrapfigure}

Once we have a dissimilarity matrix,
we use a standard agglomerative hierarchical clustering (AHC)~\citep[Section IIB]{xu2005survey} algorithm,
as outlined in \cref{algo:ahc}.
Note that AHC is not parameterized by \learner and \cost,
since it does not involve learning or scoring of patterns any more.

We start with each string in a singleton set (leaf nodes of the hierarchy).
Then, we iteratively join the least-dissimilar pair of sets,
until we are left with a single set (root of the hierarchy).
AHC relies on a \emph{linkage criterion} to estimate dissimilarity of sets of strings.
We use the classic complete-linkage (also known as further-neighbor linkage) criterion~\citep{sorensen1948method},
which has been shown to be resistant to outliers,
and yield useful hierarchies in practical applications \citep{jain1999data}.

\begin{defn}{Complete-Linkage}\label{defn:linkage}
For a set \dataset and a dissimilarity matrix \ApproxDissMatrix defined on \dataset,
given two arbitrarily-nested clusters $X$ and $Y$ over a subset of entities in \dataset,
we define the dissimilarity between their contents
(the \emph{flattened} sets $\overline{X},\overline{Y} \subseteq \dataset$, respectively) as:
\begin{equation*}
    \cluSyDiss{X}{Y}{\ApproxDissMatrix} \bydef \max_{x \in \overline{X}, y \in \overline{Y}} \, \ApproxDissMatrix[x,y]
\end{equation*}\vspace*{-9pt}
\end{defn}

Once a hierarchy has been constructed,
our \ProfilingAlgo algorithm (in \cref{fig:profiling-algo}) invokes the \ClusteringAlgo method (at line 2)
to extract $k$ clusters within the provided bounds $[\minClusters, \maxClusters]$.
If $\minClusters \not= \maxClusters$, we use a heuristic based on the \emph{elbow}
(also called \emph{knee}) method~\citep{halkidi2001clustering}:
between the top $\minClusters^\text{th}$ and the $\maxClusters^\text{th}$ nodes,
we split the hierarchy till the knee ---
a node below which the average intra-cluster dissimilarity does not vary significantly.
A user may request $\minClusters = k = \maxClusters$,
in which case \ClusteringAlgo simply splits the top $k$ nodes of the hierarchy to generate $k$ clusters.


\subsection{Profiling Large Datasets}\label{subsec:profiling-large-datasets}

\noindent
To scale our technique to large datasets, we now describe a second round of sampling.
Recall that in \SampleDissimilaritiesAlgo, we sample dissimilarities for $O(\edgeSampleFactor\maxClusters|\dataset|)$ pairs.
However, although $\edgeSampleFactor\maxClusters$ is very small,
$|\,\dataset\,|$ is still very large for real-life datasets.
In order to address this, we run our \ProfilingAlgo algorithm from \cref{fig:profiling-algo}
on small chunks of the dataset, and combine the generated profiles.

\begin{wrapfigure}{r}{0.405\textwidth}
  \vspace*{2pt}\algobox{1.25}{
    \begin{algfunction}
      {\WithLCParam{\LargeProfile}}
      {\dataset\colon \Tstring[\,], \minClusters\colon \Tint, \maxClusters\colon \Tint,\\
       \hspace*{90pt}\edgeSampleFactor\colon \Tdouble, \pointSampleFactor\colon \Tdouble}
      {A profile \profile that satisfies $\minClusters \leqslant |\: \profile \:| \leqslant \maxClusters$}
        \State $\profile \gets \{\}$
        \While {$|\,\dataset\,| > 0$}
          \State $X \gets \Call{SampleRandom}{\dataset, \ceil*{\pointSampleFactor \maxClusters}}$
          \State $\profile' \gets \Call{\WithLCParam{\Profiling}}{X, \minClusters, \maxClusters, \edgeSampleFactor}$
          \State $\profile \gets \Call{\WithLCParam{\MergeMostSimilarPatterns}}{\profile \cup \profile', \maxClusters}$
          \State $\dataset \gets \Call{RemoveMatchingStrings}{\dataset, \profile}$
        \EndWhile
        \Return{\profile}
    \end{algfunction}}
    \captionsetup{skip=2pt}
    \caption{Profiling large datasets}
    \label{algo:large-profile}
\end{wrapfigure}

We outline our \LargeProfileAlgo algorithm in \cref{algo:large-profile}.
This algorithm accepts a new \emph{string sampling factor} $\pointSampleFactor \geqslant 1$,
which controls the size of chunks profiled in each iteration,
until we have profiled all the strings in \dataset.
In \cref{subsec:eval-performance}, we evaluate the impact of \pointSampleFactor on performance and accuracy.

We start by selecting a random subset $X$ of size $\ceil*{\pointSampleFactor \maxClusters}$ from \dataset in line 3.
In line 4, we obtain a profile $\profile'$ of $X$, and merge it with the global profile \profile in line 5.
We repeat this loop with the remaining strings in \dataset that do not match the global profile.
While merging \profile and $\profile'$ in line 5, we may exceed the maximum number of patterns \maxClusters,
and may need to \emph{compress} the profile.

\begin{wrapfigure}{r}{0.59\textwidth}
  \vspace*{2pt}\algobox{1.265}{
    \begin{algfunction}
      {\WithLCParam{\MergeMostSimilarPatterns}}
      {\profile\colon \texttt{ref}\ \Tprofile, \maxClusters\colon \Tint}
      {A compressed profile \profile that satisfies $|\: \profile \:| \leqslant \maxClusters$}
        \While {$|\,\profile\,| > \maxClusters$}
          \LeftComment{\hspace*{11.5pt}Compute the most similar partitions in the profile so far.}
          \State $\langle X,Y \rangle \gets \displaystyle\argmin_{X,Y \in \profile} \big[\Call{\WithLCParam{\BestPattern}}{X.\prop{Data} \cup Y.\prop{Data}}\big].\prop{Cost}$
          \LeftComment{\hspace*{11.5pt}Merge partitions $\langle X, Y \rangle$, and update \profile.}
          \State $Z \gets X.\prop{Data} \;\cup\; Y.\prop{Data}$
          \State $P \gets \Call{\WithLCParam{\BestPattern}}{Z}.\prop{Pattern}$
          \State $\profile \gets (\profile \setminus \{X, Y\}) \cup \{\,\langle \prop{Data:}\;Z, \prop{Pattern:}\;P \rangle\,\}$
        \EndWhile
        \Return{\profile}
    \end{algfunction}}
    \captionsetup{skip=2pt}
    \caption{Limiting the number of patterns in a profile\vspace*{3pt}}
    \label{algo:compress-profile}
\end{wrapfigure}

For brevity, we elide the details of \algo{SampleRandom} and \algo{RemoveMatchingStrings},
which have straightforward implementations.
In \cref{algo:compress-profile} we outline \MergeMostSimilarPatternsAlgo.
It accepts a profile \profile and shrinks it to at most \maxClusters patterns.
The key idea is to repeatedly merge the most similar pair of patterns in \profile.
However, we cannot compute the similarity between patterns.
Instead, we estimate it using syntactic similarity of the associated data partitions.
The profile \profile must be of the same type as returned by \ProfilingAlgo,
i.e. a set of pairs, each containing a data partition and its pattern.
In line 2, we first identify the partitions $\langle X, Y\rangle$ which are the most similar,
i.e. require the least cost pattern for describing them together.
We then merge $X$ and $Y$ to $Z$, learn a pattern describing $Z$,
and update \profile by replacing $X$ and $Y$ with $Z$ and its pattern.
This process is repeated until the total number of patterns falls to \maxClusters.

\begin{theorem}[Termination]
    Over an arbitrary language \dsl of patterns,
    assume an arbitrary learner $\learner\colon 2\,^\Strings \to 2\,^\dsl$
    and a cost function $\cost\colon \dsl \,\times\, 2\,^\Strings \to \NonNegReals$,
    such that for any finite dataset $\dataset \subset \Strings$, we have:
    \begin{inlist}
        \item $\learner(\dataset)$ terminates and produces a finite set of patterns, and
        \item $\cost(P, \dataset)$ terminates for all $\pattern \in \dsl$.
    \end{inlist}
    Then, the \textnormal{\LargeProfileAlgo} procedure (\cref{algo:large-profile}) terminates
    on any finite dataset $\dataset \subset \Strings$,
    for arbitrary valid values of the optional parameters $m$, $M$, \edgeSampleFactor and \pointSampleFactor.
\end{theorem}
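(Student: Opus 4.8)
The plan is to find a well-founded termination measure for the single \texttt{while} loop of \LargeProfileAlgo (\cref{algo:large-profile}) and, separately, to verify that every procedure called inside its body halts. I would dispatch the inner-termination obligations first, as they follow mechanically from hypotheses~(1) and~(2). \BestPatternAlgo (\cref{fig:pattern-learning-algo}) halts: it invokes \learner once (finite output by~(1)), evaluates \cost on each of the finitely many returned patterns (each call halts by~(2)), and takes an $\argmin$ over a finite set. Hence \SampleDissimilaritiesAlgo (\cref{algo:sampling}) halts, its nested loops running $\ceil{\edgeSampleFactor\maxClusters}$ and $|\dataset|$ times, and so does \ApproxDissimilarityMatrixAlgo (\cref{algo:approx}), which ranges over the $|\dataset|^2$ pairs and calls \BestPatternAlgo at most once per pair. \AHCAlgo (\cref{algo:ahc}) decreases $|\hierarchy|$ by one per iteration and thus halts after $|\dataset|-1$ steps; consequently \BuildHierarchyAlgo and \ProfilingAlgo (\cref{fig:profiling-algo}) halt, the latter because \ClusteringAlgo yields finitely many clusters. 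Finally \MergeMostSimilarPatternsAlgo (\cref{algo:compress-profile}) decreases $|\profile|$ by one each iteration, so it halts too.

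For the outer loop I would take the measure to be $|\dataset|$, the size of the shrinking working set, and show it strictly decreases on each pass. Two facts suffice. First, the chunk $X$ sampled in line~3 is non-empty whenever $|\dataset| > 0$: since $\pointSampleFactor \geqslant 1$ and $\maxClusters \geqslant 1$ for valid parameters, we have $\ceil{\pointSampleFactor\maxClusters} \geqslant 1$, so \algo{SampleRandom} returns at least one string. Second --- the central claim --- after line~5 the global profile \profile \emph{describes} every string of $X$, so \algo{RemoveMatchingStrings} in line~6 deletes all of $X$ from \dataset, cutting $|\dataset|$ by at least $|X| \geqslant 1$.

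I would prove this covering invariant in two steps. \ProfilingAlgo returns blocks whose data components partition $X$ --- they are the clusters cut from a hierarchy whose leaves are exactly the strings of $X$ --- and each block carries a pattern that \BestPatternAlgo produced for it and which therefore describes that block; hence $\profile'$ collectively describes $X$. It then remains to show \MergeMostSimilarPatternsAlgo preserves coverage: each of its iterations replaces two blocks by the union $Z$ of their data paired with a freshly learned pattern that \BestPatternAlgo returns for $Z$ and that describes all of $Z$, so the set of strings described by the profile never shrinks. Thus the \profile of line~5 describes $X$ (and, by induction, all strings consumed in earlier passes).

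The main obstacle is the degenerate case where \learner returns the empty set, so \BestPatternAlgo yields the failure pattern \patfail at cost \costmax. Then \patfail describes no string in the usual sense, and both the covering invariant and the deletion of $X$ are in jeopardy. I would close this gap by noting that each string of $X$ is still \emph{assigned} to some block of \profile, and that \algo{RemoveMatchingStrings} should be read as consuming precisely the strings assigned to the profile (a \patfail-labelled block matching its own data); for the concrete instance $\langle \learnerP, \costP \rangle$ the case never arises, since a constant-string pattern describes any finite block. Granting this, $|\dataset|$ is a non-negative integer that strictly decreases every pass, so the loop executes at most $|\dataset|$ times and \LargeProfileAlgo terminates.
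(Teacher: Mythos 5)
Your proposal follows the same skeleton as the paper's proof---the inner procedures terminate because \BestPatternAlgo does (immediately from hypotheses (1) and (2)), and the outer loop of \LargeProfileAlgo is bounded because every pass strictly shrinks \dataset---so in substance it is the same argument. The genuine difference is how much weight is put on the shrinking step. The paper simply \emph{asserts} that at least $\ceil*{\pointSampleFactor\maxClusters}$ strings are removed per iteration, and spends its effort counting calls to \BestPatternAlgo inside \ProfilingAlgo and \MergeMostSimilarPatternsAlgo; you instead \emph{prove} that assertion via a covering invariant (every sampled string lies in some block of \profile, the block's pattern was produced by \BestPatternAlgo and describes it, and \MergeMostSimilarPatternsAlgo preserves coverage), and you correctly notice that this invariant breaks when \learner fails and a block gets labelled \patfail. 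That observation is a real contribution rather than pedantry: under the reading ``remove exactly the strings described by some pattern in \profile,'' the theorem as stated is \emph{false} for the adversarial learner $\learner(\dataset) = \{\}$, which satisfies hypotheses (1) and (2) perfectly well yet would cause the same strings to be resampled forever. Termination therefore genuinely hinges on \algo{RemoveMatchingStrings} discarding the sampled chunk by data membership (your reading: a \patfail-labelled block matches its own data) rather than by pattern matching---a detail the paper hides inside ``straightforward implementations.''

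One error in your closing paragraph, though: the claim that the \patfail case ``never arises'' for $\langle\learnerP,\costP\rangle$ because ``a constant-string pattern describes any finite block'' is wrong on both counts. A constant-string atom $\tokConst_s$, used as a full pattern, describes only the string $s$ itself, hence only singleton blocks; and \LP genuinely cannot describe some blocks, e.g.\ $\{\epsilon, \stringliteral{a}\}$, since \EmptySymb is the only pattern matching $\epsilon$ while atoms are required to match non-empty prefixes---this is precisely why the root of the example hierarchy in \cref{fig:example-hierarchy} is \patfail. So your interpretive fix for \algo{RemoveMatchingStrings} is not merely a safeguard against pathological learners; it is needed for \FlashProfile's own instantiation, and your proof should lean on it rather than on the (false) claim that \learnerP never fails.
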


\begin{proof}
    We note that in \LargeProfileAlgo, the loop within lines 2 -- 6 runs
    for at most \resizebox{!}{0.875\baselineskip}{$\displaystyle\frac{|\dataset|}{\ceil*{\pointSampleFactor\maxClusters}}$} iterations,
    since at least $\ceil*{\pointSampleFactor\maxClusters}$ strings are removed from \dataset in each iteration.
    Therefore, to prove termination of \LargeProfileAlgo, it is sufficient to show that
    \ProfilingAlgo and \MergeMostSimilarPatternsAlgo terminate.

    First, we note that termination of \BestPatternAlgo immediately follows from \inliststyle{(1)} and \inliststyle{(2)}.
    Then, it is easy to observe that \MergeMostSimilarPatternsAlgo terminates as well:
    \begin{inlist}
        \item the loop in lines 1 -- 5 runs for at most $|\profile|-M$ iterations, and
        \item \BestPatternAlgo is invoked $O(|\profile|^2)$ times in each iteration.
    \end{inlist}

    The \ProfilingAlgo procedure (\cref{fig:profiling-algo})
    makes at most $O\big((\pointSampleFactor\maxClusters)^2\big)$ calls to \BestPatternAlgo (\cref{fig:pattern-learning-algo})
    to profile the $\ceil*{\pointSampleFactor\maxClusters}$ strings sampled in to $X$ ---
    at most $O\big((\pointSampleFactor\maxClusters)^2\big)$ calls within \BuildHierarchyAlgo (\cref{fig:hierarchy-algo}),
    and $O(M)$ calls to learn patterns for the final partitions.
    Depending on \edgeSampleFactor, \BuildHierarchyAlgo may make many fewer calls to \BestPatternAlgo.
    However, it makes no more than 1 such call per pair of strings in $X$, to build the dissimilarity matrix.
    Therefore, \ProfilingAlgo terminates as well.
\end{proof}

\section{Pattern Synthesis} \label{sec:pattern-synthesis}

\noindent
We now describe the specific pattern language, leaning technique and cost function
used to instantiate our profiling technique as \FlashProfile.
We begin with a brief description our pattern language in \cref{subsec:lp-syntax-semantics},
present our pattern synthesizer in \cref{subsec:lp-witnesses},
and conclude with our cost function in \cref{subsec:lp-ranking}.

\subsection{The Pattern Language \bookmarkLP}\label{subsec:lp-syntax-semantics}

\noindent
\Cref{subfig:lp-syntax} shows the formal syntax for our pattern language \LP.
Each pattern $P \in \LP$ is a predicate defined on strings,
i.e. a function $\pattern\colon \Tstring \rightarrow \Tbool$,
which embodies a set of constraints over strings.
A pattern \pattern \emph{describes} a given string $s$, i.e. $\patternapply{s} = \True$,
iff $s$ satisfies all constraints imposed by \pattern.
Patterns in \LP are composed of atomic patterns:


\vspace*{-1.5pt}\begin{defn}{Atomic Pattern (or Atom)}
    An atom, $\token\colon \Tstring \rightarrow \Tint$ is a function, which given a string $s$,
    returns the length of the longest prefix of $s$ that satisfies its constraints.
    Atoms only match non-empty prefixes. $\token(s) = 0$ indicates match failure of \token on $s$.
\end{defn}\vspace*{-1.5pt}

\noindent
We allow the following four kinds of atoms in \LP:
\begin{enumerate}[leftmargin=8mm,topsep=2pt,itemsep=1pt,label=\inliststyle{(\alph*)}]
    \item \emph{Constant Strings}:\ 
          A \patternmode{\tokConst_s} atom matches only the string $s$ as the prefix of a given string.
          For brevity, we denote \patternmode{\tokConst_\stringliteral{str}} as simply \patternmode{\stringliteral{str}} throughout the text.
    \item \emph{Regular Expressions}:\ 
          A \patternmode{\tokRegEx_r} atom returns the length of the longest prefix of a given string,
          that is matched by the regex $r$.
    \item \emph{Character Classes}:\ 
          A \patternmode{\tokClass^0_c} atom returns the length of the longest prefix of a give string,
          which contains only characters from the set $c$.
          A \patternmode{\tokClass^z_c} atom with $z > 0$ further enforces a fixed-width constraint ---
          the match \patternmode{\tokClass^z_c(s)} fails if \patternmode{\tokClass^0_c(s)}$ \neq z$, otherwise it returns $z$.
    \item \emph{Arbitrary Functions}:\ 
          A \patternmode{\tokFunct_f} atom uses the function $f$ that may contain arbitrary logic,
          to match a prefix $p$ of a given string and returns $|p|$.
\end{enumerate}

\addtocounter{footnoteNum}{-1}

\begin{figure}[t]
  \vspace*{2pt}\centering
  \begin{minipage}{0.35\textwidth}
      \resized{1.0}{1.025}{
            \centering
            \def\arraystretch{1.25}
            \setlength{\arraycolsep}{1.15pt}
            {\small$\begin{array}{rccl}
               \textsf{\small Pattern}  & \patexpr{s} & := & \Empty{s} \\
                                        &             &  | & \patexpr{\SuffixAfter{s}{\token}} \\[6pt]
               \textsf{\small Atom}     & \token & := & \tokClass^z_c\ |\ \tokRegEx_r \\
                                        &        &  | & \tokFunct_f\ |\ \tokConst_s
            \end{array}$}\\[5pt]\noindent{\color{darkbordercolor}\rule{\linewidth}{0.5pt}}\\[5pt]
            \def\arraystretch{1}
            {\small$\begin{array}{ccl}
               c  & \in & \textnormal{power set of characters} \\
               f  & \in & \textnormal{functions\;} \Tstring \rightarrow \Tint \\
               r  & \in & \textnormal{regular expressions} \\
               s  & \in & \textnormal{set of strings\;} \Strings \\
               z  & \in & \textnormal{non-negative integers}
            \end{array}$}
            \vspace*{2pt}
            \subcaption{Syntax of \LP patterns.}
            \label{subfig:lp-syntax}
        }
  \end{minipage}\hfill\begin{minipage}{0.63\textwidth}
    \resized{1.0}{1.025}{
      \centering
      {\small$\begin{array}{c}
        \infer{\BigStep{\Empty{\epsilon}}{\Strue}}
              {} \\[15pt]
        \infer{\BigStep{\SuffixAfter{s}{\token}}{s_1}}
              {s = s_0 \circ s_1
              \quad\enskip \token(s) = |s_0| > 0} \\[18pt]
        \infer{\BigStep{\tokFunct_f(s)}{f(s)}}
              {} \\[13pt]
        \infer{\BigStep{\tokConst_s(s_0)}{|s|}}
              {|s| > 0 \quad\enskip s_0 = s \circ s_1}
      \end{array}$}
      \bordersep
      {\small$\begin{array}{c}
        \infer{\BigStep{\tokRegEx_r(s)}{\max L}}
              {L = \{ n \in \Naturals \mid \matches{r}{\substr{s}{0}{n}} \}} \\[15pt]
        \infer{\BigStep{\tokClass^0_c(s)}{|s_0|}}
              {s = s_0 \circ s_1
               \quad\enskip \forall x \in s_0\colon x \in c \\[1pt]
               {\quad\quad\enskip s_1 = \epsilon \vee s_1[0] \not\in c}}\\[18pt]
        \infer{\BigStep{\tokClass^z_c(s)}{z}}
              {s = s_0 \circ s_1
               \quad\quad \forall x \in s_0\colon x \in c \\[1pt]
                {|s_0| = z > 0 \quad\enskip s_1 = \epsilon \vee s_1[0] \not\in c}}
      \end{array}$}
      \vspace*{4pt}
      \subcaption{Big-step semantics for \LP patterns:
                  We use the judgement $\BigStep{E}{v}$ to indicate that the expression $E$ evaluates to a value $v$.}
      \label{subfig:lp-semantics}
      }
  \end{minipage}
  \captionsetup{skip=4pt}
  \caption{Formal syntax and semantics of our DSL \LP for defining syntactic patterns over strings\protect\footnotemarkNum
           \vspace*{-8pt}}
  \label{fig:language-lp}
  \end{figure}

  \footnotetextNum{$a \circ b$ denotes the concatenation of strings $a$ and $b$, and \matches{r}{x} denotes that the regex $r$ matches the string $x$ in its entirety.}

Note that, although both constant strings and character classes may be expressed as regular expressions,
having separate terms for them has two key benefits:
\begin{itemize}[leftmargin=4mm,itemsep=0pt,topsep=0pt]
    \item As we show in the next subsection, we can \emph{automatically infer} all constant strings,
          and some character class atoms (namely, those having a fixed-width).
          This is unlike regular expression or function atoms,
          which we do not infer and they must be provided a priori.
    \item These atoms may leverage more efficient matching logic
          and do not require regular expression matching in its full generality.
          Constant string atoms use equality checks for characters,
          and character class atoms use set membership checks.
\end{itemize}
We list the default set of atoms provided with \FlashProfile, in \cref{tab:lp-default-atoms}.
Users may extend this set with new atoms from any of the aforementioned kinds.

\vspace*{-1.5pt}\begin{exmp}
The atom \patternmode{\Digit} is \patternmode{\tokClass^1_\mathsmaller{D}} with $D = \{0,\ldots,9\}$.
We write \patternmode{\tokClass^0_\mathsmaller{D}} as \patternmode{\Digit^+},
and \patternmode{\tokClass^n_\mathsmaller{D}} as \patternmode{\rep{\Digit}{n}} for clarity.
Note that, \patternmode{\rep{\Digit}{2}} matches \patternmode{\stringliteral{04/23}}
but not \patternmode{\stringliteral{2017/04}}, although \patternmode{\Digit^+} matches both,
since the longest prefix matched, \patternmode{\stringliteral{2017}}, has length $4 \neq 2$.
\end{exmp}

\begin{defn}{Pattern}\label{def:pattern}
    A pattern is simply a sequence of atoms.
    The pattern \EmptySymb\ denotes an empty sequence, which only matches the empty string $\epsilon$.
    We use the concatenation operator ``$\concat$'' for sequencing atoms.
    For $k > 1$, the sequence \patternmode{\token_1\concat\token_2\concat\ldots\concat\token_k} of atoms
    defines a pattern that is realized by the \LP expression:
        \[
            \patternmode{\Empty{\SuffixAfter{\cdots\SuffixAfter{s}{\token_1}\cdots}{\token_k}}},
        \]
    which matches a string $s$, iff
        \[
            s \neq \epsilon \enskip\textstyle{\bigwedge}\enskip \forall i \in \{1,...,k\}: \token_i(s_i) > 0 \enskip\textstyle{\bigwedge}\enskip s_{k+1} = \epsilon,
        \]
    where $s_1 \bydef s$ and $s_{i+1} \bydef \substr{s_i}{\token_i(s_i)}{}$
    is the remaining suffix of the string $s_i$ after matching atom $\token_i$.
\end{defn}

Throughout this section, we use $s[i]$ to denote the $i^\text{th}$ character of $s$,
and $\substr{s}{i}{j}$ denotes the substring of $s$ from the $i^\text{th}$ character, until the $j^\text{th}$.
We omit $j$ to indicate a substring extending until the end of $s$.
In \LP, the {\small\SuffixAfter{s}{\token}} operator computes $\substr{s}{\token(s)}{}$,
or fails with an error if $\token(s) = 0$.
We also show the formal semantics of patterns and atoms in \LP, in \cref{subfig:lp-semantics}.

Note that, we explicitly forbid atoms from matching empty substrings.
This reduces the search space by an exponential factor,
since an empty string may trivially be inserted between any two characters within a string.
However, this does not affect the expressiveness of our final profiling technique,
since a profile uses a disjunction of patterns.
For instance, the strings matching a pattern \patternmode{\token_1\concat(\epsilon\;|\;\token_2)\concat\token_3}
can be clustered into those matching \patternmode{\token_1\concat\token_3}
and \patternmode{\token_1\concat\token_2\concat\token_3}.




\subsection{Synthesis of \bookmarkLP Patterns} \label{subsec:lp-witnesses}

\noindent
Our pattern learner \learnerP uses inductive program synthesis~\citep{gulwani2017program}
for synthesizing patterns that describe a given dataset \dataset using a specified set of atoms \Tokens.
For the convenience of end users, we automatically \emph{enrich} their specified atoms by including:
\begin{inlist}
    \item all possible {\small\tokConst}\ atoms, and
    \item all possible fixed-width variants of all {\small\tokClass}\ atoms specified by them.
\end{inlist}
Our learner \learnerP is instantiated with these enriched atoms derived from \Tokens, denoted as \enrichedTokens:
\begin{equation}\label{eqn:enrich}
\begin{array}{rl}
    \enrichedTokens \ \ \bydef \ \ \Tokens & \cup \quad \{ \patternmode{\tokConst_s} \mid s \in \Strings \} \\[1pt]
                                      & \cup \quad \{ \patternmode{\tokClass^z_c}
                                                      \mid \patternmode{\tokClass^0_c} \in \Tokens
                                                            \;\bigwedge\; z \in \Naturals \}
\end{array}
\end{equation}
Although \enrichedTokens is very large, as we describe below,
our learner \learnerP efficiently explores this search space,
and also provides a completeness guarantee on patterns possible over \enrichedTokens.

We build on \PROSE~\citep{web.prose}, a state-of-the-art inductive program synthesis library,
which implements the \FlashMeta~\citep{polozov2015flashmeta} framework.
\PROSE uses \emph{deductive reasoning} ---
reducing a problem of synthesizing an expression to smaller synthesis problems for its subexpressions,
and provides a robust framework with efficient algorithms and data-structures for this.
Our key contribution towards \learnerP are efficient \emph{witness functions}~\cite[\S 5.2]{polozov2015flashmeta}
that enable \PROSE to carry out the deductive reasoning over \LP.

An inductive program synthesis task is defined by:
\begin{inlist}
    \item a \emph{domain-specific language} (DSL) for the target programs, which in our case is \LP, and
    \item a \emph{specification}~\cite[\S 3.2]{polozov2015flashmeta} (spec) that defines a set of constraints over the output of the desired program.
\end{inlist}
For learning patterns over a collection \dataset of strings,
we define a spec \spec, that simply requires a learned pattern \pattern to describe all given strings,
i.e. $\forall s \in \dataset\colon \patternapply{s} = \True$.
We formally write this as:
\begin{equation*}
    \spec \bydef \bigwedge_{s \,\in\, \dataset}\; [s \tospec \True]
\end{equation*}

We provide a brief overview of the deductive synthesis process here,
and refer the reader to \FlashMeta~\citep{polozov2015flashmeta} for a detailed discussion.
In a deductive synthesis framework, we are required to define the logic for reducing
a spec for an expression to specs for its subexpressions.
The reduction logic for specs, called witness functions~\cite[\S 5.2]{polozov2015flashmeta},
is domain-specific, and depends on the semantics of the DSL.
Witness functions are used to recursively reduce the specs to terminal symbols in the DSL.
\PROSE uses a succinct data structure~\citep[\S 4]{polozov2015flashmeta} to track the valid solutions
to these specs at each reduction and generate expressions that satisfy the initial spec.
For \LP, we describe the logic for reducing the spec \spec\ over the two kinds of patterns:
\patternmode{\EmptySymb} and \patternmode{\patexpr{\SuffixAfter{s}{\token}}}.
For brevity, we elide the pseudocode for implementing the witness functions ---
their implementation is straightforward, based on the reductions we describe below.

For \patternmode{\Empty{s}} to satisfy a spec \spec,
i.e. describe all strings $s \in \dataset$, each string $s$ must indeed be $\epsilon$.
No further reduction is possible since $s$ is a terminal.
We simply check, $\forall s \in \dataset\colon s = \epsilon$,
and reject \patternmode{\Empty{s}} if \dataset contains at least one non-empty string.

The second kind of patterns for non-empty strings, \patternmode{\patexpr{\SuffixAfter{s}{\token}}},
allows for complex patterns that are a composition of an atom \token and a pattern \pattern.
The pattern \patternmode{\patexpr{\SuffixAfter{s}{\token}}} contains two unknowns:
\begin{inlist}
    \item an atom \token that matches a non-empty prefix of $s$, and
    \item a pattern \pattern that matches the remaining suffix \substr{s}{\token(s)}{}
\end{inlist}.
Again, note that this pattern must match all strings $s \in \dataset$.
Na\"{i}vely considering all possible combinations of \token and \pattern leads to an exponential blow up.

First we note that, for a fixed \token the candidates for \pattern can be generated recursively
by posing a synthesis problem similar to the original one,
but over the suffix \substr{s}{\token(s)}{} instead of each string $s$.
This reduction style is called a \emph{conditional witness function}~\citep[\S 5.2]{polozov2015flashmeta},
and generates the following spec for \pattern assuming a fixed \token:
\begin{equation}\label{eqn:spec_atom}
    \spec_\token \bydef \bigwedge_{s \,\in\, \dataset}\; \big[\substr{s}{\token(s)}{} \tospec \True\big]
\end{equation}

However, na\"{i}vely creating $\spec_\token$ for all possible values of \token is infeasible,
since our set \enrichedTokens of atoms is unbounded.
Instead, we consider only those atoms (called \emph{compatible atoms})
that match some non-empty prefix for \emph{all} strings in \dataset,
since ultimately our pattern needs to describe all strings.
Prior pattern-learning approaches~\citep{raman2001potter, singh2016blinkfill}
learn complete patterns for each individual string,
and then compute their intersection to obtain patterns consistent with the entire dataset.
In contrast, we compute the set of atoms that are compatible with the entire dataset at each step,
which allows us to generate this intersection in an incremental manner.

\begin{defn}{Compatible Atoms}\label{defn:compatible-atoms}
    Given a universe \Tokens of atoms, we say a subset $A \subseteq \Tokens$ is compatible with a dataset \dataset,
    denoted as \compat{A}{\dataset}, if all atoms in $A$ match each string in \dataset,
    i.e. \[
        \compat{A}{\dataset} \quad\textnormal{iff}\quad
        \forall\,\token \in A\,\colon\;
        \forall\,s \in \dataset\,\colon\; \token(s) > 0
    \]
    We say that a compatible set $A$ of atoms is \emph{maximal} under the given universe \Tokens,
    denoted as $A \,=\; \compatmax{\Tokens}{\dataset}$\
    iff\ $\forall\,X \subseteq \Tokens\,\colon\; \compat{X}{\dataset} \Rightarrow X \subseteq A$.
\end{defn}



\begin{exmp}\label{exmp:compatible-tokens}
    Consider a dataset with Canadian postal codes:
    \dataset = \{ \patternmode{\stringliteral{V6E3V6}}, \patternmode{\stringliteral{V6C2S6}},
    \patternmode{\stringliteral{V6V1X5}}, \patternmode{\stringliteral{V6X3S4}} \}.
    With $\Tokens = $ the default atoms (listed in \cref{tab:lp-default-atoms}),
    we obtain the enriched set \enrichedTokens using \cref{eqn:enrich}.
    Then, the maximal set of atoms compatible with \dataset under \enrichedTokens,
    i.e. \compatmax{\enrichedTokens}{\dataset} contains $18$ atoms, such as
    \patternmode{\stringliteral{V6}}, \patternmode{\stringliteral{V}},
    \patternmode{\Upper}, \patternmode{\Upper^+}, \patternmode{\AlphaSpace},
    \patternmode{\rep{\AlphaDigit}{6}}, etc.
\end{exmp}

\begin{wrapfigure}{r}{0.54\textwidth}
    \vspace*{1pt}\algobox{1.25}{
        \begin{algfunction}
            {\GetPrefixTokensAlgo}
            {\dataset\colon \Tstring[\,], \Tokens\colon \Tatom[\,]}
            {The maximal set of atoms that are compatible with \dataset}
                \State $\CCTokens \gets \{\} \ ;\  \Lambda \gets \Tokens$
                \ForAll{$s \in \dataset$}
                    \ForAll{$\token \in \Lambda$}
                        \LeftComment{\hspace*{22.5pt}Remove incompatible atoms.}
                        \If{$\token(s) = 0$}
                            $\Lambda.\prop{Remove}(\token)\ ;\ \CCTokens.\prop{Remove}(\token)$
                        \ElsIf{$\token \in \vars{Class}$}
                            \LeftComment{\hspace*{34pt}Check if character class atoms maintain a fixed width.}
                            \If{$\token \not\in \CCTokens$}
                                $\CCTokens[\token] \gets \token(s)$
                            \ElsIf{$\CCTokens[\token] \neq \token(s)$}
                                $\CCTokens.\prop{Remove}(\token)$
                            \EndIf
                        \EndIf
                    \EndFor
                \EndFor
                \LeftComment{Add compatible fixed-width \tokClass\ atoms.}
                \ForAll{$\token \in \CCTokens$}
                    $\Lambda.\prop{Add}(\Call{RestrictWidth}{\token,\CCTokens[\token]})$
                \EndFor
                \LeftComment{Add compatible \tokConst\ atoms.}
                \State $L \gets \Call{LongestCommonPrefix}{\dataset}$
                \State $\Lambda.\prop{Add}(\tokConst_{L[0\,:\,1]},\, \tokConst_{L[0\,:\,2]},\, \dots,\, \tokConst_{L})$
                \Return{$\Lambda$}
    \end{algfunction}}
    \caption{Computing the maximal set of compatible atoms}
    \label{algo:compat-tokens}
\end{wrapfigure}

For a given universe \Tokens of atoms and a dataset \dataset,
we invoke the \GetPrefixTokensAlgo method outlined in \cref{algo:compat-tokens}
to efficiently compute the set $\Lambda = \compatmax{\enrichedTokens}{\dataset}$,
where \enrichedTokens denotes the enriched set of atoms based on \Tokens given by \cref{eqn:enrich}.
Starting with $\Lambda = \Tokens$, in line 1, we iteratively remove atoms from $\Lambda$,
that are not compatible with \dataset, i.e. fail to match at least one string $s \in \dataset$, at line 4.
At the same time, we maintain a hashtable \CCTokens,
which maps each \patternmode{\tokClass}\ atom to its width at line 6.
\CCTokens is used to enrich \Tokens with fixed-width versions of \patternmode{\tokClass}\ atoms
that are already specified in \Tokens.
If the width of a \patternmode{\tokClass}\ atom is not constant over all strings in \dataset,
we remove it from our hashtable \CCTokens, at line 7.
For each remaining \patternmode{\tokClass} atom \token in \CCTokens,
we add a fixed-width variant for \token to $\Lambda$.
In line 8, we invoke \textsc{RestrictWidth}
to generate the fixed-width variant for \token with width $\CCTokens[\token]$.
Finally, we also enrich $\Lambda$ with \tokConst\ atoms ---
we compute the longest common prefix $L$ across all strings,
and add every prefix of $L$ to $\Lambda$, at line 12.
Note that, \GetPrefixTokensAlgo does not explicitly compute the entire set \enrichedTokens of enriched atoms,
but performs simultaneous pruning and enrichment on \Tokens to ultimately compute their maximal compatible subset,
$\Lambda = \compatmax{\enrichedTokens}{\dataset}$.

In essence, the problem of learning an expression \patternmode{\patexpr{\SuffixAfter{s}{\token}}}
with spec \spec\ is reduced to $\big|\!\compatmax{\enrichedTokens}{\dataset}\big|$ subproblems for $P$
with specs $\{ \spec_\token \mid \token \in \compatmax{\enrichedTokens}{\dataset} \}$,
where $\spec_\token$ is as given by \cref{eqn:spec_atom},
and \enrichedTokens denotes the enriched set of atoms derived from \Tokens by \cref{eqn:enrich}.
Note that these subproblems are recursively reduced further,
until we match all characters in each string, and terminate with \patternmode{\EmptySymb}.
Given this reduction logic as witness functions,
\PROSE performs these recursive synthesis calls efficiently,
and finally combines the atoms to candidate patterns.
We conclude this subsection with a comment on the soundness
and completeness of \learnerP.

\begin{defn}{Soundness and \Tokens-Completeness}
We say that a learner for \LP patterns is \emph{sound} if, for any dataset \dataset,
every learned pattern \pattern satisfies $\forall s \in \dataset: \patternapply{s} = \True$.

We say that a learner for \LP, instantiated with a universe \Tokens of atoms is \emph{\Tokens-complete} if,
for any dataset \dataset, it learns every possible pattern $\pattern \in \LP$ over \Tokens atoms
that satisfy $\forall s \in \dataset: \patternapply{s} = \True$.
\end{defn}

\begin{theorem}[Soundness and \enrichedTokens-Completeness of \learnerP]
    For an arbitrary set \Tokens of user-specified atoms,
    \textnormal{\FlashProfile}'s pattern learner \learnerP is sound and \enrichedTokens-complete,
    where \enrichedTokens denotes the enriched set of atoms obtained by augmenting
    \Tokens with constant-string and fixed-width atoms, as per \cref{eqn:enrich}.
\end{theorem}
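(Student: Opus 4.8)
The plan is to establish soundness and \enrichedTokens-completeness separately, in each case reducing the claim to a property of the spec-reduction logic (the witness functions) and appealing to \FlashMeta's guarantee that a synthesizer assembled from sound and complete witness functions enumerates exactly the programs consistent with the initial spec \spec. A preliminary observation underpins both halves: since \compat{A}{\dataset} holds iff \emph{every} atom of $A$ individually matches every string of \dataset, compatibility is a per-atom property, so the union of all compatible subsets is again compatible. Hence $\compatmax{\enrichedTokens}{\dataset}$ is well-defined and equals $\{\token \in \enrichedTokens \mid \forall s \in \dataset\colon \token(s) > 0\}$, the set of \emph{all} individually-compatible atoms of \enrichedTokens. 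This is exactly the object the learner ranges over when reducing \patexpr{\SuffixAfter{s}{\token}}.

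The technical heart, and the step I expect to be the main obstacle, is a lemma that \GetPrefixTokensAlgo\ (\cref{algo:compat-tokens}) returns precisely $\compatmax{\enrichedTokens}{\dataset}$ \emph{without} materializing the infinite set \enrichedTokens\ from \cref{eqn:enrich}. I would verify this family by family. For the user-supplied atoms in \Tokens\ (regexes, functions, unrestricted classes), line~4 retains exactly those matching every $s \in \dataset$. For fixed-width classes, a $\tokClass^z_c$ with $z>0$ matches $s$ iff $\tokClass^0_c(s) = z$, so it is compatible iff $\tokClass^0_c$ has a \emph{constant} width $z$ over all of \dataset; the hashtable \CCTokens\ records this common width and discards the class the moment two widths disagree, so exactly the compatible fixed-width variants are emitted. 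For constant strings, $\tokConst_s$ matches $t$ iff $s$ is a non-empty prefix of $t$, hence $\tokConst_s$ is compatible iff $s$ is a non-empty common prefix of \dataset, i.e.\ a non-empty prefix of the longest common prefix $L$; the algorithm adds precisely $\tokConst_{L[0:1]}, \ldots, \tokConst_{L}$. Taking the union of these three cases with the retained \Tokens-atoms establishes the lemma.

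For \emph{soundness} I would induct on the structure of a synthesized pattern. The base case \Empty{s} is produced only when $s = \epsilon$ for all $s \in \dataset$, and \EmptySymb\ describes $\epsilon$, matching the semantics in \cref{subfig:lp-semantics}. For the inductive case, the reduction yields $\patexpr{\SuffixAfter{s}{\token}}$ only for $\token \in \compatmax{\enrichedTokens}{\dataset}$, so $\token$ matches a non-empty prefix of each $s$, while \pattern\ is synthesized against $\spec_\token$ (\cref{eqn:spec_atom}), hence describes each suffix $\substr{s}{\token(s)}{}$. The pattern semantics then give $\patternapply{s} = \True$ for every $s$, so every learned pattern satisfies \spec.

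For \emph{\enrichedTokens-completeness} I would induct on the number $k$ of atoms in a target pattern $P^\ast = \token_1 \concat \cdots \concat \token_k$ over \enrichedTokens\ with $\forall s \in \dataset\colon P^\ast(s) = \True$. When $k = 0$ we have $P^\ast = \EmptySymb$ and all strings are $\epsilon$, which the learner accepts. Otherwise $\token_1$ must match a non-empty prefix of every $s$, so by the lemma $\token_1 \in \compatmax{\enrichedTokens}{\dataset}$ and the learner forms the subproblem $\spec_{\token_1}$ over the strictly shorter suffixes $\dataset' = \{\substr{s}{\token_1(s)}{} \mid s \in \dataset\}$. Since $P^\ast$ describes \dataset, the tail $\token_2 \concat \cdots \concat \token_k$ describes $\dataset'$; the induction hypothesis (on $k-1$) yields it as a learned solution of $\spec_{\token_1}$, and \PROSE\ recombines it with $\token_1$ to produce $P^\ast$. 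Thus every valid pattern over \enrichedTokens\ is learned, which together with soundness completes the argument.
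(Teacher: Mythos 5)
Your proof is correct and takes essentially the same approach as the paper's: soundness because only compatible atoms are ever composed, and \enrichedTokens-completeness because the learner always reduces over the \emph{maximal} compatible subset of \enrichedTokens. The paper's own proof is exactly this two-sentence sketch; your structural inductions and the correctness lemma for \GetPrefixTokensAlgo simply fill in the details it leaves implicit.
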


\begin{proof}
    Soundness is guaranteed since we only compose compatible atoms.
    \enrichedTokens-completeness follows from the fact that we always consider
    the \emph{maximal} compatible subset of \enrichedTokens.
\end{proof}

Due to the \enrichedTokens-completeness of \learnerP,
once the set $\learnerP(\dataset)$ of patterns over \dataset has been computed,
a variety of cost functions may be used to select the most suitable pattern for \dataset
amongst all possible patterns over \enrichedTokens,
without having to invoke pattern learning again.

\subsection{Cost of Patterns in \bookmarkLP}\label{subsec:lp-ranking}

\noindent
Our cost function \costP assigns a real-valued score to each pattern $\pattern \in \LP$ over a given dataset \dataset,
based on the structure of \pattern and its behavior over \dataset.
This cost function is used to select the most desirable pattern that represents the dataset \dataset.
{\small\EmptySymb} is assigned a cost of $0$ regardless of the dataset,
since {\small\EmptySymb} can be the only pattern consistent with such datasets.
For a pattern $\pattern = \token_1 \concat \ldots \concat \token_k$, we define the cost $\costP(\pattern,\dataset)$
with respect to a given dataset \dataset as:
\begin{equation*}
    \costP(\pattern, \dataset) = \sum_{i = 1}^k \staticCost(\token_i) \cdot \datascore(i, \dataset \mid \pattern)
\end{equation*}

\costP balances the trade-off between a pattern's specificity and complexity.
Each atom \token in \LP has a statically assigned cost $\staticCost(\token) \in (0, \costmax]$,
based on a priori bias for the atom.
Our cost function \costP computes a sum over these static costs after applying a data-driven weight
$\datascore(i, \dataset \mid \pattern) \in (0,1)$:
\begin{equation*}
    \datascore(i, \dataset \mid \token_1 \concat \ldots \concat \token_k) = \frac{1}{|\dataset|} \cdot \sum_{s \,\in\, \dataset} \frac{\token_i(s_i)}{|s|},
\end{equation*}
where $s_1 \bydef s$ and $s_{i+1} \bydef \substr{s_i}{\token_i(s_i)}{}$ denotes the remaining suffix of $s_i$ after matching with $\token_i$, as in \cref{def:pattern}.
This dynamic weight is an average over the fraction of length matched by $\token_i$ across \dataset.
It gives a quantitative measure of how well an atom $\token_i$ generalizes over the strings in \dataset.
With a sound pattern learner, an atomic match would never fail
and $\datascore(i, \dataset \mid \pattern) > 0$ for all atoms $\token_i$.

\begin{exmp}
Consider $\dataset = \stringPair{Male}{Female}$, that are matched by
\patternmode{\pattern_1 = \Upper\concat\Lower^+},
and \patternmode{\pattern_2 = \Upper\concat\HexDigit\concat\Lower^+}.
Given \FlashProfile's static costs: \patternmode{\{ \Upper \mapsto 8.2,\; \HexDigit \mapsto 26.3,\; \Lower^+ \mapsto 9.1 \}},
the costs for these two patterns shown above are:
\begin{equation*}
\begin{split}
    \costP(\pattern_1, \dataset) \;&=\; 8.2 \times \frac{\nicefrac{1}{4}+\nicefrac{1}{6}}{2} \,+\, 9.1 \times \frac{\nicefrac{3}{4}+\nicefrac{5}{6}}{2} \;=\; 8.9
    \\
    \costP(\pattern_2, \dataset) \;&=\; 8.2 \times \frac{\nicefrac{1}{4}+\nicefrac{1}{6}}{2} \,+\, 26.3 \times \frac{\nicefrac{1}{4}+\nicefrac{1}{6}}{2} \,+\, 9.1 \cdot \frac{\nicefrac{2}{4}+\nicefrac{4}{6}}{2} \;=\; 12.5
\end{split}
\end{equation*}

\noindent
$\pattern_1$ is chosen as best pattern, since $\costP(\pattern_1, \dataset) < \costP(\pattern_2, \dataset)$.
\end{exmp}

Note that although \HexDigit is a more specific character class compared to \Upper and \Lower,
we assign it a higher static cost to avoid strings like \stringliteral{face}
being described as \rep{\HexDigit}{4} instead of \rep{\Lower}{4}.
\rep{\HexDigit}{4} would be chosen over \rep{\Lower}{4} only if we observe some other string in the dataset,
such as \stringliteral{f00d}, which cannot be described using \rep{\Lower}{4}.


\paragraph{Static Cost ($\staticCost$) for Atoms}
Our learner \learnerP automatically assigns the static cost of a \patternmode{\tokConst_s} atom to be proportional to \resizebox{!}{0.8\baselineskip}{$\nicefrac{1}{|s|}$},
and the cost of a \patternmode{\tokClass^z_c} atom, with width $z \geqslant 1$,
to be proportional to \resizebox{!}{1.2\baselineskip}{$\frac{Q(\tokClass^0_c)}{z}$}.
However, static costs for other kinds of atoms must be provided by the user.

Static costs for our default atoms, listed in \cref{tab:lp-default-atoms},
were seeded with the values based on their estimated \emph{size} ---
the number of strings the atom may match.
Then they were penalized (e.g. the \HexDigit atom) with empirically decided penalties
to prefer patterns that are more \emph{natural} to users.
We describe our \emph{quality} measure for profiles in \cref{subsec:eval-profiling-accuracy},
which we have used to decide the penalties for the default atoms.
In future, we plan to automate the process of penalizing atoms by designing a learning procedure
which tries various perturbations to the seed costs to optimize profiling quality.




\section{Evaluation} \label{sec:evaluation}

\noindent
We now present experimental evaluation of the \FlashProfile tool which implements our technique,
focusing on the following key questions:
\begin{itemize}[topsep=3pt,itemsep=1pt,leftmargin=11mm]
    \item[(\,\cref{subsec:eval-syntactic-similarity}\,)]
         How well does our syntactic similarity measure capture similarity over real world entities?
    \item[(\,\cref{subsec:eval-profiling-accuracy}\,)]
         How accurate are the profiles?
         How do sampling and approximations affect their quality?
    \item[(\,\cref{subsec:eval-performance}\,)]
         How fast is \FlashProfile, and how does its performance depend on the various parameters?
    \item[(\,\cref{subsec:eval-case-studies}\,)]
         Are the profiles natural and useful?
         How do they compare to those from existing tools?
\end{itemize}

\paragraph{Implementation}
We have implemented \FlashProfile as a cross-platform \CSharp library built using Microsoft \PROSE~\citep{web.prose}.
It is now publicly available as part of the \PROSE \tool{NuGet} package.\nsp\footnoteNum{
  ~\FlashProfile has been publicly released as the \texttt{Matching.Text} module within the \PROSE library.
  For more information, please see: \url{https://microsoft.github.io/prose/documentation/matching-text/intro/}.
} All of our experiments were performed with \PROSE 2.2.0 and \tool{.NET Core} 2.0.0,
on an Intel i7 3.60\,GHz machine with 32\,GB RAM running 64-bit \tool{Ubuntu} 17.10.

\begin{wrapfigure}{r}{0.51\textwidth}\vspace*{12pt}
  \hspace*{-3.5pt}\centering
  \includegraphics[width=1.02\linewidth]{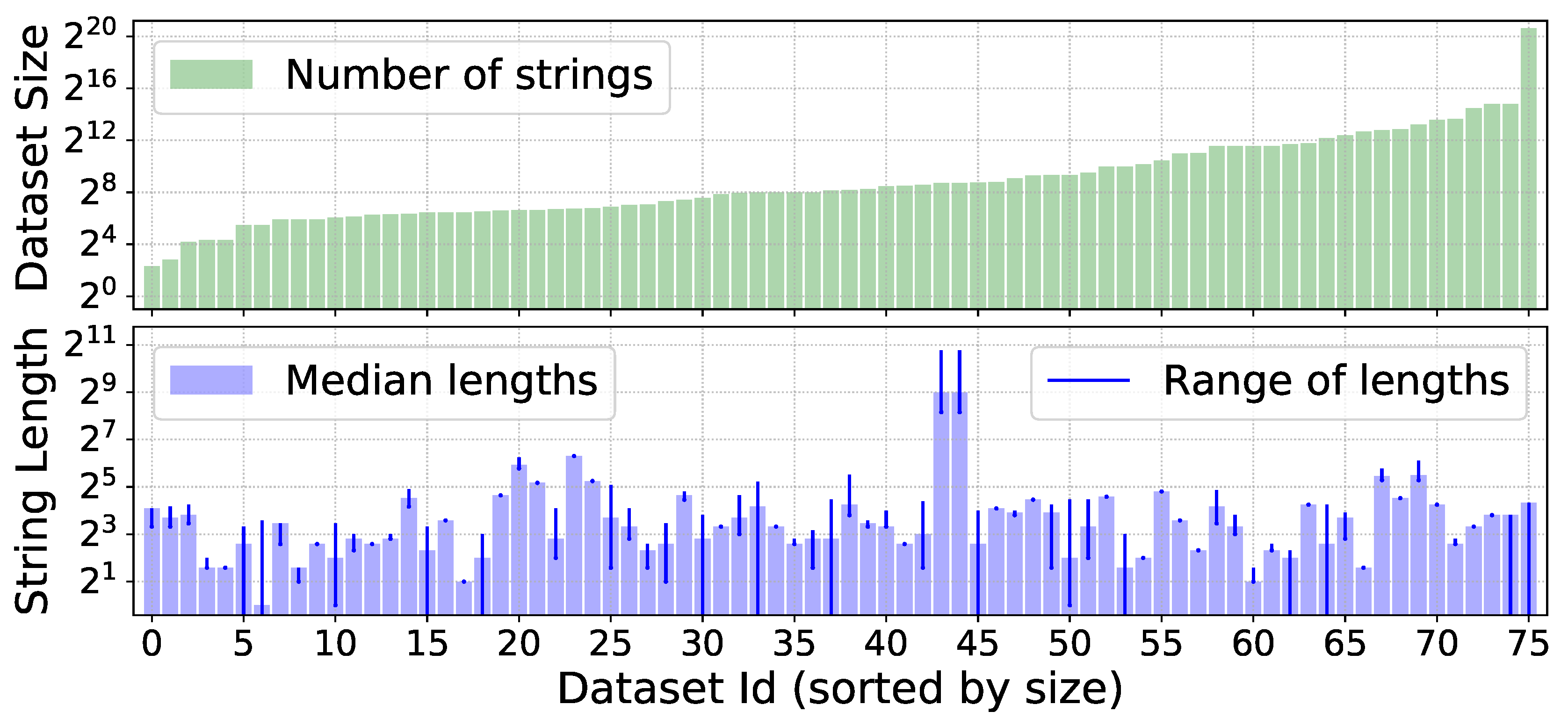}
  \captionsetup{skip=4pt}
  \caption{Number and length of strings across datasets\protect\FootnotemarkNum{\ref{foot:datasets}}}
  \label{fig:eval-data}
\end{wrapfigure}

\paragraph{Test Datasets}
We have collected $75$ datasets from public sources,\nsp\footnoteNum{\label{foot:datasets}
  All public datasets are available at: \redact{\github{tests}}.}
spanning various domains such as names, dates, postal codes, phone numbers, etc.
Their sizes and the distribution of string lengths is shown in \cref{fig:eval-data}.
We sort them into three (overlapping) groups:

\begin{itemize}[topsep=2.5pt,leftmargin=4mm]
  \item \dsetgroup{Clean} (25 datasets):
        Each of these datasets, uses a \emph{single format} that is \emph{distinct} from other datasets.
        We test syntactic similarity over them --- strings from the same dataset must be labeled as similar.
\end{itemize}

\begin{itemize}[topsep=0pt,leftmargin=4mm]
  \item \dsetgroup{Domains} (63 datasets):
        These datasets belong to \emph{mutually-distinct domains} but may exhibit multiple formats.
        We test the quality of profiles over them ---
        a profile learned over fraction of a dataset should match rest of it,
        but should not be too general as to also match other domains.
  \item \dsetgroup{All} (75 datasets):
        We test \FlashProfile's performance across all datasets.
\end{itemize}

\subsection{Syntactic Similarity}\label{subsec:eval-syntactic-similarity}

\noindent
We evaluate the applicability of our dissimilarity measure from \cref{defn:dissimilarity}, over real-life entities.
From our \dsetgroup{Clean} group, we randomly pick two datasets and select a random string from each.
A good similarity measure should recognize when the pair is drawn from the same dataset by
assigning them a lower dissimilarity value, compared to a pair from different datasets.
For example, the pair \stringPair{A. Einstein}{I. Newton} should have a lower dissimilarity value than
\stringPair{A. Einstein}{03/20/1998}.
We instantiated \FlashProfile with only the default atoms listed in \cref{tab:lp-default-atoms} and tested $240400$ such pairs.
\Cref{fig:eval-similarity} shows a comparison of our method against two simple baselines:
\begin{inlist}
  \item a character-wise edit-distance-based similarity measure (\texttt{JarW}), and
  \item a machine-learned predictor (\SimBaseline) over intuitive syntactic features.
\end{inlist}

For evaluation, we use the standard precision-recall (PR)~\citep{manningir} measure.
In our context, precision is the fraction of pairs that truly belongs to the same dataset,
out of all pairs that are labeled to be ``similar'' by the predictor.
Recall is the fraction of pairs retrieved by the predictor, out of all pairs truly drawn from same datasets.
By varying the threshold for labelling a pair as ``similar'', we generate a PR curve
and measure the area under the curve (AUC).
A good similarity measure should exhibit high precision and high recall, and therefore have a high AUC.

\addtocounter{footnoteNum}{-1}

\begin{wrapfigure}{r}{0.5\textwidth}
    \vspace*{-2pt}\centering\hspace*{-10pt}
    \subcaptionbox{Precision-Recall curves\label{fig:eval-pr-curves}}{
        \centering
        \includegraphics[width=0.725\linewidth]{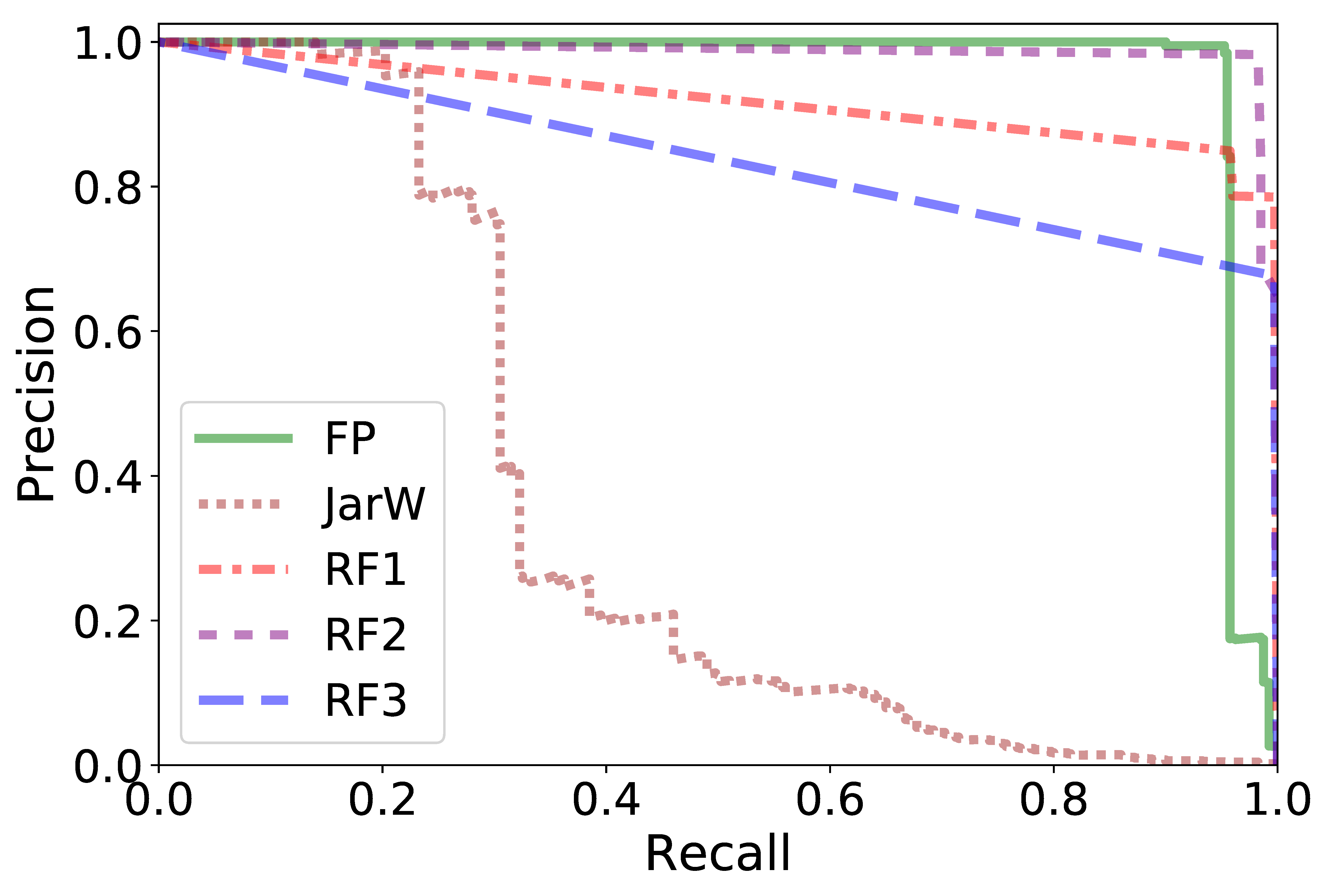}
    }\subcaptionbox{Features\protect\footnotemarkNum\label{tab:baseline-features}}{
        \centering\hspace*{-8pt}\smaller[2]
        \def\arraystretch{1.1}
        \begin{tabular}{|C{6em}|}\hline
        $\Delta[\strlen]$\\
        $\Delta[\strcount\langle\Digit\rangle]$\\
        $\Delta[\strcount\langle\Lower\rangle]$\\
        $\Delta[\strcount\langle\Upper\rangle]$\\
        $\Delta[\strcount\langle\;\Space\,\rangle]$\\
        $\Delta[\strcount\langle\stringliteral{.}\rangle]$\\
        $\Delta[\strcount\langle\stringliteral{,}\rangle]$\\
        $\Delta[\strcount\langle\stringliteral{-}\rangle]$\\
        $\strbegin\langle\Upper\rangle$\\
        $\strbegin\langle\Lower\rangle$\\
        $\strbegin\langle\Digit\rangle$\\\hline
        \end{tabular}
    }\\[3pt]\hspace*{-18pt}
    \begin{subfigure}[b]{1.15\linewidth}
        \centering\smaller[1.5]
        \def\arraystretch{1.05}
        \begin{tabularx}{0.875\linewidth}{|X|*{5}{C{2.6em}|}}\hline
            \centering\cellcolor{bgcolor}\emph{Predictor} &
            \cellcolor{bgcolor}\FP &
            \cellcolor{bgcolor}\tool{JarW} &
            \cellcolor{bgcolor}$\SimBaseline_1$ &
            \cellcolor{bgcolor}$\SimBaseline_2$ &
            \cellcolor{bgcolor}$\SimBaseline_3$ \\\hline
            \centering{}\cellcolor{bgcolor}\emph{AUC} &
            96.28\% &
            35.52\% &
            91.73\% &
            98.71\% &
            76.82\%\\\hline
        \end{tabularx}
    \end{subfigure}
    \caption{Similarity prediction accuracy of \FlashProfile (\FP)
             \emph{vs.} a character-based measure (\texttt{JarW)},
             and random forests ({\small$\SimBaseline_{1\ldots3}$})
             trained on different distributions\vspace{-6pt}}
    \label{fig:eval-similarity}
\end{wrapfigure}

\footnotetextNum{~$\strlen$ returns string length, $\strbegin\langle{}X\rangle$ checks if both strings begin with a character in $X$, $\strcount\langle{}X\rangle$ counts occurrences of characters from $X$ in a string, and $\Delta[f]$ computes $|f(s_1) - f(s_2)|^2$ for a pair of strings $s_1$ and $s_2$.}

First, we observed that character-based measures~\citep{gomaa2013survey} show poor AUC,
and are not indicative of syntactic similarity.
Levenshtein distance~\citep{levenshtein1966binary}, used for string clustering by \OpenRefine~\citep{web.openrefine}, a popular data-wrangling tool,
exhibits a negligible AUC over our benchmarks.
Although the Jaro-Winkler distance~\citep{winkler1999state}, indicated as \texttt{JarW} in \cref{fig:eval-pr-curves},
shows a better AUC, it is quite low compared to both our and machine-learned predictors.

Our second baseline is a standard random forest~\citep{breiman2001random} model \SimBaseline using the syntactic features
listed in \cref{tab:baseline-features}, such as difference in length, number of digits, etc.
We train ${\SimBaseline}_1$ using $\sim 80,000$ pairs with $\big(\nicefrac{1}{25}\big)^2 = 0.16\%$ pairs drawn from same datasets.
We observe from \cref{fig:eval-pr-curves} that the accuracy of \SimBaseline is quite susceptible to changes in the
distribution of the training data.
${\SimBaseline}_2$ and ${\SimBaseline}_3$ were trained with $0.64\%$ and $1.28\%$ pairs from same datasets, respectively.
While ${\SimBaseline}_2$ performs marginally better than our predictor, ${\SimBaseline}_1$ and ${\SimBaseline}_3$ perform worse.

\subsection{Profiling Accuracy}\label{subsec:eval-profiling-accuracy}

\noindent
We demonstrate the accuracy of \FlashProfile along two dimensions:
\begin{itemize}[leftmargin=5mm]
  \item \emph{Partitions:} Our sampling and approximation techniques preserve partitioning accuracy
  \item \emph{Descriptions:} Profiles generated using \learnerP and \costP are natural, not overly specific or general.
\end{itemize}
For these experiments, we used \FlashProfile with only the default atoms.

\paragraph{Partitioning}
For each $c \in \{2,\ldots,8\}$, we measure \FlashProfile's ability to repartition $256c$ strings ---
$256$ strings collected from each of $c$ randomly picked datasets from \dsetgroup{Clean}.
Over $10$ runs for each $c$, we pick different sets of $c$ files,
shuffle the $256c$ strings, and invoke \FlashProfile to partition them into $c$ clusters.
For a fair distribution of strings across each run, we ignore one dataset from the \dsetgroup{Clean} group which had much longer strings (> 1500 characters)
compared to other datasets (10 -- 100 characters).
We experiment with different values of $1.0 \leqslant \pointSampleFactor \leqslant 5.0$
(\emph{string-sampling factor}, which controls the size of chunks given to the core \ProfilingAlgo method),
and $1.0 \leqslant \edgeSampleFactor \leqslant 3.0$
(\emph{pattern-sampling factor}, which controls the approximation during hierarchical clustering).

We measure the precision of clustering using \emph{symmetric uncertainty}~\citep{witten2016data},
which is a measure of normalized mutual information (NMI).
An NMI of 1 indicates the resulting partitioning to be identical to the original clusters,
and an NMI of 0 indicates that the final partitioning is unrelated to the original one.
For each $\langle\pointSampleFactor,\edgeSampleFactor\rangle$-configuration,
we show the mean NMI of the partitionings over $10c$ runs ($10$ for each value of $c$), in \cref{fig:eval-nmi}.
The NMI improves with increasing \edgeSampleFactor, since we sample more dissimilarities, resulting in better approximations.
However, the NMI drops with increasing \pointSampleFactor, since more pairwise dissimilarities are approximated.
Note that the number of string pairs increases quadratically with \pointSampleFactor,
but reduces only linearly with \edgeSampleFactor.
This is reflected in \cref{fig:eval-nmi} -- for $\pointSampleFactor > 4.0$,
the partitioning accuracy does not reach 1.0 even for $\edgeSampleFactor = 3.0$.
\FlashProfile's default configuration $\langle \pointSampleFactor = 4.0, \edgeSampleFactor = 1.25 \rangle$,
achieves a median NMI of 0.96 (mean 0.88) (indicated by a circled point).
The dashed line indicates the median NMIs with $\pointSampleFactor = 4.0$.
The median NMI is significantly higher than the mean, indicating our approximations were accurate in most cases.
As we explain below in \cref{{subsec:eval-performance}}, with $\langle \pointSampleFactor = 4.0, \edgeSampleFactor = 1.25\rangle$,
\FlashProfile achieves the best performance \emph{vs.} accuracy trade-off.

\begin{figure}[t]
    \centering
    \includegraphics[width=0.9\linewidth]{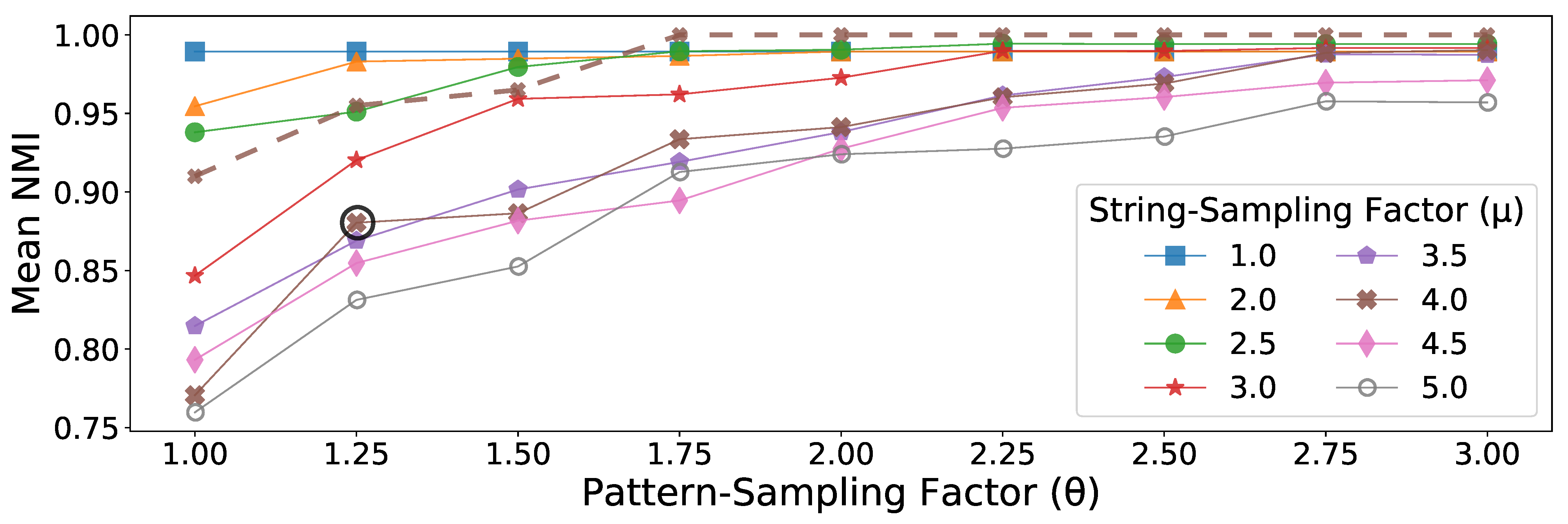}
    \captionsetup{skip=1pt}
    \caption{\FlashProfile's partitioning accuracy with different $\langle\pointSampleFactor,\edgeSampleFactor\rangle$-configurations\vspace*{-6pt}}
    \label{fig:eval-nmi}
\end{figure}


\paragraph{Descriptions}
We evaluate the suitability of the automatically suggested profiles, by measuring their overall precision and recall.
A natural profile should not be too specific -- it should generalize well over the dataset (high true positives),
but not beyond it (low false positives).

\begin{wrapfigure}{r}{0.5\textwidth}
    \vspace*{-1pt}\hspace*{-4pt}\centering
    \includegraphics[width=\linewidth]{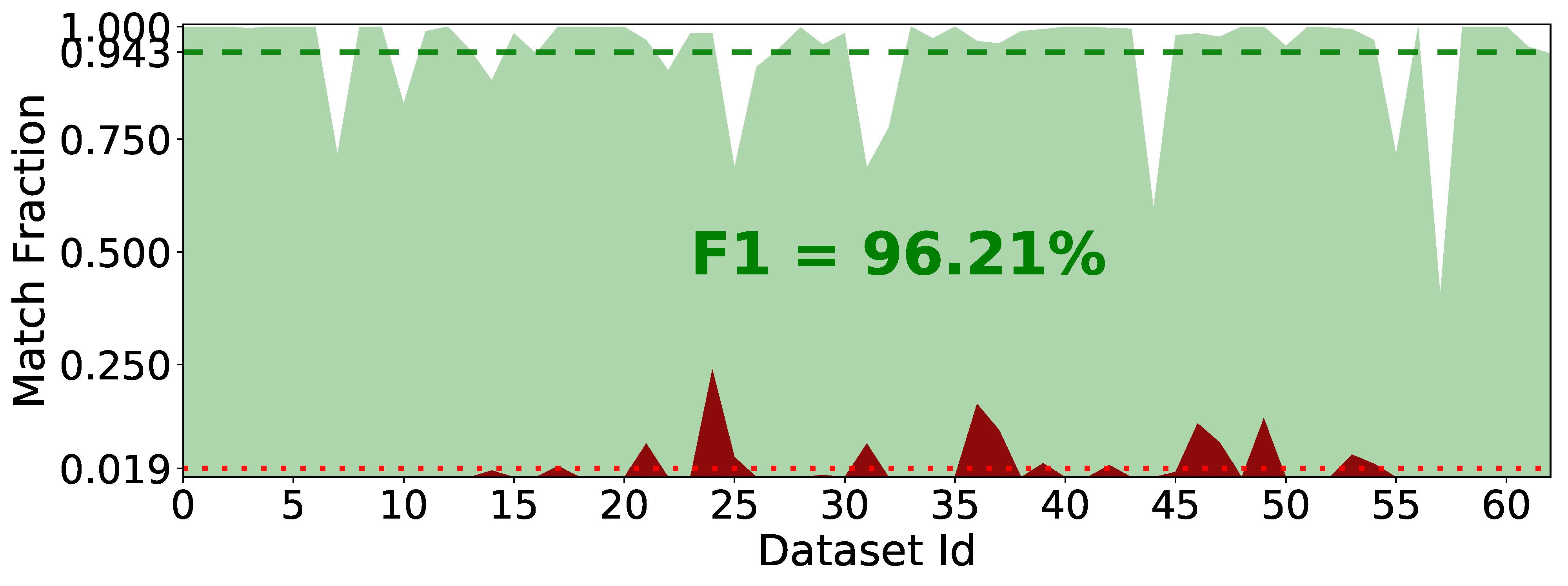}
    \captionsetup{skip=0pt}
    \caption{Quality of descriptions at $\langle \pointSampleFactor = 4.0, \edgeSampleFactor = 1.25 \rangle$\vspace*{4pt}}
    \label{fig:eval-overall-pr}
\end{wrapfigure}

For each dataset in our \dsetgroup{Domains}, we profile a randomly selected 20\% of its strings, and measure:
\begin{inlist}
  \item the fraction of the remaining dataset described by it, and
  \item the fraction of an equal number of strings from other datasets, matched by it. 
\end{inlist}
\Cref{fig:eval-overall-pr} summarizes our results.
The lighter and darker shades indicate the fraction of true positives and false positives respectively.
The white area at the top indicates the fraction of false negatives --
the fraction of the remaining 80\% of the dataset that is not described by the profile.
We observe an overall precision of $97.8\%$, and a recall of $93.4\%$.
The dashed line indicates a mean true positive rate of $93.2\%$,
and the dotted line shows a mean false positive rate of $2.3\%$; across all datasets.

We also perform similar quality measurements for profiles generated by
\tool{Microsoft SSDT}~\citep{web.ssdt} and \Ataccama~\citep{web.ataccama}.
We use ``Column Pattern Profiling Tasks'' with {\small\texttt{PercentageDataCoverageDesired}} = 100 within \SSDT,
and ``Pattern Analysis'' feature within the \Ataccama platform.
We summarize the per-dataset description quality for \SSDT in \cref{fig:ssdt-quality},
and for \Ataccama in \cref{fig:ataccama-quality}.
We observe a low overall F1 score for both tools.

\begin{figure}[t]
    \begin{minipage}{0.495\textwidth}
        \centering
        \subcaptionbox{\tool{Microsoft SSDT}~\citep{web.ssdt}\label{fig:ssdt-quality}}{
            \hspace*{-2pt}\includegraphics[width=\linewidth]{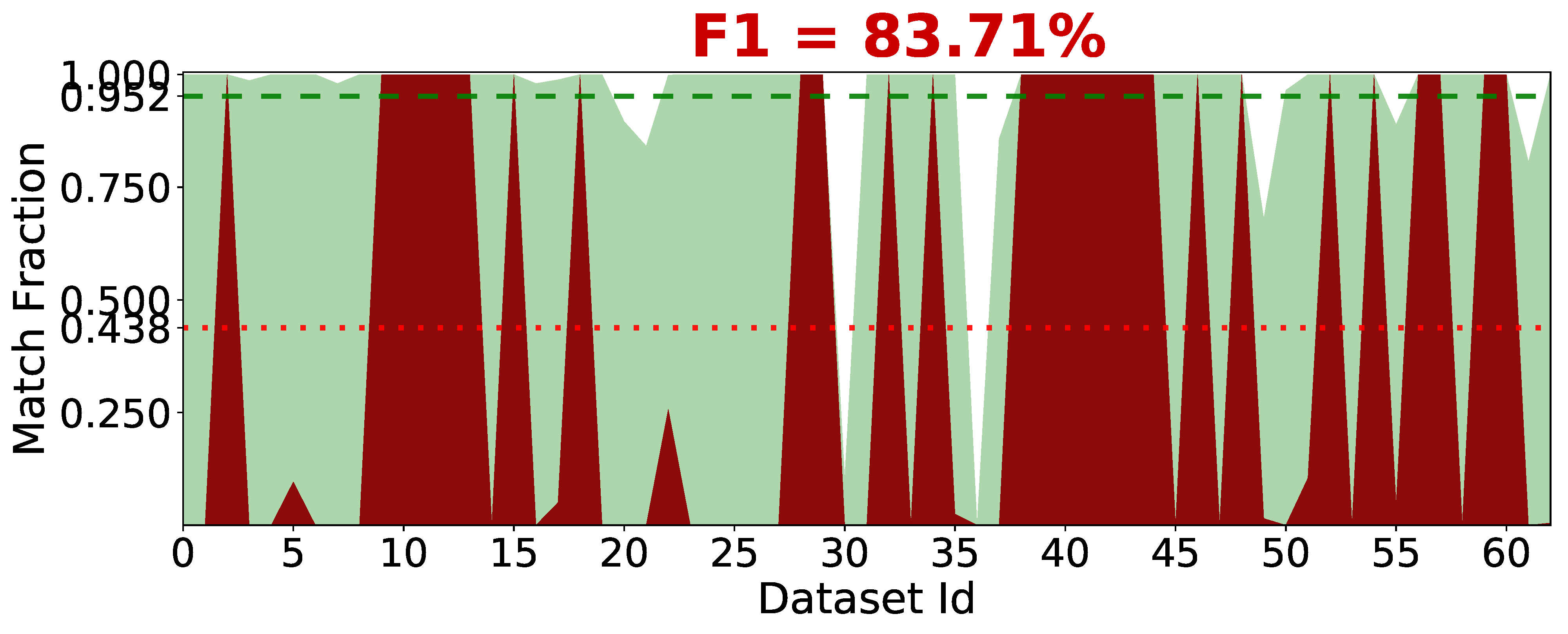}
        }
    \end{minipage}\hfill\begin{minipage}{0.495\textwidth}
        \centering
        \subcaptionbox{\Ataccama~\citep{web.ataccama}\label{fig:ataccama-quality}}{
            \hspace*{-2pt}\includegraphics[width=\linewidth]{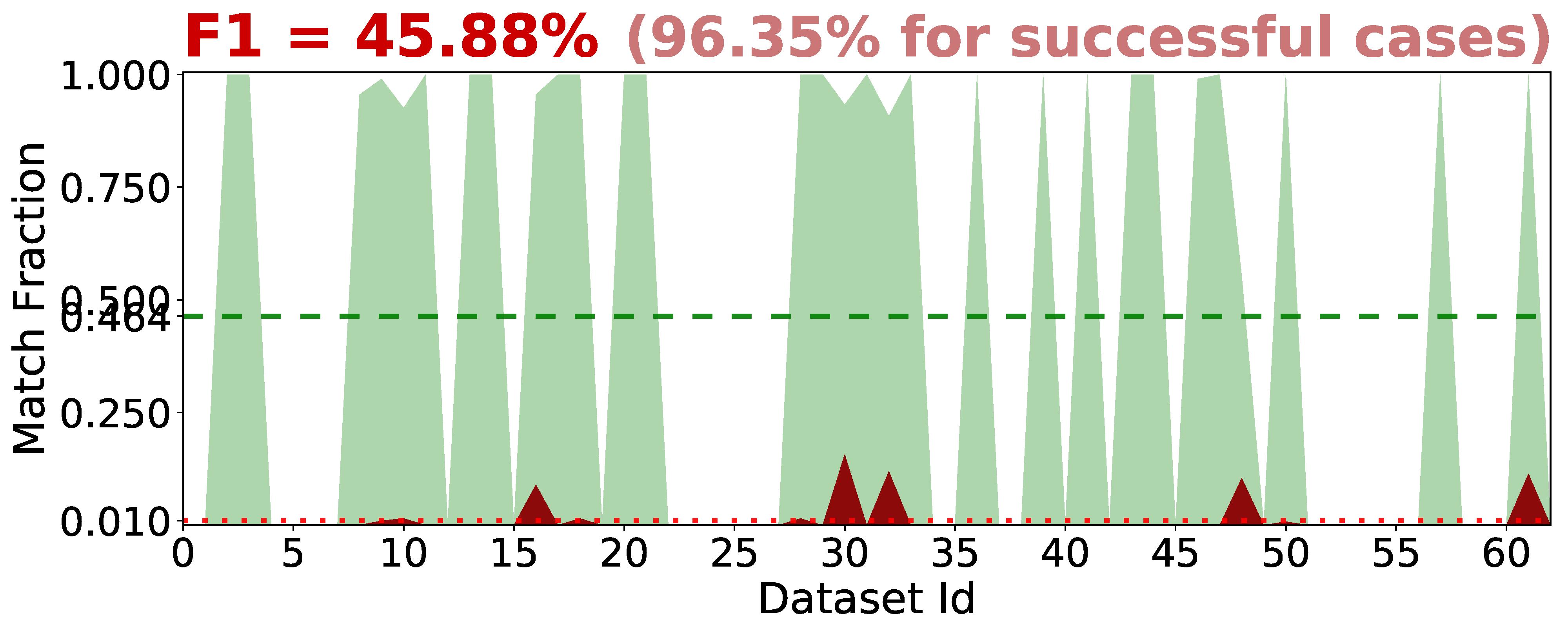}
        }
    \end{minipage}
    \captionsetup{skip=2pt}
    \caption{Quality of descriptions from current state-of-the-art tools\vspace*{-10pt}}
    \label{fig:eval-other-mismatch}
\end{figure}

While \SSDT has a very high false positive rate, \Ataccama has a high failure rate.
For $27$ out of $63$ datasets, \SSDT generates \stringliteral{.*} as one of the patterns,
and it fails to profile one dataset that has very long strings (up to $1536$ characters).
On the other hand, \Ataccama fails to profile $33$ datasets.
But for the remaining $30$ datasets, the simple atoms (digits, numbers, letters, words)
used by \Ataccama seem to work well --- the profiles exhibit high precision and recall.
Note that, this quantitative measure only captures the specificity of profiles, but not their readability.
We present a qualitative comparison of profiles generated by these tools in \cref{subsec:eval-case-studies}.

\begin{figure}[b]
    \vspace*{-12pt}\centering
    \subcaptionbox{Mean Profiling Time\label{fig:eval-perf-sampling}}{
        \includegraphics[width=0.56\linewidth]{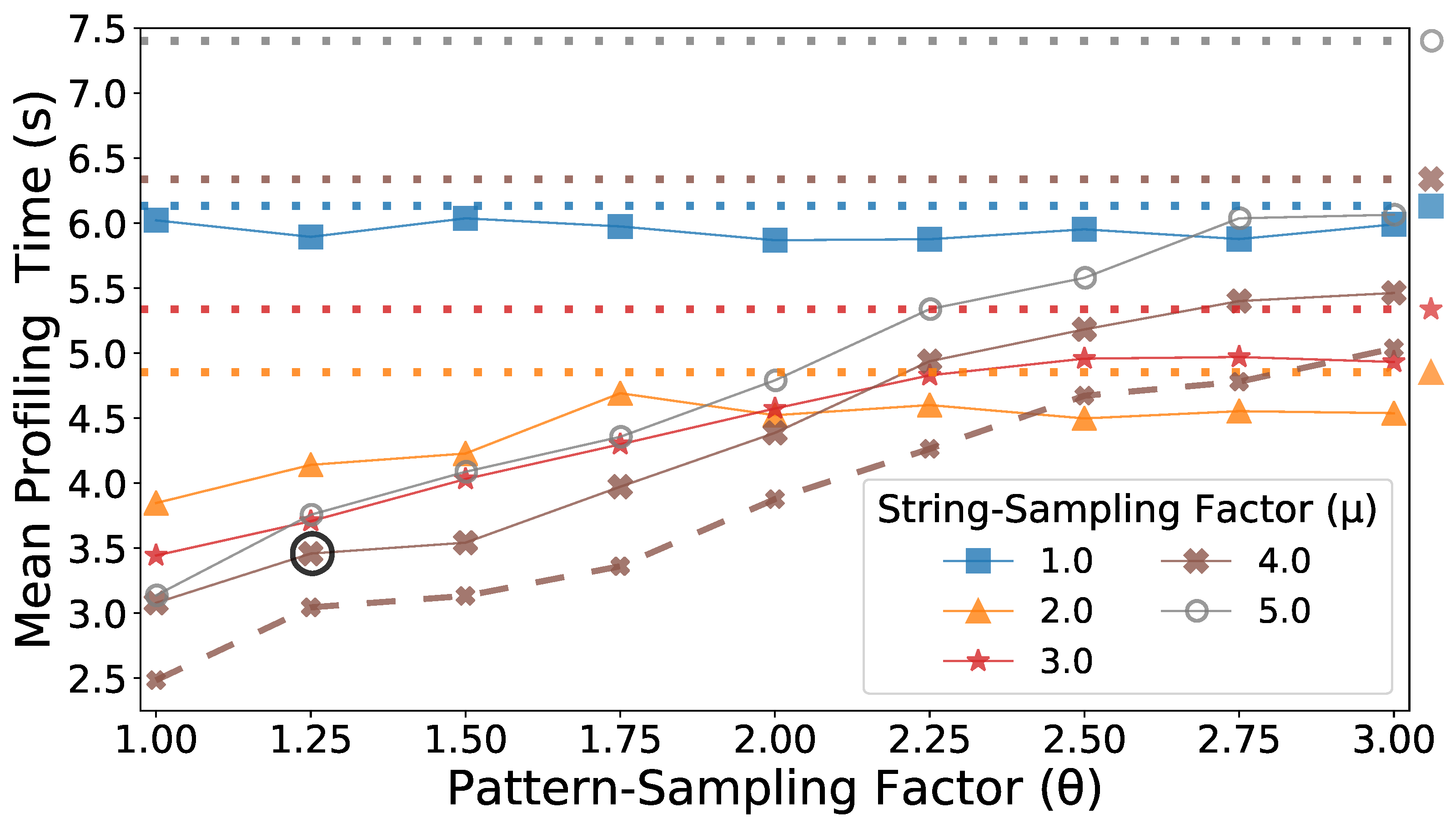}\vspace*{-2pt}
    }\hspace*{16pt}\subcaptionbox{Performance $\sim$ Accuracy\label{fig:eval-perf-nmi}}{
        \includegraphics[width=0.285\linewidth]{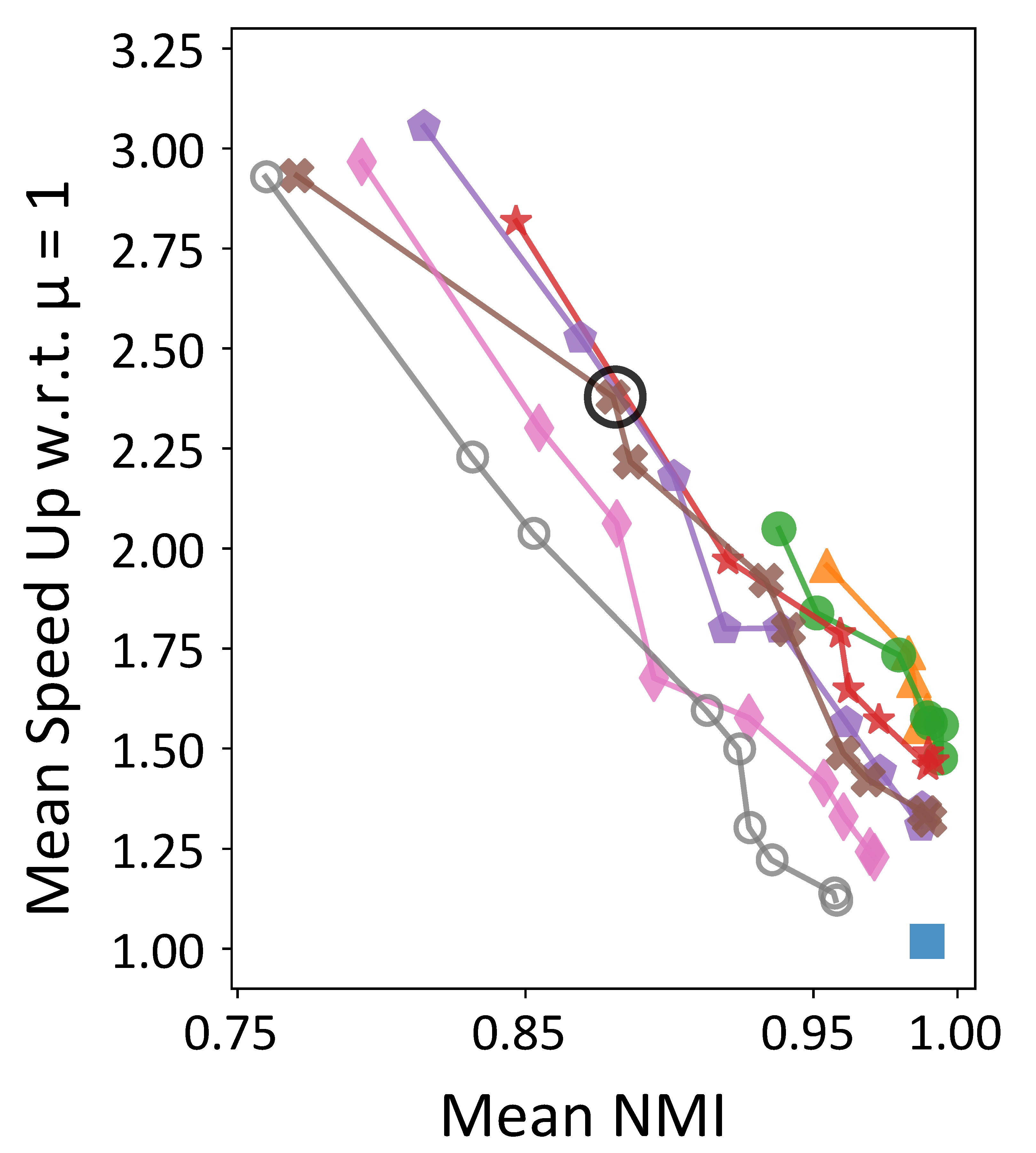}\vspace*{-2pt}
    }
    \captionsetup{skip=2pt}
    \caption{Impact of sampling on performance (using the same colors and markers as \cref{fig:eval-nmi})\vspace*{-8pt}}
    \label{fig:eval-perf}
\end{figure}

\subsection{Performance}\label{subsec:eval-performance}

\noindent
We measure the mean profiling time with various $\langle \pointSampleFactor, \edgeSampleFactor \rangle$-configurations,
and summarize our findings in \cref{fig:eval-perf-sampling}.
The dotted lines indicate profiling time without pattern sampling, i.e. $\edgeSampleFactor \rightarrow \infty$,
for different values of the \pointSampleFactor factor.
The dashed line shows the median profiling time for different values of \edgeSampleFactor
with our default $\pointSampleFactor = 4.0$.
We also show the performance-accuracy trade off in \cref{fig:eval-perf-nmi}
by measuring the mean speed up of each configuration w.r.t. $\langle \pointSampleFactor = 1.0, \edgeSampleFactor = 1.0 \rangle$.
We select the \emph{Pareto optimal} point $\langle \pointSampleFactor = 4.0, \edgeSampleFactor = 1.25 \rangle$
as \FlashProfile's default configuration.
It achieves a mean speed up of $2.3\times$ over $\langle \pointSampleFactor = 1.0, \edgeSampleFactor = 1.0 \rangle$,
at a mean NMI of $0.88$ (median = $0.96$).

As one would expect, the profiling time increases with \edgeSampleFactor, due to sampling more patterns and making more calls to \learnerP.
The dependence of profiling time on \pointSampleFactor however, is more interesting.
Notice that with $\pointSampleFactor = 1$, the profiling time is \emph{higher} than any other configurations,
when pattern sampling is enabled, i.e. $\edgeSampleFactor \neq \infty$ (solid lines).
This is due to the fact that \FlashProfile learns very specific profiles with $\pointSampleFactor = 1$ with very small samples of strings,
which do not generalize well over the remaining data.
This results in many \textsf{Sample}$-\ProfilingAlgo-$\textsf{Filter} iterations.
Also note that with pattern-sampling enabled,
the profiling time decreases with \pointSampleFactor until $\pointSampleFactor = 4.0$ as,
and then increases as profiling larger samples of strings becomes expensive.

Finally, we evaluate \FlashProfile's performance on end-to-end real-life profiling tasks on all $75$ datasets,
that have a mixture of clean and dirty datasets.
Over $153$ tasks -- $76$ for automatic profiling, and $77$ for refinement,
we observe a median profiling time of $0.7$\,s.
With our default configuration, $77\%$ of the requests are fulfilled within $2$ seconds --
$70\%$ of automatic profiling tasks, and $83\%$ of refinement tasks.
In \cref{fig:eval-perf-real} we show the variance of profiling times w.r.t. size of the datasets (number of strings in them),
and the average length of the strings in the datasets (all axes being logarithmic).
We observe that the number of string in the dataset doesn't have a strong impact on the profiling time.
This is expected, since we only sample smaller chunks of datasets,
and remove strings that are already described by the profile we have learned so far.
We repeated this experiment with $5$ dictionary-based custom atoms:
{\small$\Regex{DayName}$, $\Regex{ShortDayName}$, $\Regex{MonthName}$, $\Regex{ShortMonthName}$, $\Regex{US\_States}$},
and noticed an increase of $\sim0.02$\,s in the median profiling time.

\subsection{Comparison of Learned Profiles}\label{subsec:eval-case-studies}

\addtocounter{footnoteNum}{-1}

\begin{figure}[t]
    \begin{minipage}{0.525\linewidth}
      \vspace*{3.5pt}\centering
      \hspace*{-3pt}\includegraphics[width=0.965\linewidth]{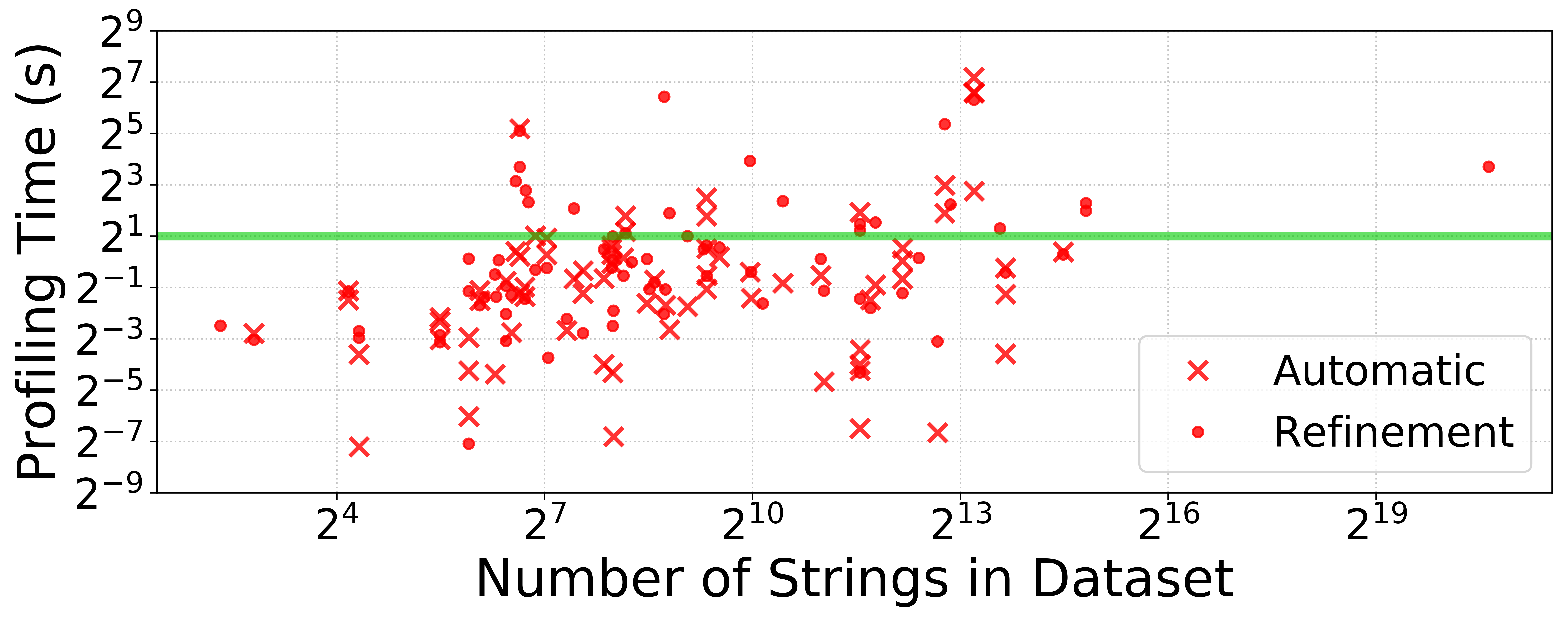}\\[-2.5pt]
      \hspace*{-3pt}\includegraphics[width=0.965\linewidth]{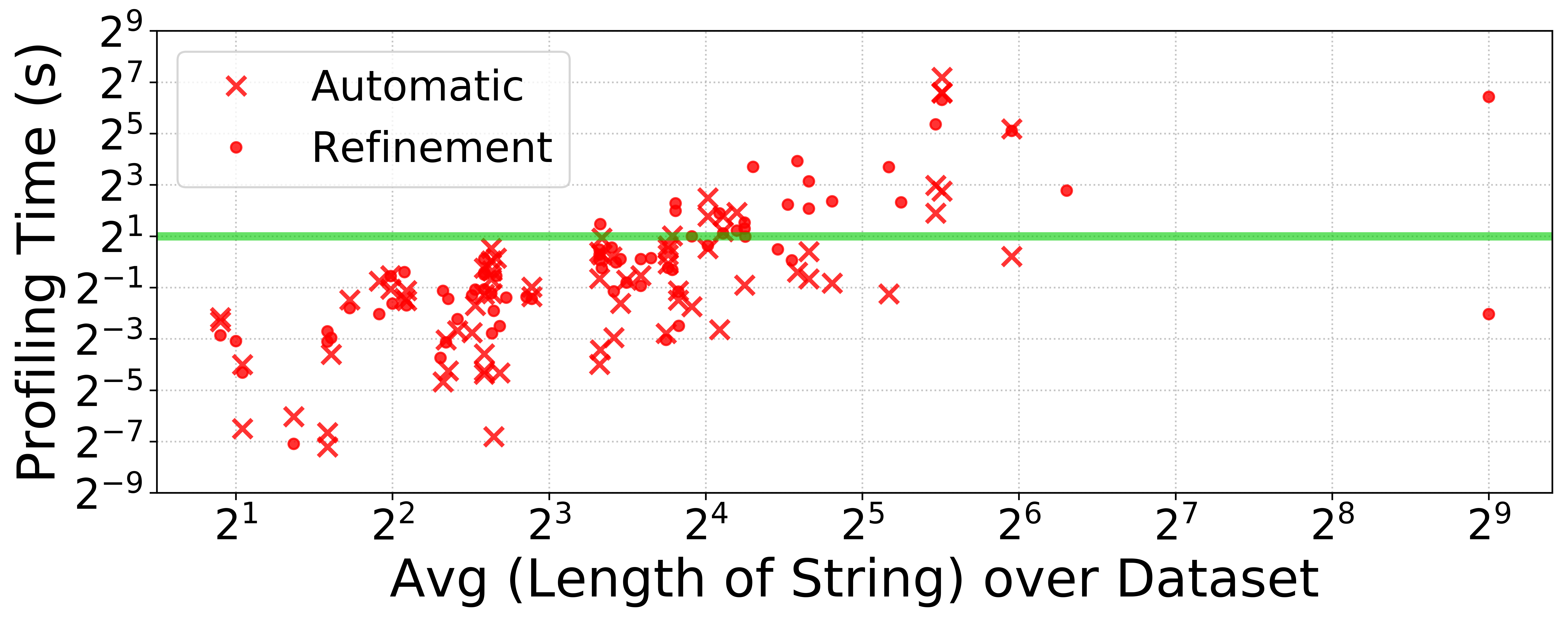}
      \captionsetup{skip=1.5pt}
      \caption{Performance over real-life datasets}
      \label{fig:eval-perf-real}
    \end{minipage}%
    \begin{minipage}{0.48\linewidth}\vspace*{4pt}
      \centering
      \begin{minipage}{0.215\linewidth}
        \hspace*{-8pt}\centering
        \subcaptionbox{Dataset}{
          \hspace*{-5pt}\centering\smaller[1.5]
          \texttt{
            \def\arraystretch{1.35}
            \begin{tabular}{|l|} \hline
            \rowcolor{bordercolor}
            \textsf{Zip Code} \\\hline
            99518\\\hline
            61021-9150\\\hline
            2645\\\hline
            83716\\\hline
            \rule{0pt}{3ex}{\small\smash{$\vdots$}}\\\hline
            K0K 2C0\\\hline
            14480\\\hline
            S7K7K9\\\hline
          \end{tabular}}}
      \end{minipage}%
      \begin{minipage}{0.38\linewidth}
        \centering
        \subcaptionbox{\One}{
          \centering\smaller[1.5]
          \texttt{
            \def\arraystretch{1.25}
            \begin{tabular}{|c|} \hline
              LDL DLD\\\hline
              LDLDLD\\\hline
              N-N\\\hline
              N\\\hline
            \end{tabular}}}\\[10pt]
        \subcaptionbox{\SSDT}{
        \centering\smaller[1.5]
        \texttt{
          \def\arraystretch{1.25}
          \begin{tabular}{|c|} \hline
            \bs{}w\bs{}w\bs{}w \bs{}w\bs{}w\bs{}w\\\hline
            \bs{}d\bs{}d\bs{}d\bs{}d\bs{}d\\\hline
            \bs{}d\bs{}d\bs{}d\bs{}d\\\hline
            .*\\\hline
          \end{tabular}}}
      \end{minipage}%
      \begin{minipage}{0.35\linewidth}
        \centering
        \subcaptionbox{\FP\label{fig:profile-us-canada-zip-fp}}{
          \centering\smaller[1.5]
          \def\arraystretch{1.01}
          \begin{tabular}{|c|} \hline
            $\sUpper \sconcat \sDigit \sconcat \sUpper \sconcat \sSpace \sconcat \sDigit \sconcat \sUpper \sconcat \sDigit$ \\\hline
            $\sstringliteral{61} \sconcat \srep{\sDigit}{3} \sconcat \sstringliteral{-} \sconcat \srep{\sDigit}{4}$ \\\hline
            $\sstringliteral{S7K7K9}$ \\\hline
            $\sDigit^+$ \\\hline
            $\epsilon$ \\\hline
          \end{tabular}}\\[2pt]
        \subcaptionbox{\FPk{6}}{
          \centering\smaller[1.5]
          \def\arraystretch{1.01}
          \begin{tabular}{|c|} \hline
            $\sUpper \sconcat \sDigit \sconcat \sUpper \sconcat \sSpace \sconcat \sDigit \sconcat \sUpper \sconcat \sDigit$ \\\hline
            $\sstringliteral{61} \sconcat \srep{\sDigit}{3} \sconcat \sstringliteral{-} \sconcat \srep{\sDigit}{4}$ \\\hline
            $\sstringliteral{S7K7K9}$ \\\hline
            $\srep{\sDigit}{5}$ \\\hline
            $\srep{\sDigit}{4}$ \\\hline
            $\epsilon$ \\\hline
          \end{tabular}}
      \end{minipage}\\[2pt]
      {\small Most frequent pattern from \PottersWheel = \texttt{int}}
      \caption{Profiles for a dataset with zip codes\protect\footnotemarkNum\label{fig:profile-us-canada-zip}}
  \end{minipage}%
  \vspace*{-6pt}
\end{figure}

\footnotetextNum{~Dataset collected from a database of vendors across US and Canada: \url{https://goo.gl/PGS2pL}}

\noindent
We compare the profiles learned by \FlashProfile to the outputs from 3 state-of-the-art tools:
\begin{inlist}
  \item \Ataccama~\citep{web.ataccama}: a dedicated profiling tool,
  \item \tool{Microsoft}'s SSDT~\citep{web.ssdt} a feature-rich IDE for database applications, and
  \item \PottersWheel~\citep{raman2001potter}: a tool that detects the most frequent data pattern
        and predicts anomalies in data.
\end{inlist}
\Cref{fig:profile-us-canada-zip} and \cref{fig:profile-us-routes} show the observed outputs.
We list the output of \Ataccama against \One, the suggested profile from \FlashProfile against \FP,
and the one generated on requesting $k$ patterns against \FPk{k}. For brevity, we
\begin{inlist}
  \item omit the concatenation operator ``$\concat$'' between atoms, and
  \item abbreviate $\Digit \mapsto \sDigit$, $\Upper \mapsto \sUpper$, $\AlphaSpace \mapsto \sAlphaSpace$,
        $\AlphaDigitSpace \mapsto \sAlphaDigitSpace$.
\end{inlist}

First, we observe that \SSDT generates an overly general \stringliteral{.*} pattern in both cases.
\Ataccama generates a very coarse grained profile in both cases, which although explains the pattern of special characters,
does not say much about other syntactic properties, such as common prefixes, or fixed-length patterns.
With \FlashProfile, one can immediately notice in \cref{fig:profile-us-canada-zip-fp},
that \stringliteral{S7K7K9} is the only Canadian zip code which does not have a space in the middle,
and that some US zip codes have 4 digits instead of 5 (probably the leading zero was lost while interpreting it as a number).
Similarly, one can immediately observe that in \cref{fig:profile-us-routes-fp},
\stringliteral{12348 N CENTER} is not a route.
Similarly the pattern $\sstringliteral{US 26(} \sconcat \sAlphaSpace^+ \sconcat \sstringliteral{)}$
indicates that it is the only entry with a space instead of a dash between the \stringliteral{US} and \stringliteral{26}.

In many real-life scenarios, simple statistical profiles are not enough for data understanding or validation.
\FlashProfile allows users to gradually drill into the data by requesting profiles with a desired granularity.
Furthermore, they may also provide custom atoms for domain-specific profiling.

\addtocounter{footnoteNum}{-1}

\begin{figure}[t]
    \vspace*{5pt}\centering
    \begin{minipage}{0.21\linewidth}
      \centering
      \subcaptionbox{Dataset}{
        \centering\smaller[1.5]
        \texttt{
          \def\arraystretch{1.125}
          \begin{tabular}{|l|} \hline
          \rowcolor{bordercolor}
          \textsf{Routes} \\\hline
          OR-213\\\hline
          I-5 N\\\hline
          I-405 S\\\hline
          OR-141\\\hline
          \rule{0pt}{3ex}{\small\smash{$\vdots$}}\\\hline
          OR-99E\\\hline
          US-26 E\\\hline
          12348 N CENTER\\\hline
          US-217 S\\\hline
          \rule{0pt}{3ex}{\small\smash{$\vdots$}}\\\hline
          I-84 E\\\hline
          US 26(SUNSET)\\\hline
          OR-224\\\hline
        \end{tabular}}}
    \end{minipage}%
    \begin{minipage}{0.19\linewidth}
      \centering
      \subcaptionbox{\One}{
        \centering\smaller[1.5]
        \texttt{
          \def\arraystretch{0.95}
          \begin{tabular}{|c|} \hline
            N L W\\\hline
            W N (W)\\\hline
            W N (W W W)\\\hline
            W-N\\\hline
            W-NW\\\hline
            W-N W\\\hline
          \end{tabular}}}\\[6pt]
      \subcaptionbox{\SSDT}{
        \centering\smaller[1.5]
        \texttt{
          \def\arraystretch{0.975}
          \begin{tabular}{|c|} \hline
            US-26 E\\\hline
            US-26 W\\\hline
            I-5 N\\\hline
            I-5 S\\\hline
            I-84 E\\\hline
            I-84 W\\\hline
            I-\bs{}d\bs{}d\bs{}d N\\\hline
            I-\bs{}d\bs{}d\bs{}d S\\\hline
            .*\\\hline
          \end{tabular}}}
    \end{minipage}%
    \begin{minipage}{0.26\linewidth}
      \centering
      \subcaptionbox{\FP\label{fig:profile-us-routes-fp}}{
        \centering\smaller[1.5]
        \def\arraystretch{1.325}
        \begin{tabular}{|c|} \hline
          $\sstringliteral{12348 N CENTER}$ \\\hline
          $\sstringliteral{US 26(} \sconcat \sAlphaSpace^+ \sconcat \sstringliteral{)}$\\\hline
          $\sUpper^+ \sconcat \sstringliteral{-} \sconcat \sAlphaDigitSpace^+$\\\hline
          $\epsilon$ \\\hline
        \end{tabular}}\\[7pt]
      \subcaptionbox{\FPk{7}}{
        \centering\smaller[1.5]
        \def\arraystretch{1.325}
        \begin{tabular}{|c|} \hline
          $\sstringliteral{12348 N CENTER}$\\\hline
          $\sstringliteral{US 26(SUNSET)}$\\\hline
          $\sstringliteral{US 26(MT HOOD HWY)}$\\\hline
          $\sUpper^+\sconcat\sstringliteral{-}\sconcat\sDigit^+$\\\hline
          $\srep{\sUpper}{2}\sconcat\sstringliteral{-}\sconcat\srep{\sDigit}{2}\sconcat\sUpper^+$\\\hline
          $\sUpper^+\sconcat\sstringliteral{-}\sconcat\sDigit^+\sconcat\!\sSpace\sconcat\sUpper^+$\\\hline
          $\epsilon$ \\\hline
        \end{tabular}}
    \end{minipage}%
    \begin{minipage}{0.34\linewidth}
      \centering
      \subcaptionbox{\FPk{9}}{
        \centering\smaller[1.5]
        \def\arraystretch{1.225}
        \begin{tabular}{|c|c|} \hline
          $\sstringliteral{US-30BY}$ & $\sstringliteral{12348 N CENTER}$\\\hline
          $\epsilon$ & $\sstringliteral{US 26(SUNSET)}$\\\hline
          $\sUpper^+\sconcat\!\sstringliteral{-}\,\sconcat\sDigit^+$ &
          $\sstringliteral{OR-99}\sconcat\srep{\sUpper}{1}$\\\hline
          $\sstringliteral{I-}\sconcat\sDigit^+\sconcat\!\sSpace\sconcat\sUpper^+$ &
          $\srep{\sUpper}{2}\sconcat\sstringliteral{-2}\sconcat\sDigit^+\sconcat\!\sSpace\sconcat\srep{\sUpper}{1}$\\\hline
          \multicolumn{2}{|c|}{$\sstringliteral{US 26(MT HOOD HWY)}$}\\\hline
        \end{tabular}}\\[5pt]
      \subcaptionbox{\FPk{13}}{
        \centering\smaller[1.5]
        \def\arraystretch{1.225}
        \begin{tabular}{|c|c|} \hline
          $\sstringliteral{US-30BY}$ & $\sstringliteral{12348 N CENTER}$\\\hline
          $\sstringliteral{I-5}$ & $\sstringliteral{US-26}\sconcat\!\sSpace\sconcat\srep{\sUpper}{1}$ \\\hline
          $\sstringliteral{US-30}$ & $\sstringliteral{US 26(SUNSET)}$\\\hline
          $\sstringliteral{OR-}\sconcat\sDigit^+$ & $\sstringliteral{OR-99}\sconcat\srep{\sUpper}{1}$\\\hline
          $\sstringliteral{I-5}\sconcat\sSpace\sconcat\sUpper^+$ &
          $\sstringliteral{I-}\sconcat\sDigit^+\sconcat\!\sSpace\sconcat\srep{\sUpper}{1}$\\\hline
          $\epsilon$& $\sstringliteral{OR-217}\sconcat\!\sSpace\sconcat\srep{\sUpper}{1}$\\\hline
          \multicolumn{2}{|c|}{$\sstringliteral{US 26(MT HOOD HWY)}$}\\\hline
        \end{tabular}}
    \end{minipage}\\[6pt]
    {\small Most frequent pattern from \PottersWheel = \texttt{IspellWord int space AllCapsWord}}
    \caption{Profiles for a dataset containing US routes\protect\footnotemarkNum\label{fig:profile-us-routes}\vspace*{-4pt}}
\end{figure}

\footnotetextNum{~Dataset collected from \url{https://portal.its.pdx.edu/fhwa}}




\section{Applications in PBE Systems}\label{sec:applications-in-pbe-systems}

\noindent
In this section, we discuss how syntactic profiles can improve
programming-by-example (PBE)~\citep{lieberman2001your, gulwani2017program} systems,
which synthesize a desired program from a small set of input-output examples.
For instance, given an example $\stringliteral{Albert Einstein} \tospec \stringliteral{A.E.}$,
the system should learn a program that extracts the initials for names.
Although many PBE systems exist today, most share criticisms on low usability
and confidence in them~\citep{lau2009programming,mayer2015user}.

Examples are an inherently under-constrained form of specifying the desired program behavior.
Depending on the target language, a large number of programs may be consistent with them.
Two major challenges faced by PBE systems today are:
\begin{inlist}
    \item obtaining a set of examples that accurately convey the desired behavior
          to limit the space of synthesized programs, and
    \item ranking these programs to select the ones that are \emph{natural} to users.
\end{inlist}

In a recent work, \citet{ellis2017learning} address \inliststyle{(2)} using data profiles.
They show that incorporating profiles for input-output examples significantly improves ranking,
compared to traditional techniques which only examine the structure of the synthesized programs.
We show that data profiles can also address problem \inliststyle{(1)}.
\citet{raychev2016learning} have presented \emph{representative data samplers} for synthesis scenarios,
but they require the outputs for all inputs.
In contrast, we show a novel interaction model for proactively requesting users to supply
the desired outputs for syntactically different inputs,
thereby providing a representative set of examples to the PBE system.

\paragraph{Significant Inputs}
Typically, users provide outputs for only the first few inputs of target dataset.
However, if these are not representative of the entire dataset,
the system may not learn a program that generalizes over other inputs.
Therefore, we propose a novel interaction model that requests the user
to provide the desired outputs for \emph{significant} inputs, incrementally.
A significant input is one that is syntactically the most dissimilar with
all previously labelled inputs.

We start with a syntactic profile \profile for the input dataset
and invoke the \OrderPartitionsAlgo function, listed in \cref{algo:order-parts},
to order the partitions identified in \profile based on mutual dissimilarity, i.e.
each partition $\dataset_i$ must be as dissimilar as possible
with (its most-similar neighbor within) the partitions $\{\dataset_1,\ldots,\dataset_{i-1}\}$.
It is a simple extension of our \SampleDissimilaritiesAlgo procedure (\cref{algo:sampling})
to work with sets of strings instead of strings.
We start with the partition that can be described with the minimum-cost pattern.
Then, from the remaining partitions,
we iteratively select the one that is most dissimilar to those previously selected.
We define the dissimilarity between two partitions as
the cost of the best (least-cost) pattern required to describe them together.

\begin{wrapfigure}{r}{0.5\textwidth}
    \vspace*{4pt}\algobox{1.2}{
    \begin{algfunction}
      {\WithLCParam{\OrderPartitions}}
      {\profile\colon \Tprofile}
      {A sequence of partitions $\langle \dataset_1, \ldots, \dataset_{|\profile|} \rangle$ over \dataset}
      \vspace{-2pt}
      \LeftComment{Select with the partition that has the minimum cost.}
      \State $\polar \gets \big\langle \big( \argmin_{X \in \profile} \enskip \Call{\cost}{X.\prop{Pattern}, X.\prop{Data}} \big).\prop{Data} \big\rangle$
      \While{$|\,\polar\,| < |\,\profile\,|$}
        \LeftComment{\hspace*{11.5pt}Pick the most dissimilar partition w.r.t. those in \polar.}
        \State{$T \gets \argmax_{Z \in \profile} \;\min_{X \in \polar}$\\\hspace*{42pt}$\big(\Call{\WithLCParam{\BestPattern}}{Z.\prop{Data} \cup X}\big).\prop{Cost}$}
        \State $\polar.\prop{Append}(T.\prop{Data})$
      \EndWhile
      \Return{$\polar$}
    \end{algfunction}
  }
  \caption{Ordering partitions by mutual dissimilarity}
  \label{algo:order-parts}
\end{wrapfigure}

Once we have an ordered set of partitions, $\langle \dataset_1,\ldots,\dataset_{|\profile|} \rangle$,
we request the user to provide the desired output for a randomly selected input from each partition in order.
Since PBE systems like \FlashFill are interactive, and start synthesizing programs right from the first example,
the user can inspect and skip over inputs for which the output is correctly predicted by the synthesized program.
After one cycle through all partitions, we restart from partition $\dataset_1$,
and request the user to provide the output for a new random string in each partition.

\begin{wrapfigure}{r}{0.55\textwidth}
  \vspace*{-8pt}\hspace*{-4pt}
  \includegraphics[width=1.025\linewidth]{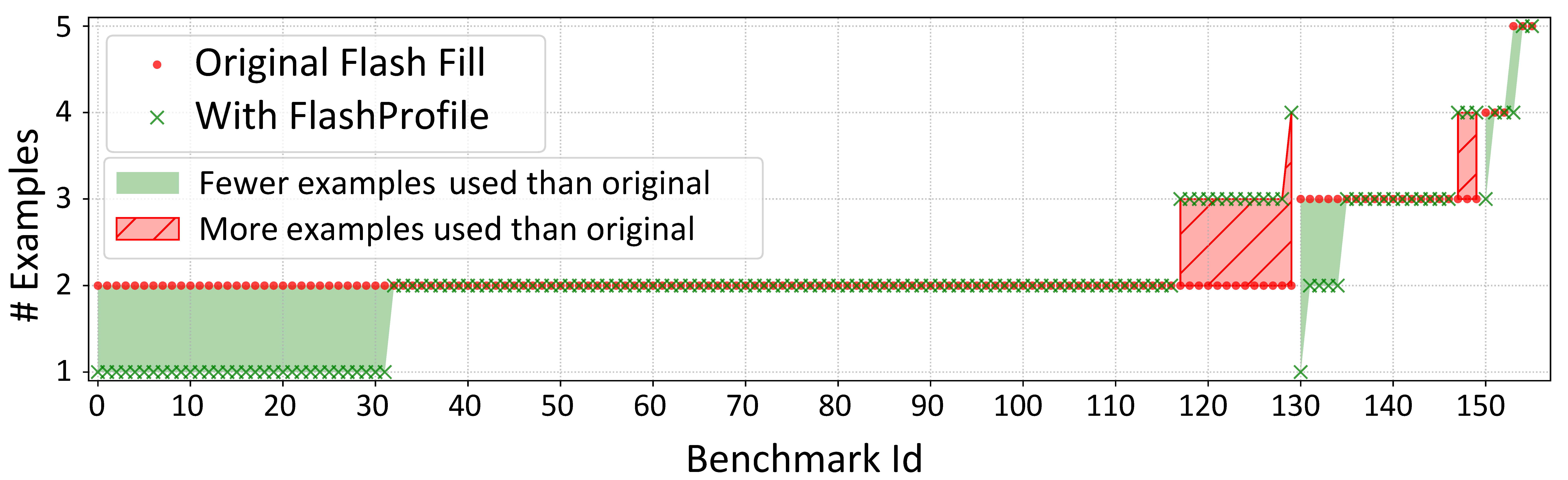}
  \caption{Examples needed with and without \FlashProfile\vspace*{5pt}}
  \label{fig:flash-fill-results}
\end{wrapfigure}

We evaluate the proposed interaction model over \numFFTotalCond \FlashFill benchmarks\footnoteNum{
~These benchmarks are a superset of the original set of \FlashFill~\citep{gulwani2011automating} benchmarks,
with many more real-world scenarios collected from customers using products powered by \PROSE~\citep{web.prose}.
} that require more than one example to learn the desired string-transformation program.
\Cref{fig:flash-fill-results} compares the number of examples required originally,
to that using our interaction model.
Seven cases that timeout due to the presence of extremely long strings have been omitted.

Over the remaining 156 cases, we observe that, \FlashFill
\begin{inlist}
    \item requires a single example per partition for \numFFRecallCond (=\,\numFFRecallCondPercent) cases, and
    \item uses the minimal set\footnoteNum{
          ~By \emph{minimal}, we mean that there is no smaller set of examples with which \FlashFill can synthesize the desired program.
          } of examples to synthesize the desired program for \numFFPrecisionCond (=\,\numFFPrecisionCondPercent) cases ---
          39 of which were improvements over \FlashFill.
\end{inlist}
Thus, \inliststyle{(1)} validates our hypothesis that our partitions indeed identify representative inputs,
and \inliststyle{(2)} further indicates that our interaction model is highly effective.
Selecting inputs from partitions ordered based on mutual syntactic dissimilarity
helps \FlashFill converge to the desired programs with fewer examples.
Note that, these results are based on the default set of atoms.
Designing custom atoms for string-transformation tasks,
based on \FlashFill's semantics is also an interesting direction.

Although the significant inputs scenario is similar to \emph{active learning},
which is well-studied in machine-learning literature~\citep{hannekeal},
typical active-learning methods require hundreds of labeled examples.
In contrast, PBE systems deal with very few examples~\citep{mayer2015user}.

\section{Related Work} \label{sec:related-work}

There has been a line of work on profiling various aspects of datasets ---
\citet{abedjan2015profiling} present a recent survey.
Traditional techniques for summarizing data target statistical profiles~\citep{cormode2012synopses},
such as sampling-based aggregations~\citep{haas1995sampling},
histograms~\citep{ioannidis2003history},
and wavelet-based summaries~\citep{karras2007haar}.
However, pattern-based profiling is relatively underexplored,
and is minimally or not supported by state-of-the-art data analysis tools~\citep{
  web.openrefine,web.ssdt,web.ataccama,web.wrangler}.

To our knowledge, no existing approach learns syntactic profiles over an extensible language
and allows refinement of generated profiles.
We present a novel dissimilarity measure which is the key to learning refinable profiles over arbitrary user-specified patterns.
Microsoft's SQL Server Data Tools (\SSDT)~\citep{web.ssdt} learns rich regular expressions but is neither extensible not comprehensive.
A dedicated profiling tool \Ataccama~\citep{web.ataccama} generates comprehensive profiles over a very small set of base patterns.
Google's \OpenRefine~\citep{web.openrefine} does not learn syntactic profiles,
but it allows clustering of strings using character-based similarity measures~\citep{gomaa2013survey}.
In \cref{sec:evaluation} we show that such measures do not capture syntactic similarity.
While \PottersWheel~\citep{raman2001potter} does not learn a complete profile,
it learns the most frequent data pattern over arbitrary user-defined \emph{domains}
that are similar to our atomic patterns.

\paragraph{Application-Specific Structure Learning}
There has been prior work on learning specific structural properties aimed at aiding data wrangling applications,
such as data transformations~\citep{raman2001potter,singh2016blinkfill},
information extraction~\citep{li2008regular}, and reformatting or text normalization~\citep{kini2015flashnormalize}.
However, these approaches make specific assumptions regarding the target application,
which do not necessarily hold when learning general purpose profiles.
Although profiles generated by \FlashProfile are primarily aimed at data understanding,
in \cref{sec:applications-in-pbe-systems} we show that they may aid PBE applications,
such as \FlashFill~\citep{gulwani2011automating} for data transformation.
\citet{bhattacharya2015automated} also utilize hierarchical clustering to group together sensors used in building automation based on their tags.
However, they use a fixed set of domain-specific features for tags and do not learn a pattern-based profile.

\paragraph{Grammar Induction}
Syntactic profiling is also related to the problem of learning regular expressions,
or more generally grammars from a given set of examples.
\citet{de2010grammatical} present a recent survey on this line of work.
Most of these techniques, such as \LStar~\citep{angluin1987learning} and \RPNI~\citep{oncina1992identifying},
assume availability of both positive and negative examples, or a membership oracle.
\citet{bastani2017synthesizing} show that these techniques are either too slow or do not generalize well
and propose an alternate strategy for learning grammars from positive examples.
When a large number of negative examples are available, genetic programming has also been shown to be
useful for learning regular expressions~\citep{svingen1998learning,bartoli2012automatic}.
Finally, \tool{LearnPADS}~\citep{fisher2008learnpads,zhu2012learnpads++} also generates a syntactic description,
but does not support refinement or user-specified patterns.

\paragraph{Program Synthesis}
Our techniques for sampling-based approximation and finding representative inputs
relate to prior work by \citet{raychev2016learning} on synthesizing programs from noisy data.
However, they assume a single target program and the availability of outputs for all inputs.
In contrast, we synthesize a disjunction of several programs,
each of which returns {\small \True} only on a specific partition of the inputs,
which is unknown a priori.

\FlashProfile's pattern learner uses the \PROSE library~\citep{web.prose},
which implements the \FlashMeta framework~\citep{polozov2015flashmeta}
for inductive program synthesis, specifically programming-by-examples (PBE)~\citep{lieberman2001your, gulwani2017program}.
PBE has been leveraged by recent works on automating repetitive text-processing tasks,
such as string transformation~\citep{gulwani2011automating,singh2016blinkfill},
extraction~\citep{le2014flashextract}, and format normalization~\citep{kini2015flashnormalize}.
However, unlike these applications, data profiling does not solicit any (output) examples from the user.
We demonstrate a novel application of a supervised synthesis technique to solve an unsupervised learning problem.
\section{Conclusion} \label{sec:conclusion}

\noindent
With increasing volume and variety of data,
we require powerful profiling techniques to enable end users to understand and analyse their data easily.
Existing techniques generate a single profile over pre-defined patterns,
which may be too coarse grained for a user's application.
We present a framework for learning syntactic profiles over user-defined patterns,
and also allow refinement of these profiles interactively.
Moreover, we show domain-specific approximations that allow end users to
control accuracy \emph{vs.} performance trade-off for large datasets,
and generate approximately correct profiles in realtime on consumer-grade hardware.
We instantiate our approach as \FlashProfile,
and present extensive evaluation on its accuracy and performance on real-life datasets.
We also show that syntactic profiles are not only useful for data understanding and manual data analysis tasks,
but can also help existing PBE systems.
\begin{acks}
  The lead author is thankful to the \PROSE team at Microsoft, especially to
  Vu Le, Danny Simmons, Ranvijay Kumar, and Abhishek Udupa,
  for their invaluable help and support.
  We also thank the anonymous reviewers for their constructive
  feedback on earlier versions of this paper.

  This research was supported in part by an internship at Microsoft,
  by the \grantsponsor{GS100000001}{National Science Foundation (NSF)}{http://dx.doi.org/10.13039/100000001}
  under Grant No.~\grantnum{GS100000143}{CCF-1527923},
  and by a Microsoft Research Ph.D. Fellowship.
  Any opinions, findings, and conclusions or recommendations expressed
  in this material are those of the author and do not necessarily
  reflect the views of the NSF or of the Microsoft Corporation.
\end{acks}

\bibliography{paper}


\begin{thebibliography}{54}


\ifx \showCODEN    \undefined \def \showCODEN     #1{\unskip}     \fi
\ifx \showDOI      \undefined \def \showDOI       #1{#1}\fi
\ifx \showISBNx    \undefined \def \showISBNx     #1{\unskip}     \fi
\ifx \showISBNxiii \undefined \def \showISBNxiii  #1{\unskip}     \fi
\ifx \showISSN     \undefined \def \showISSN      #1{\unskip}     \fi
\ifx \showLCCN     \undefined \def \showLCCN      #1{\unskip}     \fi
\ifx \shownote     \undefined \def \shownote      #1{#1}          \fi
\ifx \showarticletitle \undefined \def \showarticletitle #1{#1}   \fi
\ifx \showURL      \undefined \def \showURL       {\relax}        \fi
\providecommand\bibfield[2]{#2}
\providecommand\bibinfo[2]{#2}
\providecommand\natexlab[1]{#1}
\providecommand\showeprint[2][]{arXiv:#2}

\bibitem[\protect\citeauthoryear{Abedjan, Golab, and Naumann}{Abedjan
  et~al\mbox{.}}{2015}]%
        {abedjan2015profiling}
\bibfield{author}{\bibinfo{person}{Ziawasch Abedjan}, \bibinfo{person}{Lukasz
  Golab}, {and} \bibinfo{person}{Felix Naumann}.}
  \bibinfo{year}{2015}\natexlab{}.
\newblock \showarticletitle{Profiling Relational Data: A Survey}.
\newblock \bibinfo{journal}{\emph{{VLDB} J.}} \bibinfo{volume}{24},
  \bibinfo{number}{4} (\bibinfo{year}{2015}), \bibinfo{pages}{557--581}.
\newblock
\urldef\tempurl%
\url{https://doi.org/10.1007/s00778-015-0389-y}
\showDOI{\tempurl}


\bibitem[\protect\citeauthoryear{Angluin}{Angluin}{1987}]%
        {angluin1987learning}
\bibfield{author}{\bibinfo{person}{Dana Angluin}.}
  \bibinfo{year}{1987}\natexlab{}.
\newblock \showarticletitle{Learning Regular Sets from Queries and
  Counterexamples}.
\newblock \bibinfo{journal}{\emph{Inf. Comput.}} \bibinfo{volume}{75},
  \bibinfo{number}{2} (\bibinfo{year}{1987}), \bibinfo{pages}{87--106}.
\newblock
\urldef\tempurl%
\url{https://doi.org/10.1016/0890-5401(87)90052-6}
\showDOI{\tempurl}


\bibitem[\protect\citeauthoryear{Arthur and Vassilvitskii}{Arthur and
  Vassilvitskii}{2007}]%
        {arthur2007k}
\bibfield{author}{\bibinfo{person}{David Arthur} {and} \bibinfo{person}{Sergei
  Vassilvitskii}.} \bibinfo{year}{2007}\natexlab{}.
\newblock \showarticletitle{k-means++: The Advantages of Careful Seeding}. In
  \bibinfo{booktitle}{\emph{Proceedings of the Eighteenth Annual {ACM-SIAM}
  Symposium on Discrete Algorithms, {SODA} 2007, New Orleans, Louisiana, USA,
  January 7-9, 2007}}, \bibfield{editor}{\bibinfo{person}{Nikhil Bansal},
  \bibinfo{person}{Kirk Pruhs}, {and} \bibinfo{person}{Clifford Stein}} (Eds.).
  \bibinfo{publisher}{{SIAM}}, \bibinfo{pages}{1027--1035}.
\newblock
\urldef\tempurl%
\url{http://dl.acm.org/citation.cfm?id=1283383.1283494}
\showURL{%
\tempurl}


\bibitem[\protect\citeauthoryear{Ataccama}{Ataccama}{2017}]%
        {web.ataccama}
\bibfield{author}{\bibinfo{person}{Ataccama}.} \bibinfo{year}{2017}\natexlab{}.
\newblock \bibinfo{title}{{Ataccama One Platform}}.
\newblock
\newblock
\newblock
\shownote{\url{https://www.ataccama.com/}.}


\bibitem[\protect\citeauthoryear{Bartoli, Davanzo, Lorenzo, Mauri, Medvet, and
  Sorio}{Bartoli et~al\mbox{.}}{2012}]%
        {bartoli2012automatic}
\bibfield{author}{\bibinfo{person}{Alberto Bartoli}, \bibinfo{person}{Giorgio
  Davanzo}, \bibinfo{person}{Andrea~De Lorenzo}, \bibinfo{person}{Marco Mauri},
  \bibinfo{person}{Eric Medvet}, {and} \bibinfo{person}{Enrico Sorio}.}
  \bibinfo{year}{2012}\natexlab{}.
\newblock \showarticletitle{Automatic Generation of Regular Expressions from
  Examples with Genetic Programming}. In \bibinfo{booktitle}{\emph{Genetic and
  Evolutionary Computation Conference, {GECCO} '12, Philadelphia, PA, USA, July
  7-11, 2012, Companion Material Proceedings}},
  \bibfield{editor}{\bibinfo{person}{Terence Soule} {and}
  \bibinfo{person}{Jason~H. Moore}} (Eds.). \bibinfo{publisher}{{ACM}},
  \bibinfo{pages}{1477--1478}.
\newblock
\urldef\tempurl%
\url{https://doi.org/10.1145/2330784.2331000}
\showDOI{\tempurl}


\bibitem[\protect\citeauthoryear{Bastani, Sharma, Aiken, and Liang}{Bastani
  et~al\mbox{.}}{2017}]%
        {bastani2017synthesizing}
\bibfield{author}{\bibinfo{person}{Osbert Bastani}, \bibinfo{person}{Rahul
  Sharma}, \bibinfo{person}{Alex Aiken}, {and} \bibinfo{person}{Percy Liang}.}
  \bibinfo{year}{2017}\natexlab{}.
\newblock \showarticletitle{Synthesizing program input grammars}. In
  \bibinfo{booktitle}{\emph{Proceedings of the 38th {ACM} {SIGPLAN} Conference
  on Programming Language Design and Implementation, {PLDI} 2017, Barcelona,
  Spain, June 18-23, 2017}}, \bibfield{editor}{\bibinfo{person}{Albert Cohen}
  {and} \bibinfo{person}{Martin~T. Vechev}} (Eds.). \bibinfo{publisher}{{ACM}},
  \bibinfo{pages}{95--110}.
\newblock
\urldef\tempurl%
\url{https://doi.org/10.1145/3062341.3062349}
\showDOI{\tempurl}


\bibitem[\protect\citeauthoryear{Bhattacharya, Hong, Culler, Ortiz, Whitehouse,
  and Wu}{Bhattacharya et~al\mbox{.}}{2015}]%
        {bhattacharya2015automated}
\bibfield{author}{\bibinfo{person}{Arka~Aloke Bhattacharya},
  \bibinfo{person}{Dezhi Hong}, \bibinfo{person}{David~E. Culler},
  \bibinfo{person}{Jorge Ortiz}, \bibinfo{person}{Kamin Whitehouse}, {and}
  \bibinfo{person}{Eugene Wu}.} \bibinfo{year}{2015}\natexlab{}.
\newblock \showarticletitle{Automated Metadata Construction to Support Portable
  Building Applications}. In \bibinfo{booktitle}{\emph{Proceedings of the 2nd
  {ACM} International Conference on Embedded Systems for Energy-Efficient Built
  Environments, BuildSys 2015, Seoul, South Korea, November 4-5, 2015}},
  \bibfield{editor}{\bibinfo{person}{David Culler}, \bibinfo{person}{Yuvraj
  Agarwal}, {and} \bibinfo{person}{Rahul Mangharam}} (Eds.).
  \bibinfo{publisher}{{ACM}}, \bibinfo{pages}{3--12}.
\newblock
\urldef\tempurl%
\url{https://doi.org/10.1145/2821650.2821667}
\showDOI{\tempurl}


\bibitem[\protect\citeauthoryear{Bishop}{Bishop}{2016}]%
        {bishop2016pattern}
\bibfield{author}{\bibinfo{person}{Christopher~M. Bishop}.}
  \bibinfo{year}{2016}\natexlab{}.
\newblock \bibinfo{booktitle}{\emph{Pattern Recognition and Machine Learning}}.
\newblock \bibinfo{publisher}{Springer New York}.
\newblock
\showISBNx{978-1-493-93843-8}
\urldef\tempurl%
\url{http://www.worldcat.org/oclc/1005113608}
\showURL{%
\tempurl}


\bibitem[\protect\citeauthoryear{Breiman}{Breiman}{2001}]%
        {breiman2001random}
\bibfield{author}{\bibinfo{person}{Leo Breiman}.}
  \bibinfo{year}{2001}\natexlab{}.
\newblock \showarticletitle{Random Forests}.
\newblock \bibinfo{journal}{\emph{Machine Learning}} \bibinfo{volume}{45},
  \bibinfo{number}{1} (\bibinfo{year}{2001}), \bibinfo{pages}{5--32}.
\newblock
\urldef\tempurl%
\url{https://doi.org/10.1023/A:1010933404324}
\showDOI{\tempurl}


\bibitem[\protect\citeauthoryear{Cormode, Garofalakis, Haas, and
  Jermaine}{Cormode et~al\mbox{.}}{2012}]%
        {cormode2012synopses}
\bibfield{author}{\bibinfo{person}{Graham Cormode}, \bibinfo{person}{Minos~N.
  Garofalakis}, \bibinfo{person}{Peter~J. Haas}, {and} \bibinfo{person}{Chris
  Jermaine}.} \bibinfo{year}{2012}\natexlab{}.
\newblock \showarticletitle{Synopses for Massive Data: Samples, Histograms,
  Wavelets, Sketches}.
\newblock \bibinfo{journal}{\emph{Foundations and Trends in Databases}}
  \bibinfo{volume}{4}, \bibinfo{number}{1-3} (\bibinfo{year}{2012}),
  \bibinfo{pages}{1--294}.
\newblock
\urldef\tempurl%
\url{https://doi.org/10.1561/1900000004}
\showDOI{\tempurl}


\bibitem[\protect\citeauthoryear{De~la Higuera}{De~la Higuera}{2010}]%
        {de2010grammatical}
\bibfield{author}{\bibinfo{person}{Colin De~la Higuera}.}
  \bibinfo{year}{2010}\natexlab{}.
\newblock \bibinfo{booktitle}{\emph{Grammatical inference: learning automata
  and grammars}}.
\newblock \bibinfo{publisher}{Cambridge University Press}.
\newblock


\bibitem[\protect\citeauthoryear{Dong and Srivastava}{Dong and
  Srivastava}{2013}]%
        {dong2013big}
\bibfield{author}{\bibinfo{person}{Xin~Luna Dong} {and} \bibinfo{person}{Divesh
  Srivastava}.} \bibinfo{year}{2013}\natexlab{}.
\newblock \showarticletitle{Big Data Integration}.
\newblock \bibinfo{journal}{\emph{{PVLDB}}} \bibinfo{volume}{6},
  \bibinfo{number}{11} (\bibinfo{year}{2013}), \bibinfo{pages}{1188--1189}.
\newblock
\urldef\tempurl%
\url{http://www.vldb.org/pvldb/vol6/p1188-srivastava.pdf}
\showURL{%
\tempurl}


\bibitem[\protect\citeauthoryear{Ellis and Gulwani}{Ellis and Gulwani}{2017}]%
        {ellis2017learning}
\bibfield{author}{\bibinfo{person}{Kevin Ellis} {and} \bibinfo{person}{Sumit
  Gulwani}.} \bibinfo{year}{2017}\natexlab{}.
\newblock \showarticletitle{Learning to Learn Programs from Examples: Going
  Beyond Program Structure}. In \bibinfo{booktitle}{\emph{Proceedings of the
  Twenty-Sixth International Joint Conference on Artificial Intelligence,
  {IJCAI} 2017, Melbourne, Australia, August 19-25, 2017}},
  \bibfield{editor}{\bibinfo{person}{Carles Sierra}} (Ed.).
  \bibinfo{publisher}{ijcai.org}, \bibinfo{pages}{1638--1645}.
\newblock
\urldef\tempurl%
\url{https://doi.org/10.24963/ijcai.2017/227}
\showDOI{\tempurl}


\bibitem[\protect\citeauthoryear{Fisher, Walker, and Zhu}{Fisher
  et~al\mbox{.}}{2008}]%
        {fisher2008learnpads}
\bibfield{author}{\bibinfo{person}{Kathleen Fisher}, \bibinfo{person}{David
  Walker}, {and} \bibinfo{person}{Kenny~Qili Zhu}.}
  \bibinfo{year}{2008}\natexlab{}.
\newblock \showarticletitle{LearnPADS: automatic tool generation from ad hoc
  data}. In \bibinfo{booktitle}{\emph{Proceedings of the {ACM} {SIGMOD}
  International Conference on Management of Data, {SIGMOD} 2008, Vancouver, BC,
  Canada, June 10-12, 2008}},
  \bibfield{editor}{\bibinfo{person}{Jason~Tsong{-}Li Wang}} (Ed.).
  \bibinfo{publisher}{{ACM}}, \bibinfo{pages}{1299--1302}.
\newblock
\urldef\tempurl%
\url{https://doi.org/10.1145/1376616.1376759}
\showDOI{\tempurl}


\bibitem[\protect\citeauthoryear{Gomaa and Fahmy}{Gomaa and Fahmy}{2013}]%
        {gomaa2013survey}
\bibfield{author}{\bibinfo{person}{Wael~H Gomaa} {and} \bibinfo{person}{Aly~A
  Fahmy}.} \bibinfo{year}{2013}\natexlab{}.
\newblock \showarticletitle{A survey of text similarity approaches}.
\newblock \bibinfo{journal}{\emph{International Journal of Computer
  Applications}} \bibinfo{volume}{68}, \bibinfo{number}{13}
  (\bibinfo{date}{April} \bibinfo{year}{2013}), \bibinfo{pages}{13--18}.
\newblock
\urldef\tempurl%
\url{https://doi.org/10.5120/11638-7118}
\showDOI{\tempurl}


\bibitem[\protect\citeauthoryear{Google}{Google}{2017}]%
        {web.openrefine}
\bibfield{author}{\bibinfo{person}{Google}.} \bibinfo{year}{2017}\natexlab{}.
\newblock \bibinfo{title}{{OpenRefine}: A free, open source, powerful tool for
  working with messy data}.
\newblock
\newblock
\newblock
\shownote{\url{http://openrefine.org/}.}


\bibitem[\protect\citeauthoryear{Graham, Knuth, and Patashnik}{Graham
  et~al\mbox{.}}{1994}]%
        {graham1994concrete}
\bibfield{author}{\bibinfo{person}{Ronald~L. Graham},
  \bibinfo{person}{Donald~E. Knuth}, {and} \bibinfo{person}{Oren Patashnik}.}
  \bibinfo{year}{1994}\natexlab{}.
\newblock \bibinfo{booktitle}{\emph{Concrete Mathematics - A Foundation for
  Computer Science, 2nd Edition)}}.
\newblock \bibinfo{publisher}{Addison-Wesley}.
\newblock
\showISBNx{978-0-201-55802-9}
\urldef\tempurl%
\url{http://www.worldcat.org/oclc/992331503}
\showURL{%
\tempurl}


\bibitem[\protect\citeauthoryear{Gulwani}{Gulwani}{2011}]%
        {gulwani2011automating}
\bibfield{author}{\bibinfo{person}{Sumit Gulwani}.}
  \bibinfo{year}{2011}\natexlab{}.
\newblock \showarticletitle{Automating String Processing in Spreadsheets Using
  Input-Output Examples}. In \bibinfo{booktitle}{\emph{Proceedings of the 38th
  {ACM} {SIGPLAN-SIGACT} Symposium on Principles of Programming Languages,
  {POPL} 2011, Austin, TX, USA, January 26-28, 2011}}.
  \bibinfo{pages}{317--330}.
\newblock
\urldef\tempurl%
\url{https://doi.org/10.1145/1926385.1926423}
\showDOI{\tempurl}


\bibitem[\protect\citeauthoryear{Gulwani, Polozov, and Singh}{Gulwani
  et~al\mbox{.}}{2017}]%
        {gulwani2017program}
\bibfield{author}{\bibinfo{person}{Sumit Gulwani}, \bibinfo{person}{Oleksandr
  Polozov}, {and} \bibinfo{person}{Rishabh Singh}.}
  \bibinfo{year}{2017}\natexlab{}.
\newblock \showarticletitle{Program Synthesis}.
\newblock \bibinfo{journal}{\emph{Foundations and Trends in Programming
  Languages}} \bibinfo{volume}{4}, \bibinfo{number}{1-2}
  (\bibinfo{year}{2017}), \bibinfo{pages}{1--119}.
\newblock
\urldef\tempurl%
\url{https://doi.org/10.1561/2500000010}
\showDOI{\tempurl}


\bibitem[\protect\citeauthoryear{Haas, Naughton, Seshadri, and Stokes}{Haas
  et~al\mbox{.}}{1995}]%
        {haas1995sampling}
\bibfield{author}{\bibinfo{person}{Peter~J. Haas}, \bibinfo{person}{Jeffrey~F.
  Naughton}, \bibinfo{person}{S. Seshadri}, {and} \bibinfo{person}{Lynne
  Stokes}.} \bibinfo{year}{1995}\natexlab{}.
\newblock \showarticletitle{Sampling-Based Estimation of the Number of Distinct
  Values of an Attribute}. In \bibinfo{booktitle}{\emph{VLDB'95, Proceedings of
  21th International Conference on Very Large Data Bases, September 11-15,
  1995, Zurich, Switzerland.}}, \bibfield{editor}{\bibinfo{person}{Umeshwar
  Dayal}, \bibinfo{person}{Peter M.~D. Gray}, {and} \bibinfo{person}{Shojiro
  Nishio}} (Eds.). \bibinfo{publisher}{Morgan Kaufmann},
  \bibinfo{pages}{311--322}.
\newblock
\urldef\tempurl%
\url{http://www.vldb.org/conf/1995/P311.PDF}
\showURL{%
\tempurl}


\bibitem[\protect\citeauthoryear{Halkidi, Batistakis, and Vazirgiannis}{Halkidi
  et~al\mbox{.}}{2001}]%
        {halkidi2001clustering}
\bibfield{author}{\bibinfo{person}{Maria Halkidi}, \bibinfo{person}{Yannis
  Batistakis}, {and} \bibinfo{person}{Michalis Vazirgiannis}.}
  \bibinfo{year}{2001}\natexlab{}.
\newblock \showarticletitle{On Clustering Validation Techniques}.
\newblock \bibinfo{journal}{\emph{J. Intell. Inf. Syst.}} \bibinfo{volume}{17},
  \bibinfo{number}{2-3} (\bibinfo{year}{2001}), \bibinfo{pages}{107--145}.
\newblock
\urldef\tempurl%
\url{https://doi.org/10.1023/A:1012801612483}
\showDOI{\tempurl}


\bibitem[\protect\citeauthoryear{Hanneke}{Hanneke}{2014}]%
        {hannekeal}
\bibfield{author}{\bibinfo{person}{Steve Hanneke}.}
  \bibinfo{year}{2014}\natexlab{}.
\newblock \showarticletitle{Theory of Disagreement-Based Active Learning}.
\newblock \bibinfo{journal}{\emph{Found. Trends Mach. Learn.}}
  \bibinfo{volume}{7}, \bibinfo{number}{2-3} (\bibinfo{date}{June}
  \bibinfo{year}{2014}), \bibinfo{pages}{131--309}.
\newblock
\showISSN{1935-8237}
\urldef\tempurl%
\url{https://doi.org/10.1561/2200000037}
\showDOI{\tempurl}


\bibitem[\protect\citeauthoryear{Ioannidis}{Ioannidis}{2003}]%
        {ioannidis2003history}
\bibfield{author}{\bibinfo{person}{Yannis~E. Ioannidis}.}
  \bibinfo{year}{2003}\natexlab{}.
\newblock \showarticletitle{The History of Histograms (abridged)}. In
  \bibinfo{booktitle}{\emph{{VLDB} 2003, Proceedings of 29th International
  Conference on Very Large Data Bases, September 9-12, 2003, Berlin, Germany}},
  \bibfield{editor}{\bibinfo{person}{Johann~Christoph Freytag},
  \bibinfo{person}{Peter~C. Lockemann}, \bibinfo{person}{Serge Abiteboul},
  \bibinfo{person}{Michael~J. Carey}, \bibinfo{person}{Patricia~G. Selinger},
  {and} \bibinfo{person}{Andreas Heuer}} (Eds.). \bibinfo{publisher}{Morgan
  Kaufmann}, \bibinfo{pages}{19--30}.
\newblock
\urldef\tempurl%
\url{http://www.vldb.org/conf/2003/papers/S02P01.pdf}
\showURL{%
\tempurl}


\bibitem[\protect\citeauthoryear{Jain, Murty, and Flynn}{Jain
  et~al\mbox{.}}{1999}]%
        {jain1999data}
\bibfield{author}{\bibinfo{person}{Anil~K. Jain}, \bibinfo{person}{M.~Narasimha
  Murty}, {and} \bibinfo{person}{Patrick~J. Flynn}.}
  \bibinfo{year}{1999}\natexlab{}.
\newblock \showarticletitle{Data Clustering: {A} Review}.
\newblock \bibinfo{journal}{\emph{{ACM} Comput. Surv.}} \bibinfo{volume}{31},
  \bibinfo{number}{3} (\bibinfo{year}{1999}), \bibinfo{pages}{264--323}.
\newblock
\urldef\tempurl%
\url{https://doi.org/10.1145/331499.331504}
\showDOI{\tempurl}


\bibitem[\protect\citeauthoryear{Karras and Mamoulis}{Karras and
  Mamoulis}{2007}]%
        {karras2007haar}
\bibfield{author}{\bibinfo{person}{Panagiotis Karras} {and}
  \bibinfo{person}{Nikos Mamoulis}.} \bibinfo{year}{2007}\natexlab{}.
\newblock \showarticletitle{The Haar+ Tree: {A} Refined Synopsis Data
  Structure}. In \bibinfo{booktitle}{\emph{Proceedings of the 23rd
  International Conference on Data Engineering, {ICDE} 2007, The Marmara Hotel,
  Istanbul, Turkey, April 15-20, 2007}},
  \bibfield{editor}{\bibinfo{person}{Rada Chirkova}, \bibinfo{person}{Asuman
  Dogac}, \bibinfo{person}{M.~Tamer {\"{O}}zsu}, {and}
  \bibinfo{person}{Timos~K. Sellis}} (Eds.). \bibinfo{publisher}{{IEEE}
  Computer Society}, \bibinfo{pages}{436--445}.
\newblock
\urldef\tempurl%
\url{https://doi.org/10.1109/ICDE.2007.367889}
\showDOI{\tempurl}


\bibitem[\protect\citeauthoryear{Kini and Gulwani}{Kini and Gulwani}{2015}]%
        {kini2015flashnormalize}
\bibfield{author}{\bibinfo{person}{Dileep Kini} {and} \bibinfo{person}{Sumit
  Gulwani}.} \bibinfo{year}{2015}\natexlab{}.
\newblock \showarticletitle{FlashNormalize: Programming by Examples for Text
  Normalization}. In \bibinfo{booktitle}{\emph{Proceedings of the Twenty-Fourth
  International Joint Conference on Artificial Intelligence, {IJCAI} 2015,
  Buenos Aires, Argentina, July 25-31, 2015}},
  \bibfield{editor}{\bibinfo{person}{Qiang Yang} {and} \bibinfo{person}{Michael
  Wooldridge}} (Eds.). \bibinfo{publisher}{{AAAI} Press},
  \bibinfo{pages}{776--783}.
\newblock
\urldef\tempurl%
\url{http://ijcai.org/Abstract/15/115}
\showURL{%
\tempurl}


\bibitem[\protect\citeauthoryear{Lau}{Lau}{2009}]%
        {lau2009programming}
\bibfield{author}{\bibinfo{person}{Tessa Lau}.}
  \bibinfo{year}{2009}\natexlab{}.
\newblock \showarticletitle{Why Programming-By-Demonstration Systems Fail:
  Lessons Learned for Usable {AI}}.
\newblock \bibinfo{journal}{\emph{{AI} Magazine}} \bibinfo{volume}{30},
  \bibinfo{number}{4} (\bibinfo{year}{2009}), \bibinfo{pages}{65--67}.
\newblock
\urldef\tempurl%
\url{https://doi.org/10.1609/aimag.v30i4.2262}
\showDOI{\tempurl}


\bibitem[\protect\citeauthoryear{Le and Gulwani}{Le and Gulwani}{2014}]%
        {le2014flashextract}
\bibfield{author}{\bibinfo{person}{Vu Le} {and} \bibinfo{person}{Sumit
  Gulwani}.} \bibinfo{year}{2014}\natexlab{}.
\newblock \showarticletitle{FlashExtract: a framework for data extraction by
  examples}. In \bibinfo{booktitle}{\emph{{ACM} {SIGPLAN} Conference on
  Programming Language Design and Implementation, {PLDI} '14, Edinburgh, United
  Kingdom - June 09 - 11, 2014}}, \bibfield{editor}{\bibinfo{person}{Michael
  F.~P. O'Boyle} {and} \bibinfo{person}{Keshav Pingali}} (Eds.).
  \bibinfo{publisher}{{ACM}}, \bibinfo{pages}{542--553}.
\newblock
\urldef\tempurl%
\url{https://doi.org/10.1145/2594291.2594333}
\showDOI{\tempurl}


\bibitem[\protect\citeauthoryear{Levenshtein}{Levenshtein}{1966}]%
        {levenshtein1966binary}
\bibfield{author}{\bibinfo{person}{Vladimir~I Levenshtein}.}
  \bibinfo{year}{1966}\natexlab{}.
\newblock \showarticletitle{Binary Codes Capable of Correcting Deletions,
  Insertions, and Reversals}. In \bibinfo{booktitle}{\emph{Soviet Physics
  Doklady}}, Vol.~\bibinfo{volume}{10}. \bibinfo{pages}{707--710}.
\newblock
\urldef\tempurl%
\url{http://adsabs.harvard.edu/abs/1966SPhD...10..707L}
\showURL{%
\tempurl}


\bibitem[\protect\citeauthoryear{Li, Krishnamurthy, Raghavan, Vaithyanathan,
  and Jagadish}{Li et~al\mbox{.}}{2008}]%
        {li2008regular}
\bibfield{author}{\bibinfo{person}{Yunyao Li}, \bibinfo{person}{Rajasekar
  Krishnamurthy}, \bibinfo{person}{Sriram Raghavan},
  \bibinfo{person}{Shivakumar Vaithyanathan}, {and} \bibinfo{person}{H.~V.
  Jagadish}.} \bibinfo{year}{2008}\natexlab{}.
\newblock \showarticletitle{Regular Expression Learning for Information
  Extraction}. In \bibinfo{booktitle}{\emph{2008 Conference on Empirical
  Methods in Natural Language Processing, {EMNLP} 2008, Proceedings of the
  Conference, 25-27 October 2008, Honolulu, Hawaii, USA, {A} meeting of SIGDAT,
  a Special Interest Group of the {ACL}}}. \bibinfo{publisher}{{ACL}},
  \bibinfo{pages}{21--30}.
\newblock
\urldef\tempurl%
\url{http://www.aclweb.org/anthology/D08-1003}
\showURL{%
\tempurl}


\bibitem[\protect\citeauthoryear{Lieberman}{Lieberman}{2001}]%
        {lieberman2001your}
\bibfield{author}{\bibinfo{person}{Henry Lieberman}.}
  \bibinfo{year}{2001}\natexlab{}.
\newblock \bibinfo{booktitle}{\emph{Your wish is my command: Programming by
  example}}.
\newblock \bibinfo{publisher}{Morgan Kaufmann}.
\newblock


\bibitem[\protect\citeauthoryear{Lohr}{Lohr}{2014}]%
        {lohr2014big}
\bibfield{author}{\bibinfo{person}{Steve Lohr}.}
  \bibinfo{year}{2014}\natexlab{}.
\newblock \showarticletitle{For Big-Data Scientists, `Janitor Work' Is Key
  Hurdle to Insights}.
\newblock \bibinfo{journal}{\emph{New York Times}}  \bibinfo{volume}{17}
  (\bibinfo{year}{2014}).
\newblock
\urldef\tempurl%
\url{https://www.nytimes.com/2014/08/18/technology/for-big-data-scientists-hurdle-to-insights-is-janitor-work.html}
\showURL{%
\tempurl}


\bibitem[\protect\citeauthoryear{MacQueen et~al\mbox{.}}{MacQueen
  et~al\mbox{.}}{1967}]%
        {macqueen1967some}
\bibfield{author}{\bibinfo{person}{James MacQueen} {et~al\mbox{.}}}
  \bibinfo{year}{1967}\natexlab{}.
\newblock \showarticletitle{Some methods for classification and analysis of
  multivariate observations}. In \bibinfo{booktitle}{\emph{Proceedings of the
  fifth Berkeley symposium on mathematical statistics and probability}},
  Vol.~\bibinfo{volume}{1}. Oakland, CA, USA., \bibinfo{pages}{281--297}.
\newblock


\bibitem[\protect\citeauthoryear{Manning, Raghavan, and Sch{\"{u}}tze}{Manning
  et~al\mbox{.}}{2008}]%
        {manningir}
\bibfield{author}{\bibinfo{person}{Christopher~D. Manning},
  \bibinfo{person}{Prabhakar Raghavan}, {and} \bibinfo{person}{Hinrich
  Sch{\"{u}}tze}.} \bibinfo{year}{2008}\natexlab{}.
\newblock \bibinfo{booktitle}{\emph{Introduction to information retrieval}}.
\newblock \bibinfo{publisher}{Cambridge University Press}.
\newblock
\showISBNx{978-0-521-86571-5}


\bibitem[\protect\citeauthoryear{Maydanchik}{Maydanchik}{2007}]%
        {maydanchik2007data}
\bibfield{author}{\bibinfo{person}{Arkady Maydanchik}.}
  \bibinfo{year}{2007}\natexlab{}.
\newblock \bibinfo{booktitle}{\emph{Data Quality Assessment}}.
\newblock \bibinfo{publisher}{Technics Publications}.
\newblock
\urldef\tempurl%
\url{https://technicspub.com/data-quality-assessment/}
\showURL{%
\tempurl}


\bibitem[\protect\citeauthoryear{Mayer, Soares, Grechkin, Le, Marron, Polozov,
  Singh, Zorn, and Gulwani}{Mayer et~al\mbox{.}}{2015}]%
        {mayer2015user}
\bibfield{author}{\bibinfo{person}{Mika{\"{e}}l Mayer},
  \bibinfo{person}{Gustavo Soares}, \bibinfo{person}{Maxim Grechkin},
  \bibinfo{person}{Vu Le}, \bibinfo{person}{Mark Marron},
  \bibinfo{person}{Oleksandr Polozov}, \bibinfo{person}{Rishabh Singh},
  \bibinfo{person}{Benjamin~G. Zorn}, {and} \bibinfo{person}{Sumit Gulwani}.}
  \bibinfo{year}{2015}\natexlab{}.
\newblock \showarticletitle{User Interaction Models for Disambiguation in
  Programming by Example}. In \bibinfo{booktitle}{\emph{Proceedings of the 28th
  Annual {ACM} Symposium on User Interface Software {\&} Technology, {UIST}
  2015, Charlotte, NC, USA, November 8-11, 2015}},
  \bibfield{editor}{\bibinfo{person}{Celine Latulipe}, \bibinfo{person}{Bjoern
  Hartmann}, {and} \bibinfo{person}{Tovi Grossman}} (Eds.).
  \bibinfo{publisher}{{ACM}}, \bibinfo{pages}{291--301}.
\newblock
\urldef\tempurl%
\url{https://doi.org/10.1145/2807442.2807459}
\showDOI{\tempurl}


\bibitem[\protect\citeauthoryear{Microsoft}{Microsoft}{2017a}]%
        {web.azure-ml-talk}
\bibfield{author}{\bibinfo{person}{Microsoft}.}
  \bibinfo{year}{2017}\natexlab{a}.
\newblock \bibinfo{title}{Azure Machine Learning By-Example Data Transform}.
\newblock
\newblock
\newblock
\shownote{\url{https://www.youtube.com/watch?v=9KG0Sc2B2KI}.}


\bibitem[\protect\citeauthoryear{Microsoft}{Microsoft}{2017b}]%
        {web.azure-ml}
\bibfield{author}{\bibinfo{person}{Microsoft}.}
  \bibinfo{year}{2017}\natexlab{b}.
\newblock \bibinfo{title}{Data Transformations "By Example" in the Azure ML
  Workbench}.
\newblock
\newblock
\newblock
\shownote{\url{https://blogs.technet.microsoft.com/machinelearning/2017/09/25/by-example-transformations-in-the-azure-machine-learning-workbench/}.}


\bibitem[\protect\citeauthoryear{Microsoft}{Microsoft}{2017c}]%
        {web.ssdt}
\bibfield{author}{\bibinfo{person}{Microsoft}.}
  \bibinfo{year}{2017}\natexlab{c}.
\newblock \bibinfo{title}{{Microsoft SQL Server Data Tools (SSDT)}}.
\newblock
\newblock
\newblock
\shownote{\url{https://docs.microsoft.com/en-gb/sql/ssdt}.}


\bibitem[\protect\citeauthoryear{Microsoft}{Microsoft}{2017d}]%
        {web.prose}
\bibfield{author}{\bibinfo{person}{Microsoft}.}
  \bibinfo{year}{2017}\natexlab{d}.
\newblock \bibinfo{title}{{Program Synthesis using Examples SDK}}.
\newblock
\newblock
\newblock
\shownote{\url{https://microsoft.github.io/prose/}.}


\bibitem[\protect\citeauthoryear{Oncina and Garc{\'\i}a}{Oncina and
  Garc{\'\i}a}{1992}]%
        {oncina1992identifying}
\bibfield{author}{\bibinfo{person}{Jos{\'e} Oncina} {and}
  \bibinfo{person}{Pedro Garc{\'\i}a}.} \bibinfo{year}{1992}\natexlab{}.
\newblock \showarticletitle{Identifying regular languages in polynomial time}.
\newblock \bibinfo{journal}{\emph{Advances in Structural and Syntactic Pattern
  Recognition}} \bibinfo{volume}{5}, \bibinfo{number}{99-108}
  (\bibinfo{year}{1992}), \bibinfo{pages}{15--20}.
\newblock


\bibitem[\protect\citeauthoryear{Polozov and Gulwani}{Polozov and
  Gulwani}{2015}]%
        {polozov2015flashmeta}
\bibfield{author}{\bibinfo{person}{Oleksandr Polozov} {and}
  \bibinfo{person}{Sumit Gulwani}.} \bibinfo{year}{2015}\natexlab{}.
\newblock \showarticletitle{FlashMeta: A Framework for Inductive Program
  Synthesis}. In \bibinfo{booktitle}{\emph{Proceedings of the 2015 {ACM}
  {SIGPLAN} International Conference on Object-Oriented Programming, Systems,
  Languages, and Applications, {OOPSLA} 2015, part of {SPLASH} 2015,
  Pittsburgh, PA, USA, October 25-30, 2015}},
  \bibfield{editor}{\bibinfo{person}{Jonathan Aldrich} {and}
  \bibinfo{person}{Patrick Eugster}} (Eds.). \bibinfo{publisher}{{ACM}},
  \bibinfo{pages}{107--126}.
\newblock
\urldef\tempurl%
\url{https://doi.org/10.1145/2814270.2814310}
\showDOI{\tempurl}


\bibitem[\protect\citeauthoryear{Raman and Hellerstein}{Raman and
  Hellerstein}{2001}]%
        {raman2001potter}
\bibfield{author}{\bibinfo{person}{Vijayshankar Raman} {and}
  \bibinfo{person}{Joseph~M. Hellerstein}.} \bibinfo{year}{2001}\natexlab{}.
\newblock \showarticletitle{Potter's Wheel: An Interactive Data Cleaning
  System}. In \bibinfo{booktitle}{\emph{{VLDB} 2001, Proceedings of 27th
  International Conference on Very Large Data Bases, September 11-14, 2001,
  Roma, Italy}}, \bibfield{editor}{\bibinfo{person}{Peter M.~G. Apers},
  \bibinfo{person}{Paolo Atzeni}, \bibinfo{person}{Stefano Ceri},
  \bibinfo{person}{Stefano Paraboschi}, \bibinfo{person}{Kotagiri
  Ramamohanarao}, {and} \bibinfo{person}{Richard~T. Snodgrass}} (Eds.).
  \bibinfo{publisher}{Morgan Kaufmann}, \bibinfo{pages}{381--390}.
\newblock
\urldef\tempurl%
\url{http://www.vldb.org/conf/2001/P381.pdf}
\showURL{%
\tempurl}


\bibitem[\protect\citeauthoryear{Raychev, Bielik, Vechev, and Krause}{Raychev
  et~al\mbox{.}}{2016}]%
        {raychev2016learning}
\bibfield{author}{\bibinfo{person}{Veselin Raychev}, \bibinfo{person}{Pavol
  Bielik}, \bibinfo{person}{Martin~T. Vechev}, {and} \bibinfo{person}{Andreas
  Krause}.} \bibinfo{year}{2016}\natexlab{}.
\newblock \showarticletitle{Learning programs from noisy data}. In
  \bibinfo{booktitle}{\emph{Proceedings of the 43rd Annual {ACM}
  {SIGPLAN-SIGACT} Symposium on Principles of Programming Languages, {POPL}
  2016, St. Petersburg, FL, USA, January 20 - 22, 2016}},
  \bibfield{editor}{\bibinfo{person}{Rastislav Bod{\'{\i}}k} {and}
  \bibinfo{person}{Rupak Majumdar}} (Eds.). \bibinfo{publisher}{{ACM}},
  \bibinfo{pages}{761--774}.
\newblock
\urldef\tempurl%
\url{https://doi.org/10.1145/2837614.2837671}
\showDOI{\tempurl}


\bibitem[\protect\citeauthoryear{Singh}{Singh}{2016}]%
        {singh2016blinkfill}
\bibfield{author}{\bibinfo{person}{Rishabh Singh}.}
  \bibinfo{year}{2016}\natexlab{}.
\newblock \showarticletitle{BlinkFill: Semi-supervised Programming By Example
  for Syntactic String Transformations}.
\newblock \bibinfo{journal}{\emph{{PVLDB}}} \bibinfo{volume}{9},
  \bibinfo{number}{10} (\bibinfo{year}{2016}), \bibinfo{pages}{816--827}.
\newblock
\urldef\tempurl%
\url{http://www.vldb.org/pvldb/vol9/p816-singh.pdf}
\showURL{%
\tempurl}


\bibitem[\protect\citeauthoryear{Solar{-}Lezama, Tancau, Bod{\'{\i}}k, Seshia,
  and Saraswat}{Solar{-}Lezama et~al\mbox{.}}{2006}]%
        {solar-lezama2006combinatorial}
\bibfield{author}{\bibinfo{person}{Armando Solar{-}Lezama},
  \bibinfo{person}{Liviu Tancau}, \bibinfo{person}{Rastislav Bod{\'{\i}}k},
  \bibinfo{person}{Sanjit~A. Seshia}, {and} \bibinfo{person}{Vijay~A.
  Saraswat}.} \bibinfo{year}{2006}\natexlab{}.
\newblock \showarticletitle{Combinatorial sketching for finite programs}. In
  \bibinfo{booktitle}{\emph{Proceedings of the 12th International Conference on
  Architectural Support for Programming Languages and Operating Systems,
  {ASPLOS} 2006, San Jose, CA, USA, October 21-25, 2006}},
  \bibfield{editor}{\bibinfo{person}{John~Paul Shen} {and}
  \bibinfo{person}{Margaret Martonosi}} (Eds.). \bibinfo{publisher}{{ACM}},
  \bibinfo{pages}{404--415}.
\newblock
\urldef\tempurl%
\url{https://doi.org/10.1145/1168857.1168907}
\showDOI{\tempurl}


\bibitem[\protect\citeauthoryear{S{\o}rensen}{S{\o}rensen}{1948}]%
        {sorensen1948method}
\bibfield{author}{\bibinfo{person}{Thorvald S{\o}rensen}.}
  \bibinfo{year}{1948}\natexlab{}.
\newblock \showarticletitle{A method of establishing groups of equal amplitude
  in plant sociology based on similarity of species and its application to
  analyses of the vegetation on Danish commons}.
\newblock \bibinfo{journal}{\emph{Biol. Skr.}}  \bibinfo{volume}{5}
  (\bibinfo{year}{1948}), \bibinfo{pages}{1--34}.
\newblock


\bibitem[\protect\citeauthoryear{Svingen}{Svingen}{1998}]%
        {svingen1998learning}
\bibfield{author}{\bibinfo{person}{Borge Svingen}.}
  \bibinfo{year}{1998}\natexlab{}.
\newblock \showarticletitle{Learning Regular Languages using Genetic
  Programming}.
\newblock \bibinfo{journal}{\emph{Proc. Genetic Programming}}
  (\bibinfo{year}{1998}), \bibinfo{pages}{374--376}.
\newblock


\bibitem[\protect\citeauthoryear{Tikhonov}{Tikhonov}{1963}]%
        {tikhonov1963solution}
\bibfield{author}{\bibinfo{person}{Andrei~N Tikhonov}.}
  \bibinfo{year}{1963}\natexlab{}.
\newblock \showarticletitle{Solution of Incorrectly Formulated Problems and the
  Regularization Method}. In \bibinfo{booktitle}{\emph{Dokl. Akad. Nauk.}},
  Vol.~\bibinfo{volume}{151}. \bibinfo{pages}{1035--1038}.
\newblock


\bibitem[\protect\citeauthoryear{Trifacta}{Trifacta}{2017}]%
        {web.wrangler}
\bibfield{author}{\bibinfo{person}{Trifacta}.} \bibinfo{year}{2017}\natexlab{}.
\newblock \bibinfo{title}{{Trifacta Wrangler}}.
\newblock
\newblock
\newblock
\shownote{\url{https://www.trifacta.com/products/wrangler/}.}


\bibitem[\protect\citeauthoryear{Winkler}{Winkler}{1999}]%
        {winkler1999state}
\bibfield{author}{\bibinfo{person}{William~E Winkler}.}
  \bibinfo{year}{1999}\natexlab{}.
\newblock \showarticletitle{The State of Record Linkage and Current Research
  Problems}.
\newblock
\urldef\tempurl%
\url{http://citeseerx.ist.psu.edu/viewdoc/summary?doi=10.1.1.39.4336}
\showURL{%
\tempurl}


\bibitem[\protect\citeauthoryear{Witten, Frank, Hall, and Pal}{Witten
  et~al\mbox{.}}{2017}]%
        {witten2016data}
\bibfield{author}{\bibinfo{person}{Ian~H Witten}, \bibinfo{person}{Eibe Frank},
  \bibinfo{person}{Mark~A Hall}, {and} \bibinfo{person}{Christopher~J Pal}.}
  \bibinfo{year}{2017}\natexlab{}.
\newblock \bibinfo{booktitle}{\emph{Data Mining: Practical Machine Learning
  Tools and Techniques, 4th Edition}}.
\newblock \bibinfo{publisher}{Elsevier Science \& Technology}.
\newblock
\showISBNx{978-0-128-04291-5}
\urldef\tempurl%
\url{http://www.worldcat.org/oclc/1007085077}
\showURL{%
\tempurl}


\bibitem[\protect\citeauthoryear{Xu and {Wunsch II}}{Xu and {Wunsch
  II}}{2005}]%
        {xu2005survey}
\bibfield{author}{\bibinfo{person}{Rui Xu} {and} \bibinfo{person}{Donald~C.
  {Wunsch II}}.} \bibinfo{year}{2005}\natexlab{}.
\newblock \showarticletitle{Survey of Clustering Algorithms}.
\newblock \bibinfo{journal}{\emph{{IEEE} Trans. Neural Networks}}
  \bibinfo{volume}{16}, \bibinfo{number}{3} (\bibinfo{year}{2005}),
  \bibinfo{pages}{645--678}.
\newblock
\urldef\tempurl%
\url{https://doi.org/10.1109/TNN.2005.845141}
\showDOI{\tempurl}


\bibitem[\protect\citeauthoryear{Zhu, Fisher, and Walker}{Zhu
  et~al\mbox{.}}{2012}]%
        {zhu2012learnpads++}
\bibfield{author}{\bibinfo{person}{Kenny~Qili Zhu}, \bibinfo{person}{Kathleen
  Fisher}, {and} \bibinfo{person}{David Walker}.}
  \bibinfo{year}{2012}\natexlab{}.
\newblock \showarticletitle{LearnPADS++ : Incremental Inference of Ad Hoc Data
  Formats}. In \bibinfo{booktitle}{\emph{Practical Aspects of Declarative
  Languages - 14th International Symposium, {PADL} 2012, Philadelphia, PA, USA,
  January 23-24, 2012. Proceedings}} \emph{(\bibinfo{series}{Lecture Notes in
  Computer Science})}, \bibfield{editor}{\bibinfo{person}{Claudio~V. Russo}
  {and} \bibinfo{person}{Neng{-}Fa Zhou}} (Eds.), Vol.~\bibinfo{volume}{7149}.
  \bibinfo{publisher}{Springer}, \bibinfo{pages}{168--182}.
\newblock
\urldef\tempurl%
\url{https://doi.org/10.1007/978-3-642-27694-1\_13}
\showDOI{\tempurl}


\end{thebibliography}

\end{document}